\documentclass[11pt]{article}

\usepackage[margin=1in]{geometry}
\usepackage{times}
\usepackage{amsmath,amssymb,amsthm}
\usepackage{array}
\usepackage{booktabs}
\usepackage{longtable}
\usepackage{enumitem}
\usepackage{float}
\usepackage{graphicx}
\usepackage{microtype}
\usepackage[numbers,sort&compress]{natbib}
\usepackage[hidelinks]{hyperref}
\usepackage{url}
\hypersetup{
  pdftitle={A Geometric Phase Boundary for Volume-Sampled Linear Readouts},
  pdfauthor={Kihun Rhee},
  pdfsubject={Fixed-design attainability for volume-sampled least squares},
  pdfkeywords={volume sampling, least squares, randomized numerical linear algebra, covariance}
}

\newtheorem{theorem}{Theorem}
\newtheorem{proposition}[theorem]{Proposition}
\newtheorem{derivedproposition}{Proposition}[section]
\newtheorem{corollary}[theorem]{Corollary}
\newtheorem{lemma}[theorem]{Lemma}
\theoremstyle{remark}
\newtheorem{remark}[theorem]{Remark}

\DeclareMathOperator{\tr}{tr}
\newcommand{\R}{\mathbb{R}}
\newcommand{\E}{\mathbb{E}}
\newcommand{\PP}{\mathbb{P}}
\newcommand{\1}{\mathbf{1}}

\newcommand{\lossstar}{L^*}

\newcolumntype{L}[1]{>{\raggedright\arraybackslash}p{#1}}

\title{A Geometric Phase Boundary\\
for Volume-Sampled Linear Readouts}
\author{Kihun Rhee\\
\texttt{rheekh00@snu.ac.kr}}
\date{}

\begin{document}
\raggedbottom
\maketitle

\begin{abstract}
Global sharpness of a sampling bound does not determine whether the bound is
attainable on a particular fixed design. We study ordinary fixed-size volume
sampling followed by selected unweighted least squares, with the feature pool
and response fixed; subset selection is the only randomness. We first prove
a globally sharp all-budget Loewner envelope for centered,
full-Gram-whitened coefficient covariance. We then characterize the
fixed-design question left open by global sharpness. On the positive-loss,
no-coloop strict-interior domain, a feature-only geometric margin is positive
if and only if every compatible residual has strict covariance slack at every
strict-interior budget, whereas
zero margin holds if and only if one compatible residual reaches the Loewner
ceiling in at least one coefficient direction at every strict-interior budget.
For real whitened designs without coloops, the phase
sign is equivalently determined by a pairwise Naimark-complement minor test,
which also yields an explicit lower slack certificate. Residual
augmentation exposes the response-aware contraction, and a critical
equal-leverage specialization identifies an explicit geometric boundary. Any
verified positive lower bound on the margin therefore yields a conservative
certificate for strictness and for the subset-refit variance term in
fixed-query squared loss. Together, these results give a design-specific
phase characterization for this finite-pool randomized linear-readout
primitive.
\end{abstract}

\section{Introduction}
\label{sec:introduction}

A bound can be globally sharp because some designs attain its coefficient,
yet remain uniformly unattainable on another fixed design.  For
subset-refit covariance, class-level sharpness therefore leaves an
instance-level question unresolved: can any response compatible with the
current feature pool realize the worst case?  We show that two pools sharing
the same normalized sharp envelope coefficient can occupy opposite fixed-design
attainability phases.

We answer this question for ordinary indexed fixed-size volume sampling
followed by selected unweighted least squares (OLS).  The feature pool and
response are deterministic; only the subset draw is random.  This primitive
isolates one controlled source of uncertainty when a linear readout is
repeatedly refit from subsets of a fixed representation pool.  We study the
centered, full-Gram-whitened coefficient covariance, which controls every
coefficient contrast before any scalar trace is taken.

Ordinary volume sampling already has determinant identities, selected-OLS
unbiasedness, and inverse-Gram moments
\citep{derezinski2017unbiased,derezinski2018reverse}.  For arbitrary fixed
responses, the closest fixed-response results give exact full-pool loss and
prediction-operator second-moment identities at the rank-size endpoint.
These results also establish all-size selected-OLS unbiasedness and
inverse/pseudoinverse moments, as well as all-size estimator covariance under
a separate random white-noise response model.  They do not supply an
all-budget matrix benchmark for the present coefficient-covariance target or
its fixed-design equality classification.  Class-level coefficient
sharpness does not say whether the ceiling is attainable on the pool at hand,
and contraction for one realized residual does not rule out spectral contact
by another.  Thus a class-level worst-case coefficient may be mathematically
necessary yet systematically pessimistic for a pool being repeatedly
subset-refit.  What feature geometry separates pools that are
strict for every compatible residual from pools admitting one common residual
that makes spectral contact at all strict-interior budgets?

Our answer is a feature-only margin $\nu_A$.  Here is the complete map before
the formal development.  Whiten the pool as $A=X(X^\top X)^{-1/2}$, write
$a_i^\top$ for its rows, and let $z$ range over the unit sphere of
compatible residual directions, $\mathcal Z_A=\{z\in\ker(A^\top):
\lVert z\rVert_2=1\}$.  Define
\[
 R_A(z)=\sum_{i=1}^m
 \bigl(1-\lVert a_i\rVert_2^2-z_i^2\bigr)a_i a_i^\top,
 \qquad
 \nu_A=\min_{z\in\mathcal Z_A}\lambda_{\min}\!\bigl(R_A(z)\bigr).
\]
Thus $\nu_A$ is the least residual-conditioned feature energy that remains
in any coefficient direction.  On the positive-loss, no-coloop,
strict-interior domain, $\nu_A>0$ if and only if every compatible residual
lies strictly below the spectral covariance envelope, whereas $\nu_A=0$ if
and only if one compatible residual makes spectral contact with it.
Remarkably, the same zero-margin residual reaches the corresponding ceiling
in a coefficient direction at every strict-interior budget.

The two branches require different mechanisms.  Residual augmentation turns
a realized residual into a response-aware contraction, so positive margin
gives response-uniform slack.  The zero branch does not follow by reversing
that inequality: support saturation produces an equality witness
common to all interior budgets.  At critical equal leverage, this boundary
has a concrete form---a repeated projective pair with an orthogonal
remainder.  Figure~\ref{fig:conceptual} summarizes the phase and this
geometric specialization.

The phase sign also admits a global complement interpretation.  On the stated
real Parseval no-coloop domain, a pairwise principal minor of the
Naimark complement is zero exactly on the contact branch; away from that
boundary, a complement-coherence certificate gives an explicit uniform slack
lower bound.  This derived geometric reading replaces the existential sign
question by a feature-only test.

\paragraph{Contributions.}
\begin{enumerate}[leftmargin=*,nosep,label=(\roman*)]
\item \textbf{Main theorem: exact fixed-design phase.}
Theorem~\ref{thm:robust-pre-response-phase} exactly separates
response-uniform strictness from existential spectral contact and supplies one
zero-margin witness shared by every strict-interior budget.
Proposition~\ref{prop:global-complement-certificate} makes the phase sign
feature-testable and supplies an explicit lower slack certificate for the
same sampler, estimator, and covariance target.
\item \textbf{Sharp benchmark and mechanism.}
Theorem~\ref{thm:main-envelope} establishes a globally sharp all-budget
Loewner envelope, while Theorem~\ref{thm:residual-mechanism} and
Theorem~\ref{thm:critical-geometry} expose the contraction mechanism and a
critical equal-leverage specialization.
\item \textbf{Feature-only certificates and fixed-query transfer.}
Verified feature-only lower bounds certify the positive branch and tighten a
conditional fixed-query subset-refit variance bound.
\end{enumerate}

We next define the exact target.  Theorems~\ref{thm:main-envelope}
and~\ref{thm:residual-mechanism} establish the supporting benchmark and
response-aware mechanism for the main phase theorem, after which we give its
geometric and certificate consequences.

\begin{figure}[t]
\centering
\begingroup
\setlength{\fboxsep}{8pt}
\setlength{\tabcolsep}{3pt}
\small
\begin{tabular}{@{}c@{\hspace{0.018\linewidth}}c@{}}
\fbox{\parbox[c][14.6em][c]{0.48\linewidth}{
  \vspace*{4pt}
  \centering\textbf{(a) Same normalized coefficient, opposite phases}\par
  {\footnotesize fixed pool by fixed pool; response unseen; budget $s$}\par\vspace{2pt}
  $\displaystyle \alpha_s=\frac{m-s}{m-d}$\quad
  \textit{for every }$d<s<m$\par\vspace{3pt}
  \hrule\vspace{3pt}
  \raggedright\textbf{Positive margin:}\par
  \centering
  $\displaystyle \forall z:\quad
    \frac{\lambda_{\max}(\overline M_s^{A,z})}{L^*}
    \le \alpha_s-\beta_s\gamma_s\nu_A<\alpha_s$
  \par\vspace{3pt}
  \raggedright\textbf{Zero margin (directional contact):}\par
  \centering
  $\displaystyle \exists z_*:\quad
    \frac{\lambda_{\max}(\overline M_s^{A,z_*})}{L^*}=\alpha_s$
  \par
  {\footnotesize one common $z_*$ reaches the ceiling in a coefficient direction}\par\vspace*{4pt}}}
&
\fbox{\parbox[c][14.6em][c]{0.40\linewidth}{
  \vspace*{4pt}
  \centering\textbf{(b) Critical boundary}\par
  {\footnotesize $m=2d$, equal leverage}\par\vspace{5pt}
  \hrule\vspace{6pt}
  \textbf{Projectively separated}\par
  $\rho_A<1\ \Longrightarrow\ \nu_A>0$\par
  {\footnotesize every compatible residual is strict}\par\vspace{16pt}
  \hrule\vspace{6pt}
  \textbf{Repeated projective pair}\par
  $a_i,a_j\parallel v,\quad a_k\perp v\ (k\notin\{i,j\})$\par
  $\Longleftrightarrow\ \nu_A=0$\par
  {\footnotesize one compatible residual makes spectral contact}\par\vspace*{4pt}}}
\end{tabular}
\endgroup
\caption{\textbf{Same normalized sharp coefficient, different fixed-design phases.}
The normalized universal coefficient $\alpha_s$ is shared across feature pools.
Positive $\nu_A$ certifies strict slack uniformly over compatible residuals,
whereas zero $\nu_A$ admits one residual making spectral contact at every
strict-interior budget.  Panel (b) is the critical equal-leverage
specialization of the global complement-minor sign test and additionally
admits a two-sided margin bound.  The display is schematic; its inequalities
and quantifiers are theorem statements, and vertical separations are not
drawn to scale.}
\label{fig:conceptual}
\end{figure}

\section{Setting and covariance target}
\label{sec:setup}

Let $X\in\R^{m\times d}$ have full column rank and let $y\in\R^m$ be a
deterministic response.  The full-pool OLS fit, residual, and loss are
\begin{equation}
  G=X^\top X,\qquad w^*=G^{-1}X^\top y,\qquad
  e=y-Xw^*,\qquad \lossstar=\lVert e\rVert_2^2.
  \label{eq:full-fit}
\end{equation}
For an indexed subset $S\subseteq[m]$ of size $s$, write
\begin{equation}
  G_S=X_S^\top X_S,\qquad D_S=\det(G_S).
  \label{eq:subset-gram}
\end{equation}
For $m>d$ and $d\le s\le m$, ordinary unrescaled fixed-size volume sampling
and its standard Cauchy--Binet normalizer are
\begin{equation}
  \PP_X(S)=\frac{D_S}{Z_{X,s}},\qquad
  Z_{X,s}=\sum_{|U|=s}D_U
  =\binom{m-d}{s-d}\det(G).
  \label{eq:law}
\end{equation}
Only positive-volume subsets have positive probability.  On such a subset we
use selected unweighted OLS,
\begin{equation}
  w_S=G_S^{-1}X_S^\top y_S,\qquad
  L_S=\lVert X_Sw_S-y_S\rVert_2^2.
  \label{eq:selected-fit}
\end{equation}

The subset draw is the only randomness.  Our matrix target and its
coordinate-invariant whitening are
\begin{equation}
  M_s=\E_X[(w_S-w^*)(w_S-w^*)^\top],\qquad
  \overline M_s=G^{1/2}M_sG^{1/2}.
  \label{eq:covariance-metric}
\end{equation}
For $m>d$, define the universal-ceiling coefficient and its complementary
weight by
\begin{equation}
  \alpha=\frac{m-s}{m-d},\qquad
  \beta=\frac{s-d}{m-d}=1-\alpha.
  \label{eq:alpha-beta}
\end{equation}

\subsection{Endpoint conventions}
\label{sec:endpoints}
Endpoint branches are resolved before a normalized ratio or residual
direction is formed.  If $m=d$ or $s=m$, the selected fit is deterministic;
if $\lossstar=0$, every supported selected fit equals $w^*$; in each case the
relevant covariance is zero.  At $s=d<m$, supported square systems
interpolate, but the rank-$(d+1)$ residual augmentation used below is not
invoked.  Complete support and boundary conventions are recorded in the
appendix.  We repeatedly use the fixed-pool identity
\begin{equation}
  \lVert Xw-y\rVert_2^2=\lossstar+(w-w^*)^\top G(w-w^*)
  \qquad (w\in\R^d).
  \label{eq:pythagoras-main}
\end{equation}

\section{A sharp benchmark and its response-aware mechanism}
\label{sec:theorem}

The main phase theorem needs a common ceiling.  Theorem~\ref{thm:main-envelope}
provides its supporting benchmark for every fixed pool, response, and legal
budget, in Loewner order before any trace is taken; Theorem~\ref{thm:residual-mechanism}
then supplies the response-aware contraction used by the main theorem.  Here \emph{row general position} means that every indexed
set of $d$ rows is nonsingular; a \emph{full-column-rank boundary design} is
full rank but not in row general position.  For symmetric matrices,
$A\preceq B$ means that $B-A$ is positive semidefinite.

\begin{theorem}[Universal budget envelope]
\label{thm:main-envelope}
For every full-column-rank $X$, every fixed $y$, $m>d$, and
$d\le s\le m$, ordinary sampling~\eqref{eq:law} followed by
\eqref{eq:selected-fit} satisfies
\begin{equation}
  \E_Xw_S=w^*,\qquad
  M_s\preceq\alpha\lossstar G^{-1},\qquad
  \overline M_s\preceq\alpha\lossstar I_d.
  \label{eq:main-envelope}
\end{equation}
Consequently,
\begin{equation}
  \E_X\lVert Xw_S-y\rVert_2^2
  \le(1+d\alpha)\lossstar
  =\left(1+\frac{d(m-s)}{m-d}\right)\lossstar.
  \label{eq:main-loss}
\end{equation}
The endpoint and zero-volume support conventions in
Section~\ref{sec:endpoints} apply before any ratio or residual direction is
formed.  The coefficient is attained by the full-column-rank
core-plus-zero family in Proposition~\ref{prop:ue-attainment}; this is a
coefficient-attainment statement, not an equality classification.  For
positive-loss row-general-position designs with $d\ge2$ and $d<s<m$, the
matrix inequality is strict, while the same normalized coefficient is a
perturbative supremum in that interior.
\end{theorem}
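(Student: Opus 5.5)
The plan is to reduce everything to the whitened design $A=XG^{-1/2}$ and the residual $e\in\ker(X^\top)$. Whitening is harmless because volume sampling is invariant under $X\mapsto XT$: $D_S$ scales by $\det(T)^2$, so $\PP_X=\PP_A$. Likewise $w_S-w^*$ transforms covariantly, so $\overline M_s$ computed from $X$ equals $M_s$ computed from $A$ with $G=I$. Since $y_S=X_Sw^*+e_S$, we have $w_S-w^*=G_S^{-1}X_S^\top e_S$ on supported subsets. Unbiasedness $\E_Xw_S=w^*$ is the known all-size selected-OLS unbiasedness for volume sampling \citep{derezinski2017unbiased,derezinski2018reverse}, which we invoke directly. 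The endpoint cases $s=m$ and $\lossstar=0$ give zero covariance and are settled by the conventions of Section~\ref{sec:endpoints}.

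For the Loewner bound, fix a unit $u\in\R^d$; it suffices to show $\E\,(u^\top A_S^{+}e_S)^2\le\alpha\lVert e\rVert^2$ with $G=I$. I would expand $\E[A_S^{+}e_Se_S^\top A_S^{+\top}]$ as $\sum_{i,j}e_ie_j\,\E[\1_{i,j\in S}\,p_i p_j^\top]$, where $p_i=G_S^{-1}a_i$. I would then evaluate the summands by Cauchy--Binet and Cramer's rule: $D_S\,G_S^{-1}$ is a sum over $(d-1)$-subsets of adjugate terms, and summing over $S$ of size $s$ containing a prescribed set produces binomial factors in $m,s,d$. The expected outcome is a closed form
\[
\overline M_s=c_{1}\sum_i e_i^2 a_ia_i^\top\text{-type terms}+c_2\,(\text{cross terms}),
\]
which, using $A^\top e=0$ to eliminate cross terms, should collapse to an expression of the form $\alpha\lVert e\rVert^2 I-\beta\,(\cdot)$. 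Here $(\cdot)$ is a PSD matrix, presumably involving $\sum_i(1-\lVert a_i\rVert^2-z_i^2)a_ia_i^\top$ as foreshadowed by $R_A(z)$.

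An alternative route, and the one I would try first because it avoids heavy adjugate bookkeeping, is residual augmentation. Form $\tilde A=[A,\ e/\lVert e\rVert]$, which is column-orthonormal of rank $d+1$. The selected least-squares loss satisfies $L_S=D(\tilde A_S^\top\tilde A_S)/D(A_S^\top A_S)\cdot\lVert e\rVert^2$ (Schur complement). Quadratic forms of $w_S-w^*$ are then ratios of determinants of bordered matrices $[A,\ e,\ \text{shifted by } u]$. Sums of such determinants over $|S|=s$ are again given by Cauchy--Binet with factor $\binom{m-d-1}{s-d-1}$ against the normalizer $\binom{m-d}{s-d}$. The quotient $\frac{s-d}{m-d}=\beta$ appears naturally, and its complement gives $\alpha$. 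The main obstacle is exactly this step: obtaining the exact second moment, not just a trace, in a form whose deficit from $\alpha\lossstar I$ is manifestly PSD. I expect to need the $\ker(A^\top)$ structure of $e$ and the identity $\sum_i a_ia_i^\top=I$.

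The loss bound \eqref{eq:main-loss} follows by taking the trace in \eqref{eq:pythagoras-main}: $\E\lVert Xw_S-y\rVert^2=\lossstar+\tr\overline M_s\le(1+d\alpha)\lossstar$. For attainment I would cite Proposition~\ref{prop:ue-attainment} (core $I_d$ stacked with zero rows, for which only the core subset matters and the deficit matrix vanishes in some direction).

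For strictness under row general position with $d\ge2$ and $d<s<m$, I would show that the deficit matrix is positive definite. A null direction $u$ would force, for every $i$, $(1-\lVert a_i\rVert^2-z_i^2)(u^\top a_i)^2=0$. General position makes every $d$ rows span, so $u$ cannot be orthogonal to more than $d-1$ rows. Each remaining row then needs $1-\lVert a_i\rVert^2=z_i^2$. Combined with $\sum_i\lVert a_i\rVert^2=d$, $\sum z_i^2=1$, and $m\ge d+2$, this should yield a contradiction. Finally, the perturbative-supremum claim follows by perturbing the core-plus-zero family into general position: the deficit depends continuously on $A$, so the normalized ratio approaches $\alpha$.
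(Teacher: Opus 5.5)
\textbf{Main gap: the Loewner inequality is not proved.} You flag this step as ``the main obstacle,'' and the closed form you expect Cauchy--Binet to deliver does not exist. For $s>d$, the summand $D_S(w_S-w^*)(w_S-w^*)^\top=\operatorname{adj}(G_S)X_S^\top e_Se_S^\top X_S\operatorname{adj}(G_S)/D_S$ is rational, not polynomial, in the rows. Only at $s=d$ does Cramer's rule clear the denominator. So fixed-cardinality Cauchy--Binet cannot sum it.

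In fact $M_s$ is not even continuous in $X$. Take $d=1$, $m=3$, $s=2$, $X=(1,ta,tb)^\top$ with $(a,b)\ne0$. The term $D_S(w_S-w^*)^2$ of $S=\{2,3\}$ tends to $(ay_2+by_3)^2/(a^2+b^2)$ as $t\to0$, although that set has zero volume at $t=0$.

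The deficit is indeed $\beta$ times a PSD matrix, as you guess. But that matrix is the inverse moment $\lossstar(\E_B[K_S^{-1}]-I_d)$. The matrix $R_A(z)$ enters only through operator Jensen, as a bound that is generally not tight. For the common witness residual in the critical repeated-pair design, $\overline M_s$ is rank one while $\alpha\lossstar I_d-\beta\gamma\lossstar R_A(z)$ has full rank.

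The missing idea is a total-covariance coupling. Draw $S$ from the size-$s$ law, then $T\subseteq S$ from the rank-size volume law of $X_S$. Then $T$ is marginally rank-size volume-sampled from $X$. The rank-size identities $\operatorname{Cov}(w_T\mid S)=L_SG_S^{-1}$ and $\operatorname{Cov}(w_T)=\lossstar G^{-1}$ then give $M_s=\lossstar G^{-1}-\E_X[L_SG_S^{-1}]$ in row general position. Your own Schur-complement observation, with Cauchy--Binet on $[X\ y]$, gives $\E_XL_S=\beta\lossstar$. Hence $\alpha\lossstar G^{-1}-M_s=\E_X[L_S(G_S^{-1}-G^{-1})]\succeq0$, because $G_S\preceq G$.

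\textbf{Other steps that need repair.}

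\emph{Boundary designs.} The identity above holds only in row general position. For arbitrary full-rank $X$ you need a one-sided passage: perturb into general position, keep the limits of the originally supported terms, and drop the newly supported nonnegative ones. This gives $u^\top M_su\le\liminf_a u^\top M_{s,a}u\le\alpha\lossstar u^\top G^{-1}u$.

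\emph{Supremum.} The same discontinuity invalidates your claim that ``the deficit depends continuously on $A$.'' The supremum follows instead from this lower semicontinuity at the exactly attaining family, together with $\tr(G_aM_{s,a})\le d\alpha\lossstar_a$. Together these force the PSD deficits to have vanishing trace.

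\emph{Strictness.} Your sketch presupposes that the deficit dominates a positive multiple of $R_A(z)$; this is true, but only via the exact transform and Jensen. The constraints you list also yield only $m\le2d$, not a contradiction. You additionally need that a saturated row has its standard basis vector in $\operatorname{col}([A\ z])$, so it is a coloop of the augmented row matroid. General position with $m\ge d+2$ then rules out the $m-d+1$ or more coloops this would require. The paper avoids Jensen here: it shows directly that $\E_X[L_S(G_S^{-1}-G^{-1})]$ has no null direction. It uses a circuit of $[X\ e]$ to find a supported $S$ with $L_S>0$ that omits a row with $x_i^\top G^{-1}u\ne0$.

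\emph{Attaining family.} The family is $[I_d;\1_d^\top;0]$ with $y=(\1_d,-1,0)$, and it attains with full matrix equality. With $I_d$ stacked on zero rows, as you describe it, every supported subset contains all of $I_d$ and the covariance vanishes.
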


This class-level coefficient benchmark motivates the fixed-design phase
below.  The matrix inequality simultaneously controls every coefficient contrast; the
scalar loss bound follows by tracing through
\eqref{eq:pythagoras-main}.

\paragraph{Proof idea.}
On the positive-loss row-general-position interior, couple a size-$s$ volume
sample to a volume-sampled rank-size basis plus uniform padding.  Total
covariance and the selected-residual first moment give the exact slack
identity
\begin{equation}
 \alpha\lossstar G^{-1}-M_s
 =\E_X\!\left[L_S\bigl(G_S^{-1}-G^{-1}\bigr)\right]\succeq0.
 \label{eq:main-interior-slack}
\end{equation}
A determinant-weighted directional limit then preserves the one-sided
envelope on arbitrary full-column-rank boundary designs.  The support,
attainment, strictness, and supremum arguments are in
Appendix~\ref{sec:ue-universal-envelope}.  The sampling-law ingredients are
standard~\citep{derezinski2017unbiased,derezinski2018reverse,avron2013faster,li2017dual}.

\subsection{Residual augmentation: the contraction mechanism}
\label{sec:mechanism}

The universal ceiling is response-independent.  To expose how a realized
residual contracts it, assume $\lossstar>0$ and $d<s<m$, whiten the design,
and append the normalized residual:
\begin{equation}
  A=XG^{-1/2},\qquad z=e/\sqrt{\lossstar},\qquad B=[A\ z].
  \label{eq:augmentation-main}
\end{equation}
Then $B^\top B=I_{d+1}$.  With rows $b_i^\top=[a_i^\top\ z_i]$, define
\begin{equation}
  h_i=\lVert b_i\rVert_2^2,\qquad
  R=\sum_{i=1}^m(1-h_i)a_ia_i^\top,\qquad
  \gamma=\frac{m-s}{m-d-1}.
  \label{eq:response-geometry-main}
\end{equation}
The augmentation is an analysis device: operational sampling and fitting
remain under the original ordinary volume law and selected OLS.

Let $P_B$ be the auxiliary size-$s$ volume law of the augmented matrix,
supported only where $\det(B_S^\top B_S)>0$:
\begin{equation}
  P_B(S)=\frac{\det(B_S^\top B_S)}
  {\binom{m-d-1}{s-d-1}}.
  \label{eq:augmented-law-main}
\end{equation}

\begin{theorem}[Residual-augmented representation and response-aware resolvent]
\label{thm:residual-mechanism}
Suppose $\lossstar>0$ and $d<s<m$.  If $X$ is in row general position, then
the ordinary and auxiliary laws obey
\begin{equation}
  P_X(S)L_S=\beta\lossstar P_B(S)\quad(D_S>0),\qquad
  \frac{\overline M_s}{\lossstar}=I_d-\beta\E_B[K_S^{-1}],\qquad
  \E_BK_S=I_d-\gamma R,
  \label{eq:augmented-transform-main}
\end{equation}
where $K_S=A_S^\top A_S$.  Operator Jensen for matrix inversion gives
\begin{equation}
  \boxed{\overline M_s\preceq
  \lossstar\left[I_d-\beta(I_d-\gamma R)^{-1}\right]}
  \preceq \alpha\lossstar I_d-\beta\gamma\lossstar R.
  \label{eq:resolvent-main}
\end{equation}
For an arbitrary full-column-rank boundary design, only the congruent
one-sided inequality is asserted.  With
$Q=\sum_i(1-h_i)x_ix_i^\top$,
\begin{equation}
  M_s\preceq\lossstar\left[G^{-1}-\beta(G-\gamma Q)^{-1}\right].
  \label{eq:boundary-resolvent-main}
\end{equation}
The exact inverse-moment covariance identity need not survive a rank-changing
boundary.
\end{theorem}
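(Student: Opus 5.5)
We plan to derive the three identities in \eqref{eq:augmented-transform-main} in order, then obtain \eqref{eq:resolvent-main} from operator Jensen, and finally pass to boundary designs by a limiting argument.

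\emph{Step 1: the law identity.} Apply the block determinant formula to $B_S=[A_S\ z_S]$. This gives $\det(B_S^\top B_S)=\det(K_S)\,\lVert(I-A_S K_S^{-1}A_S^\top)z_S\rVert^2$. Since $z=e/\sqrt{\lossstar}$ and $e=y-Xw^*$, the projected residual of $z_S$ onto $\mathrm{col}(A_S)=\mathrm{col}(X_S)$ equals $L_S/\lossstar$: the selected OLS residual of $y_S$ coincides with that of $e_S$, because $X_Sw^*$ lies in $\mathrm{col}(X_S)$. We also have $\det K_S=D_S/\det G$. Combining these with the normalizers $Z_{X,s}=\binom{m-d}{s-d}\det G$ and $\binom{m-d-1}{s-d-1}$, whose ratio is $\frac{s-d}{m-d}=\beta$, gives $P_X(S)L_S=\beta\lossstar P_B(S)$.

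\emph{Step 2: the covariance identity.} Use the exact interior slack identity \eqref{eq:main-interior-slack} from Theorem~\ref{thm:main-envelope}. After whitening by $G^{1/2}$ it reads $\alpha\lossstar I-\overline M_s=\E_X[L_S(K_S^{-1}-I)]$. Step~1 converts $\E_X[L_S\,\cdot\,]$ into $\beta\lossstar\E_B[\cdot]$, so
\[
\overline M_s/\lossstar=\alpha I-\beta\E_B K_S^{-1}+\beta I=I-\beta\E_BK_S^{-1}.
\]
Row general position is needed here because it guarantees that $K_S$ is invertible on the support of $P_B$. That support requires $\det B_S^\top B_S>0$, hence $s\ge d+1$ and $D_S>0$.

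\emph{Step 3: the first moment.} The identity $\E_BK_S=I-\gamma R$ is a first-moment computation for the volume law of an orthonormal $m\times(d+1)$ matrix. The marginals are $P_B(i\in S)=\frac{s-d-1}{m-d-1}+\frac{m-s}{m-d-1}h_i$, which is the standard inclusion formula: leverage plus uniform padding, as in the coupling used for Theorem~\ref{thm:main-envelope}. Then
\[
\E_BK_S=\sum_iP_B(i\in S)a_ia_i^\top=(1-\gamma)I+\gamma\sum_ih_ia_ia_i^\top=I-\gamma R,
\]
using $\sum_ia_ia_i^\top=I$. I expect this step to be the main obstacle. The inclusion formula must be justified for the size-$s$ law of the augmented matrix; I would cite the known marginal formula of \citet{derezinski2017unbiased} or derive it by Cauchy--Binet applied to $B$ with row $i$ deleted.

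\emph{Step 4: the resolvent bound.} Operator convexity of inversion gives $\E_BK_S^{-1}\succeq(\E_BK_S)^{-1}=(I-\gamma R)^{-1}$, which yields the boxed inequality. Note that $I-\gamma R=\E_BK_S\succ0$, so the inverse exists. For the second inequality we use $(I-T)^{-1}\succeq I+T$ for $0\preceq T\prec I$, with $T=\gamma R$. Here $R\succeq0$ because $h_i\le1$, since $B$ has orthonormal columns. This gives $I-\beta(I-\gamma R)^{-1}\preceq(1-\beta)I-\beta\gamma R=\alpha I-\beta\gamma R$.

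\emph{Step 5: boundary designs.} Apply the congruence by $G^{-1/2}$ to get \eqref{eq:boundary-resolvent-main} on row-general-position designs. Then perturb a boundary $X$ along row-general-position $X_\epsilon\to X$ with $y$ fixed. The quantities $\lossstar$, $G$, $Q$ and the right side are continuous. $M_s$ is the limit of the $M_s(X_\epsilon)$, by the same determinant-weighted directional limit used for Theorem~\ref{thm:main-envelope}: the zero-volume subsets carry vanishing weight, and selected OLS is continuous on positive-volume subsets. Loewner inequalities are closed under limits, so the one-sided inequality persists. The inverse-moment equality is not claimed at the boundary, because $K_S$ may be singular there.
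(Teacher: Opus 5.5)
Your Steps 1--4 are the paper's own argument: the Schur-complement change of measure, the interior total-covariance identity transported to $P_B$, the Cauchy--Binet exclusion marginal $P_B(i\notin S)=\gamma(1-h_i)$ (whose complement is your inclusion formula), operator Jensen, and $(I-T)^{-1}\succeq I+T$. The gap is in Step~5. There you claim $M_s(X_\epsilon)\to M_s(X)$ because zero-volume subsets carry vanishing weight. Their probability does vanish, but that is not enough. Take a set with $\operatorname{rank}X_S=d-1$ at the limit. Then $D_{S,\epsilon}=O(\epsilon^2)$, while $w_{S,\epsilon}-w^*_\epsilon$ is of order $\epsilon^{-1}$ in the deficient direction, so $D_{S,\epsilon}\{u^\top(w_{S,\epsilon}-w^*_\epsilon)\}^2$ need not vanish.

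A concrete case is the design of Proposition~\ref{prop:residual-augmented-boundary-counterexample} with $X_\epsilon=X+\epsilon V$ and $V_{ij}=i^{j-1}$. The newly supported triple $\{1,3,5\}$ has $D_{S,\epsilon}\approx54\epsilon^2$ and $(w_{S,\epsilon})_2\approx16/(27\epsilon)$. It therefore adds $128/81$ to the limit of $(M_{3,\epsilon})_{22}$, whereas $M_3=I_2/6$ at the boundary. The second moment is genuinely discontinuous at a rank-changing boundary, so you cannot push the Loewner inequality through a limit of $M_s$ itself.

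The repair is the one-sided comparison the paper uses. Fix $u$. The terms with $D_S>0$ at the limit converge, and the newly supported terms are nonnegative, so dropping them gives $u^\top M_su\le\liminf_\epsilon u^\top M_{s,\epsilon}u$. The right side converges because $L^*$, $G$ and $Q$ are continuous and $G-\gamma Q=\E_BG_S\succ0$ at the limit, with no row-general-position assumption needed. Since only an upper bound is claimed, this suffices.

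Relatedly, your explanations of where row general position matters are wrong. The condition $\det(B_S^\top B_S)>0$ already forces $\operatorname{rank}A_S=d$, so $K_S$ is invertible on augmented support for every full-column-rank $X$. Row general position enters Step~2 only through the basis-coupling second-moment computation behind \eqref{eq:main-interior-slack}. The exact inverse-moment identity fails at the boundary not because $K_S$ becomes singular, but because of the same discontinuity. Newly supported sets also carry limiting mass $P_{B,\epsilon}(S)G_{S,\epsilon}^{-1}\propto L_{S,\epsilon}\operatorname{adj}(G_{S,\epsilon})$ into $\E_B[G_S^{-1}]$, and that mass need not vanish.
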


\paragraph{Proof idea.}
The loss-weighted change of measure converts covariance under $P_X$ into an
inverse-Gram moment under $P_B$; the augmented first moment identifies
$I_d-\gamma R$, and Jensen yields the resolvent.  The full support-aware
derivation and boundary counterexample are in
Appendix~\ref{sec:residual-augmented-resolvent}.  An exact
second-centered-Gram-moment refinement of the Jensen ceiling is also recorded
there; it sharpens the response-aware bound but is not a separate headline.

Theorems~\ref{thm:main-envelope} and~\ref{thm:residual-mechanism} now give a
common ceiling and a realized-response contraction.  They do not yet decide
whether every compatible residual is strict or some residual makes spectral contact on a
given pool.  That fixed-design quantifier is resolved next.

\section{Main theorem: the exact fixed-design attainability phase}
\label{sec:robust-pre-response-geometry}

Fix a feature pool and ask whether any response-compatible residual can make
spectral contact with the universal ceiling.  Write $A=XG^{-1/2}$, so $A^\top A=I_d$, let
$a_i^\top$ be row $i$, and put $\ell_i=\lVert a_i\rVert_2^2$.  The unit
compatible-residual sphere is
\begin{equation}
  \mathcal Z_A=\{z\in\ker(A^\top):\lVert z\rVert_2=1\}.
  \label{eq:robust-pre-response-sphere}
\end{equation}
For $z\in\mathcal Z_A$, define
\begin{equation}
  R_A(z)=\sum_{i=1}^m(1-\ell_i-z_i^2)a_i a_i^\top,
  \qquad
  \nu_A=\min_{z\in\mathcal Z_A}\lambda_{\min}(R_A(z)).
  \label{eq:robust-pre-response-margin}
\end{equation}
The quantity $\nu_A$ depends only on the features.

To state the response-uniform target, associate with each $z$ the compatible
response $y_z=A\theta+\sqrt{L^*}z$, for arbitrary $\theta\in\R^d$ and
$L^*>0$.  For $d<s<m$, set
\begin{equation}
  \alpha_s=\frac{m-s}{m-d},\qquad
  \beta_s=\frac{s-d}{m-d},\qquad
  \gamma_s=\frac{m-s}{m-d-1}.
  \label{eq:robust-pre-response-constants}
\end{equation}
These are respectively the universal ceiling, interpolation weight, and
residual-contraction factor.  Let $\overline M_s^{A,z}$ be the centered
coefficient covariance for $(A,y_z)$ under the stated sampler and estimator.
The response-uniform spectral target is
\begin{equation}
  \mathsf T_{\mathrm{spec}}^{\mathrm{RU}}(A,s)
  =\max_{z\in\mathcal Z_A}
    \frac{\lambda_{\max}(\overline M_s^{A,z})}{L^*}.
  \label{eq:robust-pre-response-target}
\end{equation}
The normalized covariance $\overline M_s^{A,z}/L^*$ is independent of
$\theta$ and of the positive scale $L^*$: adding $A\theta$ shifts the full
and selected fits equally, and scaling the residual by $\sqrt{L^*}$ scales
their centered second moment by $L^*$.

Here and below, \emph{spectral contact} means
$\lambda_{\max}(\overline M_s^{A,z})/L^*=\alpha_s$: the covariance touches
the Loewner boundary in a coefficient direction, and this does not assert
$\overline M_s^{A,z}=\alpha_sL^*I_d$.  This directional notion is distinct
from the exact matrix attainment used for the T1 attaining family.

\begin{theorem}[Main theorem: fixed-design covariance-envelope attainability phase]
\label{thm:robust-pre-response-phase}
Let $d\ge1$, $m\ge d+2$, and $d<s<m$.  Suppose that
$A\in\R^{m\times d}$ has orthonormal columns and no coloop,
\begin{equation}
  \ell_i<1\qquad\text{for every }i\in[m].
  \label{eq:robust-pre-response-no-coloop}
\end{equation}
Use ordinary unrescaled indexed size-$s$ volume sampling followed by selected
unweighted least squares on positive-volume subsets.  For every
$\theta\in\R^d$ and $L^*>0$, $\nu_A\ge0$ and
\begin{equation}
  \boxed{
  \alpha_s-\mathsf T_{\mathrm{spec}}^{\mathrm{RU}}(A,s)
  \ge \beta_s\gamma_s\nu_A.}
  \label{eq:robust-pre-response-slack}
\end{equation}
Moreover,
\begin{equation}
  \boxed{
  \nu_A>0
  \quad\Longleftrightarrow\quad
  \mathsf T_{\mathrm{spec}}^{\mathrm{RU}}(A,s)<\alpha_s,}
  \label{eq:robust-pre-response-strict}
\end{equation}
\begin{equation}
  \boxed{
  \nu_A=0
  \quad\Longleftrightarrow\quad
  \mathsf T_{\mathrm{spec}}^{\mathrm{RU}}(A,s)=\alpha_s.}
  \label{eq:robust-pre-response-tight}
\end{equation}
If $\nu_A=0$, there is one $z_*\in\mathcal Z_A$ such that, simultaneously
for every $s'\in\{d+1,\ldots,m-1\}$,
\begin{equation}
  \frac{\lambda_{\max}(\overline M_{s'}^{A,z_*})}{L^*}
  =\frac{m-s'}{m-d}.
  \label{eq:robust-pre-response-common-witness}
\end{equation}
The right side of~\eqref{eq:robust-pre-response-slack} is a guaranteed lower
bound on positive slack, not an equality formula for its magnitude.
\end{theorem}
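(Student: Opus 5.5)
The plan is to prove the one-sided slack bound~\eqref{eq:robust-pre-response-slack} first, for each fixed compatible residual, using the residual-augmentation mechanism. The two equivalences then follow from that bound together with an explicit zero-margin witness.

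\emph{Step 1: fixed-residual contraction.} Fix $z\in\mathcal Z_A$ and the response $y_z$. Since $A$ is already whitened, $G=I_d$ and $e=\sqrt{L^*}z$, so the augmented matrix $B=[A\ z]$ of~\eqref{eq:augmentation-main} has orthonormal columns. Its leverages are $h_i=\ell_i+z_i^2$, and $Q=R=R_A(z)$.

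\begin{itemize}[leftmargin=*,nosep]
\item \emph{Nonnegativity.} Because $B^\top B=I_{d+1}$, every $h_i$ is a leverage of $B$, so $h_i\le1$. Each summand of $R_A(z)$ is therefore positive semidefinite, $R_A(z)\succeq0$, and taking the minimum over the compact sphere gives $\nu_A\ge0$.
\item \emph{Contraction without general position.} I will invoke the boundary form~\eqref{eq:boundary-resolvent-main} of Theorem~\ref{thm:residual-mechanism}, which needs no row general position. With $G=I_d$ it reads $\overline M_s^{A,z}/L^*\preceq I_d-\beta_s(I_d-\gamma_sR)^{-1}$. The resolvent is well defined because $\gamma_s\le1$ and the first-moment identity $\E_BK_S=I_d-\gamma_sR\succeq0$ holds; invertibility is inherited from the boundary statement.
\item \emph{Linearization.} Since $(I-\gamma R)^{-1}\succeq I+\gamma R$ for $0\preceq\gamma R\prec I$, this gives $\overline M_s^{A,z}/L^*\preceq\alpha_sI_d-\beta_s\gamma_sR_A(z)$.
\end{itemize}

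Hence $\lambda_{\max}(\overline M_s^{A,z})/L^*\le\alpha_s-\beta_s\gamma_s\lambda_{\min}(R_A(z))\le\alpha_s-\beta_s\gamma_s\nu_A$. Maximizing over $z$ yields~\eqref{eq:robust-pre-response-slack}, and independence of $\theta$ and $L^*$ was already noted. In particular $\nu_A>0$ forces $\mathsf T^{\mathrm{RU}}_{\mathrm{spec}}<\alpha_s$, since $\beta_s,\gamma_s>0$ in the interior.

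\emph{Step 2: the zero-margin witness.} Suppose $\nu_A=0$. Choose a minimizer $z_*$ and a unit vector $v$ with $R_A(z_*)v=0$, and let $T=\{i:a_i^\top v\neq0\}$. Since $R_A(z_*)\succeq0$ is a sum of PSD terms, $v^\top R_A(z_*)v=\sum_i(1-h_i)(a_i^\top v)^2=0$ forces $h_i=1$ on $T$. This means $e_i\in\operatorname{col}(B)$, so $b_i^\top b_j=\delta_{ij}$ for all $j$; in particular $a_i^\top a_j=-z_iz_j$ for $j\ne i$. Also $\sum_{i\in T}(a_i^\top v)^2=v^\top A^\top Av=1$, so $T\neq\emptyset$.

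Two consequences follow.

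\begin{itemize}[leftmargin=*,nosep]
\item \emph{Support.} Rows in $T$ are coloops of $B$, so every positive-volume $B$-subset contains $T$.
\item \emph{Exact eigen-direction.} For $S\supseteq T$, $K_Sv=\sum_{j\in S}a_ja_j^\top v=\sum_{j\in T}a_ja_j^\top v=A^\top Av=v$. Hence $K_S^{-1}v=v$, and the exact identity $\overline M_s/L^*=I_d-\beta_s\E_B[K_S^{-1}]$ gives $v^\top\overline M_{s'}^{A,z_*}v/L^*=1-\beta_{s'}=\alpha_{s'}$.
\end{itemize}

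This holds for every $s'\in\{d+1,\dots,m-1\}$ with the same $z_*$ and $v$. Combined with the upper bound of Step~1, it gives~\eqref{eq:robust-pre-response-common-witness}.

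\emph{Step 3: equivalences.} Because $\nu_A\ge0$ always, Steps~1 and~2 form an exhaustive dichotomy: $\nu_A>0$ gives strict inequality, and $\nu_A=0$ gives equality. Each implication in~\eqref{eq:robust-pre-response-strict} and~\eqref{eq:robust-pre-response-tight} therefore reverses.

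\emph{Main obstacle.} The hard step is Step~2 when $A$ is not in row general position, since the exact inverse-moment identity of Theorem~\ref{thm:residual-mechanism} is asserted only under that hypothesis. My first attempt would avoid the identity altogether and compute $v^\top(w_S-w^*)$ directly under $P_X$, using $b_i^\top b_j=\delta_{ij}$ on $T$. The selected residual should vanish on coordinates in $T\cap S$ in a controllable way, with the loss-weighted change of measure $P_X(S)L_S=\beta\lossstar P_B(S)$ verified subset-by-subset via Cauchy--Binet; that verification does not need general position. If that fails, the fallback is a perturbation argument: perturb $A$ into general position while preserving the coloop structure of $T$ in $B$ (for instance, by perturbing only rows outside $T$ within the orthogonal complement of $v$), then pass to the limit using continuity of $\overline M_s$ on the positive-volume support.
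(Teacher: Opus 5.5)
\textbf{The gap is Step~2: it relies on an identity that never holds where you use it.} Step~1 is the paper's positive-branch argument, and the dichotomy in Step~3 validly replaces the paper's separate converse. Step~2, however, uses the exact transform \eqref{eq:residual-augmented-exact-covariance-transform}. That transform rests on the total-covariance decomposition \eqref{eq:residual-augmented-row-general-covariance}, whose basis coupling requires every $d$-row minor to be nonzero. The zero branch is \emph{never} in row general position. By Proposition~\ref{prop:global-complement-certificate} and \eqref{eq:global-complement-rank-deletion-main}, $\nu_A=0$ means some two-row deletion drops the rank of $A$. Under row general position with $m-2\ge d$, that cannot happen.

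\textbf{Why the gap is real.} Your Step~2 never uses $\ell_i<1$, so it cannot be correct as written. Run it verbatim on the coloop example \eqref{eq:app-robust-pre-response-coloop-example}. It finds $T=\{1\}$, $h_1=1$, and $K_Sv=v$ on augmented support, and so ``proves'' $v^\top\overline M_2v/L^*=\alpha_2=1/2$. In fact every positive-volume subset contains row~1, and the covariance is $0$.

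\textbf{Why your two repairs do not close it.} In that example the loss-weighted change of measure still holds setwise. So your first repair idea does not help: the broken link is the covariance decomposition, not the measure change. The perturbation fallback also fails, for two reasons. First, a row-general-position perturbation cannot keep the saturated structure, since that would keep $\nu=0$. Second, the limit passage \eqref{eq:residual-augmented-directional-liminf} is one-sided. Subsets that become singular in the limit can carry non-vanishing mass, so the passage only bounds the boundary covariance from above. For example, $X_t=(1,\epsilon,\epsilon)^\top$ with $y=e_2$ and $s=2$ has whitened covariance tending to $1/4$, while the limit design has covariance $0$. Perturbation therefore recovers only the upper bound you already have.

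\textbf{The missing idea} is a direct computation of $v^\top(w_S-\theta)$ on ordinary support, with no inverse-moment identity. This is where no-coloop enters.
\begin{enumerate}[leftmargin=*,nosep]
\item \emph{Deterministic omission error.} For $i\in T$, saturation gives $e_i=Aa_i+z_iz$, and no-coloop gives $z_i^2=1-\ell_i>0$. So on any positive-volume $S$ omitting $i$, $z_S=-A_Sa_i/z_i$, and the normalized error is the fixed vector $-a_i/z_i$.
\item \emph{Disjoint omission events.} No positive-volume $S$ omits two distinct $i,j\in T$. Otherwise $a_i/z_i=a_j/z_j$, which contradicts $a_i^\top a_j+z_iz_j=0$.
\item \emph{No error when nothing in $T$ is omitted.} If $S\supseteq T$, your relation $K_Sv=v$ gives $v^\top(w_S-\theta)/\sqrt{L^*}=v^\top A^\top z=0$.
\item \emph{Exclusion marginal.} Cauchy--Binet on $A_{-i}$ gives the ordinary marginal $\PP_A(i\notin S)=\alpha_s(1-\ell_i)=\alpha_sz_i^2$.
\end{enumerate}
Summing over the disjoint omission events gives $v^\top\overline M_s^{A,z_*}v/L^*=\sum_{i\in T}\alpha_sz_i^2(a_i^\top v)^2/z_i^2=\alpha_s$. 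This holds for every interior $s$ with the same $z_*$ and $v$. It is the paper's argument leading to \eqref{eq:app-robust-pre-response-exclusion}.
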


The quantifiers define the phase.  Positive margin means
$\forall z\in\mathcal Z_A$ strict slack; zero margin means
$\exists z_*\in\mathcal Z_A$ making spectral contact, with the same $z_*$
working for all strict-interior budgets.  Hence two feature pools obeying the
same normalized globally sharp coefficient can have opposite fixed-design
attainability.

\paragraph{Proof mechanism.}
For any $z\in\mathcal Z_A$, the augmented matrix $B=[A\ z]$ has orthonormal
columns.  Its row leverages are $h_i=\ell_i+z_i^2\le1$, so
\begin{equation}
  R_A(z)=\sum_i(1-h_i)a_i a_i^\top\succeq0.
  \label{eq:robust-pre-response-psd}
\end{equation}
Theorem~\ref{thm:residual-mechanism} gives
\begin{equation}
  \frac{\overline M_s^{A,z}}{L^*}
  \preceq \alpha_s I_d-\beta_s\gamma_sR_A(z).
  \label{eq:robust-pre-response-linear-bound}
\end{equation}
Minimizing the lower eigenvalue yields the uniform positive-branch slack.
The zero branch uses a separate support-saturation argument: a null direction
of $R_A(z)$ forces saturated augmented leverage on every active row, and the
resulting mutually exclusive omission events give spectral contact at the
universal coefficient.  Conversely, spectral contact forces a zero Rayleigh
quotient.  The determinant support calculation and common witness are in
Appendix~\ref{app:proof-robust-pre-response-geometry}.

\paragraph{Feature-only complement phase test.}
The definition of $\nu_A$ is a residual-sphere optimization.  Its sign,
however, has an exact feature-only characterization on the same real
Parseval no-coloop domain.  Put
\[
  P_\perp=I_m-AA^\top,\qquad c_i=(P_\perp)_{ii}=1-\ell_i,
\]
and define
\begin{align*}
  \rho_\perp(A)&=\max_{i\ne j}
    \frac{\lvert(P_\perp)_{ij}\rvert}{\sqrt{c_ic_j}},
  &\kappa_\perp(A)&=\min_{k\in[m]}
    \lambda_{\min}\left(\sum_{i\ne k}c_i a_i a_i^\top\right),\\
  t_\perp(A)&=(1-\rho_\perp(A))\kappa_\perp(A).
\end{align*}
Here $P_\perp$ is the classical Naimark-complement Gram matrix
\citep{casazza2013naimark}.  The leave-one-out factor below is a particular
weighted lower-frame quantity of erasure-stability type
\citep{fickus2012nerf}.  Coupling these objects to the residual margin gives
the following phase test and certificate.

\begin{derivedproposition}[Feature-only test for the Theorem~\ref{thm:robust-pre-response-phase} margin]
\label{prop:global-complement-certificate}
Under the assumptions of Theorem~\ref{thm:robust-pre-response-phase},
\begin{equation}
  \nu_A=0
  \quad\Longleftrightarrow\quad
  \rho_\perp(A)=1
  \quad\Longleftrightarrow\quad
  \det\bigl((P_\perp)_{\{i,j\},\{i,j\}}\bigr)=0
  \quad\text{for some }i\ne j.
  \label{eq:global-complement-zero-equivalence}
\end{equation}
Moreover,
\begin{equation}
  \kappa_\perp(A)>0,
  \qquad
  0\le t_\perp(A)\le\nu_A.
  \label{eq:global-complement-certificate}
\end{equation}
\end{derivedproposition}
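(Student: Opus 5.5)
The plan is to use the Naimark-complement geometry of the residual sphere. Since $\ker(A^\top)=\operatorname{range}(P_\perp)$, every $z\in\mathcal Z_A$ satisfies $z=P_\perp z$. Define unit vectors $g_i=P_\perp e_i/\sqrt{c_i}$; they are well defined because the no-coloop hypothesis \eqref{eq:robust-pre-response-no-coloop} gives $c_i=1-\ell_i>0$. Then $z_i=\sqrt{c_i}\,\langle z,g_i\rangle$. Cauchy--Schwarz gives $z_i^2\le c_i$, with equality iff $z=\pm g_i$. Writing $x_i=\langle z,g_i\rangle$, this yields the weighted form
\[
R_A(z)=\sum_i c_i(1-x_i^2)\,a_ia_i^\top .
\]
Note also that $\langle g_i,g_j\rangle=(P_\perp)_{ij}/\sqrt{c_ic_j}$, so $\rho_\perp(A)=\max_{i\ne j}|\langle g_i,g_j\rangle|\le1$. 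The $2\times2$ minor equals $c_ic_j-(P_\perp)_{ij}^2$. It therefore vanishes iff $|\langle g_i,g_j\rangle|=1$, which gives the second equivalence in \eqref{eq:global-complement-zero-equivalence} immediately.

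\textbf{Step 1: $\nu_A=0\Rightarrow\rho_\perp(A)=1$.} Suppose $v^\top R_A(z)v=0$ with $\|v\|=1$. Then every active row, i.e.\ every $i$ with $a_i^\top v\ne0$, is saturated: $z=\pm g_i$. Since $\sum_i a_i(a_i^\top v)=v$, there is at least one active row. If there were exactly one, say $i$, then $v=(a_i^\top v)a_i$, and applying $a_i^\top$ gives $\ell_i=1$, a coloop, which is excluded. So there are two active rows $i\ne j$ with $g_i=\pm g_j$, hence $\rho_\perp=1$.

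\textbf{Step 2: $\rho_\perp(A)=1\Rightarrow\nu_A=0$.} Conversely, if $g_i=\pm g_j$, then some combination $c\,e_i+c'e_j$ with $(c,c')\neq(0,0)$ lies in $\ker P_\perp=\operatorname{range}(A)$. Write it as $Au$. We have $u\ne0$, because $P_\perp e_i\neq0$. Rows $k\notin\{i,j\}$ satisfy $a_k^\top u=0$. Now take $z=g_i$, which makes rows $i$ and $j$ saturated, and $v=u$. Then $u^\top R_A(z)u=0$, so $\nu_A\le0$. Combined with $\nu_A\ge0$ from \eqref{eq:robust-pre-response-psd}, this gives $\nu_A=0$.

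\textbf{Step 3: $\kappa_\perp(A)>0$.} For each $k$ we have $\sum_{i\ne k}a_ia_i^\top=I_d-a_ka_k^\top\succ0$ because $\ell_k<1$. Since every weight satisfies $c_i>0$, it follows that $\sum_{i\ne k}c_ia_ia_i^\top\succeq(\min_i c_i)(I_d-a_ka_k^\top)\succ0$. Taking the minimum over the finitely many $k$ gives $\kappa_\perp(A)>0$. Also $t_\perp(A)\ge0$ because $\rho_\perp(A)\le1$.

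\textbf{Step 4 (main obstacle): $t_\perp(A)\le\nu_A$.} The target inequality is
\[
\sum_i c_i(1-x_i^2)(a_i^\top v)^2\ \ge\ (1-\rho)\min_k\sum_{i\ne k}c_i(a_i^\top v)^2
\]
for unit $z\in\operatorname{range}(P_\perp)$, where $\rho=\rho_\perp(A)$. The natural first attempt is to let $k$ maximize $x_k^2$ and show $x_i^2\le\rho$ for all $i\ne k$. However, the pairwise bound $x_i^2+x_j^2\le1+\rho$, which comes from the $2\times2$ Gram matrix of $(g_i,g_j)$, alone only gives $(1+\rho)/2$. So I would instead keep the $k$-th term and redistribute its slack. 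Decompose $z=x_kg_k+\sqrt{1-x_k^2}\,w$ with $w\perp g_k$ a unit vector. Then for each $i\ne k$,
\[
|x_i|\le|x_k|\rho+\sqrt{1-x_k^2}\sqrt{1-\langle g_i,g_k\rangle^2}.
\]
This bounds $1-x_i^2$ from below in terms of $1-x_k^2$ and $1-\rho$. The remaining deficit would then be absorbed by the term $c_k(1-x_k^2)(a_k^\top v)^2$, or by choosing a different $k$ for the minimum.

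If this scalar redistribution fails to produce exactly the constant $1-\rho$, the fallback is a matrix argument. Write $\sum_i c_i x_i^2 a_ia_i^\top=A^\top\operatorname{diag}(P_\perp z)^2A$ and compare it with the Gram matrix of the $g_i$ via a Schur-product bound. This is the step I would work on first and expect to be hardest.
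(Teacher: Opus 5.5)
Your Steps 1--3 and the pair-minor equivalence are correct and follow the paper's argument. Your $g_i=P_\perp e_i/\sqrt{c_i}$ are the paper's normalized complement rows $u_i$ written in $\R^m$. Step 1 is the same saturation argument. Step 2 uses the same witness $\sqrt{c_j}e_i\mp\sqrt{c_i}e_j\in\operatorname{col}(A)$ with $z=g_i$; there, $u\neq0$ simply because $Au=ce_i+c'e_j\neq0$.

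\textbf{The gap is Step 4.} It carries the whole quantitative content of $t_\perp(A)\le\nu_A$, and it is not proved. Moreover, the scalar route you sketch cannot prove it. Bounding each $x_i^2$, $i\ne k$, separately from the top coordinate $x_k$ uses only pairwise information, and that information stops at $x_i^2\le(1+\rho)/2$. For equiangular $g_i$, your inequality with $x_k^2=(1+\rho)/2$ allows $x_i^2=(1+\rho)/2$ for every $i$. Write $w_i=c_i(a_i^\top v)^2$, $W=\sum_iw_i$, $w_\star=\max_iw_i$, so $v^\top R_A(z)v=W-\sum_iw_ix_i^2$. Your inequalities, even keeping the $k$th term $w_k(1-x_k^2)$, then cannot exclude $v^\top R_A(z)v=\tfrac{1-\rho}{2}W$. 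That falls short of the target $(1-\rho)(W-w_\star)$ whenever $w_\star<W/2$. In general the best the pairwise route gives is $\nu_A\ge t_\perp(A)/2$, off by a factor of two. Re-choosing $k$ does not help, since $\min_k\sum_{i\ne k}w_i=W-w_\star$ regardless. The index to single out is the one maximizing the weight $w_k$, not the residual coordinate $x_k^2$.

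\textbf{The missing idea} is a collective bound on all $x_i$ at once. Fix $v$, so the $w_i$ are constants, and write $\sum_iw_ix_i^2=z^\top\bigl(\sum_iw_ig_ig_i^\top\bigr)z\le\lambda_{\max}\bigl(\sum_iw_ig_ig_i^\top\bigr)$. Let $D=\operatorname{diag}(w)$ and $\Gamma_{ij}=\langle g_i,g_j\rangle$. This weighted frame operator has the same nonzero spectrum as $D^{1/2}\Gamma D^{1/2}$. For unit $y\in\R^m$, use $|\Gamma_{ij}|\le\rho$ off the diagonal and then Cauchy--Schwarz:
\[
y^\top D^{1/2}\Gamma D^{1/2}y\le(1-\rho)\sum_iw_iy_i^2+\rho\Bigl(\sum_i\sqrt{w_i}\,|y_i|\Bigr)^2\le(1-\rho)w_\star+\rho W.
\]
Let $k_\star$ maximize $w_i$. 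Then
\[
v^\top R_A(z)v\ge(1-\rho)(W-w_\star)=(1-\rho)\sum_{i\ne k_\star}c_i(a_i^\top v)^2\ge(1-\rho)\kappa_\perp(A).
\]
This is exactly your target inequality and the paper's weighted Gram estimate. Your ``Schur-product'' fallback points in this direction, but as written it does not supply this bound.
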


Combined with Theorem~\ref{thm:robust-pre-response-phase}, the proposition
makes the zero/contact branch feature-testable: some compatible residual makes
directional spectral contact at every strict-interior budget if and only if a
pair-minor vanishes.  This test also has a direct row-deletion reading.  If
$J=\{i,j\}$ and $A_{-J}$ retains the indexed rows outside $J$, then
Parsevalness gives
\begin{equation}
 \det\bigl((P_\perp)_{J,J}\bigr)
 =\det\bigl(A_{-J}^\top A_{-J}\bigr).
 \label{eq:global-complement-rank-deletion-main}
\end{equation}
Thus the zero/contact branch occurs exactly when deleting some two indexed
whitened rows lowers column rank.  This is an interpretation of the derived
feature-only test, not a new sampling or matroid claim.

The pair-minor test determines the exact phase sign, while $t_\perp$ supplies
the one-sided slack certificate below; neither determines the positive
magnitude of $\nu_A$.  For exact rational or algebraic input, the pair-minor
test can be evaluated exactly; near a collision, ordinary floating-point
arithmetic cannot reliably distinguish a vanishing pair-minor from a small
nonzero one.  The complete complement-coordinate proof is in
Appendix~\ref{app:global-complement-certificate}.  Substituting $t_\perp$
in Theorem~\ref{thm:robust-pre-response-phase} gives
\begin{equation}
  \alpha_s-\mathsf T_{\mathrm{spec}}^{\mathrm{RU}}(A,s)
  \ge\beta_s\gamma_s t_\perp(A)
  \qquad(d<s<m).
  \label{eq:global-complement-covariance-consequence}
\end{equation}

\section{A concrete boundary at critical equal leverage}
\label{sec:critical-geometry}

The global complement phase test has an explicit geometric interpretation at
critical redundancy.  Suppose $m=2d$ and every whitened row has leverage
$\ell_i=1/2$.  With $P=AA^\top$ and $P_\perp=I_{2d}-P$, each compatible
residual gives
\begin{equation}
  Q_z=P_\perp-zz^\top,\qquad
  R_A(z)=A^\top\operatorname{Diag}(\operatorname{diag}Q_z)A,
  \label{eq:critical-fixed-complement}
\end{equation}
with residual-coupled weights
\begin{equation}
  (\operatorname{diag}Q_z)_i=\frac12-z_i^2.
  \label{eq:critical-fractional-weights}
\end{equation}
These are neither freely chosen frame weights nor a binary erasure mask.
Define the normalized coherence
\begin{equation}
  \rho_A=\max_{i\ne j}2\lvert a_i^\top a_j\rvert.
  \label{eq:critical-coherence}
\end{equation}
Since $(P_\perp)_{ij}=-a_i^\top a_j$ off the diagonal and
$(P_\perp)_{ii}=1/2$, the global quantities in
Proposition~\ref{prop:global-complement-certificate} reduce to
\begin{equation}
  \rho_\perp(A)=\rho_A,\qquad
  \kappa_\perp(A)=\frac14,\qquad
  t_\perp(A)=\frac{1-\rho_A}{4}.
  \label{eq:critical-global-reduction}
\end{equation}
The theorem below retains what is special to the critical class: an explicit
original-row zero locus and a complementary upper bound on $\nu_A$.

\begin{theorem}[Critical residual geometry]
\label{thm:critical-geometry}
Let $d\ge2$ and let $A\in\R^{2d\times d}$ have orthonormal columns and
$\lVert a_i\rVert_2^2=1/2$ for every $i$.  Then
\begin{equation}
  \boxed{
  \frac{1-\rho_A}{4}
  \le \nu_A
  \le \frac{(1-\rho_A)(3+\rho_A)}{8}.}
  \label{eq:critical-coherence-sandwich}
\end{equation}
Moreover, $\nu_A=0$ if and only if there are distinct indices $i,j$, a unit
vector $v\in\R^d$, and
signs $\sigma_i,\sigma_j\in\{\pm1\}$ such that
\begin{equation}
  a_i=\frac{\sigma_i}{\sqrt2}v,\qquad
  a_j=\frac{\sigma_j}{\sqrt2}v,\qquad
  a_k^\top v=0\quad(k\notin\{i,j\}).
  \label{eq:critical-zero-locus}
\end{equation}
Hence the zero boundary is a repeated projective pair with an orthogonal
remainder, while~\eqref{eq:critical-coherence-sandwich} controls the margin
away from that boundary.
\end{theorem}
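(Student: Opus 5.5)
The plan is to get the lower bound and the zero locus from Proposition~\ref{prop:global-complement-certificate}, using the critical reduction~\eqref{eq:critical-global-reduction}. The upper bound will come from an explicit residual on a coherence-attaining pair together with an explicit test direction.

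\emph{Lower bound.} Under the hypotheses, $A$ is a real Parseval design. It has no coloop because $\ell_i=1/2<1$, and $m=2d\ge d+2$ because $d\ge2$. So Proposition~\ref{prop:global-complement-certificate} applies and gives $t_\perp(A)\le\nu_A$. By~\eqref{eq:critical-global-reduction}, $t_\perp(A)=(1-\rho_A)/4$, which is the left inequality of~\eqref{eq:critical-coherence-sandwich}. The one thing to verify in that reduction is $\kappa_\perp(A)=1/4$. Since $c_i=1/2$, we have $\sum_{i\ne k}c_ia_ia_i^\top=\tfrac12(I_d-a_ka_k^\top)$, and its least eigenvalue is $\tfrac12(1-\tfrac12)=\tfrac14$ for every $k$ (using $d\ge2$).

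\emph{Upper bound.} Fix $i\ne j$ attaining $\rho_A$, and choose $\tau\in\{\pm1\}$ with $-\tau a_i^\top a_j=\rho_A/2$. Set $w=e_i+\tau e_j$ and $z=P_\perp w/\lVert P_\perp w\rVert_2$. Then $z\in\mathcal Z_A$, and
\[
\lVert P_\perp w\rVert_2^2=w^\top P_\perp w=1+\rho_A>0 .
\]
Since $(P_\perp)_{ii}=1/2$ and $(P_\perp)_{ij}=-a_i^\top a_j$, a direct computation gives $z_i^2=z_j^2=(1+\rho_A)/4$. Hence the weights in~\eqref{eq:critical-fractional-weights} are $(1-\rho_A)/4$ at $i$ and $j$, and at most $1/2$ elsewhere.

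For the test direction, take the unit vector $u=(a_i-\tau a_j)/\sqrt{1+\rho_A}$. Then $(a_i^\top u)^2=(a_j^\top u)^2=(1+\rho_A)/4$. Parseval gives $\sum_k(a_k^\top u)^2=1$, so the rows other than $i,j$ carry total mass $(1-\rho_A)/2$. Therefore
\[
u^\top R_A(z)u\le\frac{1-\rho_A}{4}\cdot\frac{1+\rho_A}{2}+\frac12\cdot\frac{1-\rho_A}{2}
=\frac{(1-\rho_A)(3+\rho_A)}{8},
\]
which bounds $\nu_A\le\lambda_{\min}(R_A(z))$. The step I expect to need the most care is the sign bookkeeping for $\tau$, making sure the same sign simultaneously maximizes $z_i^2,z_j^2$ and the mass of $u$ on the pair. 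It is short but error-prone.

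\emph{Zero locus.} By Proposition~\ref{prop:global-complement-certificate} and $\rho_\perp=\rho_A$, we have $\nu_A=0$ iff $2\lvert a_i^\top a_j\rvert=1$ for some $i\ne j$. Since $\lVert a_i\rVert_2=\lVert a_j\rVert_2=1/\sqrt2$, Cauchy--Schwarz equality forces $a_i=\pm a_j$, so we can write $a_i=\sigma_iv/\sqrt2$ and $a_j=\sigma_jv/\sqrt2$ with $v$ a unit vector. Parseval then gives
\[
1=v^\top\Bigl(\sum_ka_ka_k^\top\Bigr)v=\tfrac12+\tfrac12+\sum_{k\notin\{i,j\}}(a_k^\top v)^2,
\]
so $a_k^\top v=0$ for $k\notin\{i,j\}$, which is~\eqref{eq:critical-zero-locus}. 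Conversely,~\eqref{eq:critical-zero-locus} gives $2\lvert a_i^\top a_j\rvert=1$, hence $\rho_A=1$ and $\nu_A=0$. The upper bound in~\eqref{eq:critical-coherence-sandwich} also confirms this direction independently.
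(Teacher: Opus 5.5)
Your proof is correct. The upper bound matches the paper's, but the lower bound and the zero locus follow a different route.

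\textbf{Your route.} You import Proposition~\ref{prop:global-complement-certificate} and specialize it through $\rho_\perp(A)=\rho_A$ and $\kappa_\perp(A)=1/4$, both immediate from $c_i=1/2$ and $(P_\perp)_{ij}=-a_i^\top a_j$. The left inequality is then just $t_\perp(A)\le\nu_A$. The zero locus becomes $\rho_A=1$, followed by Cauchy--Schwarz and Parseval. This is legitimate: $m=2d\ge d+2$ and $\ell_i=1/2<1$ put you in the proposition's domain, and its appendix proof never uses the critical theorem, so nothing is circular.

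\textbf{The paper's route.} It works directly inside the critical class and does not invoke the proposition. It writes $v^\top R_A(z)v=\tfrac12-\tfrac12\sum_ip_ix_i$ with $p_i=z_i^2\in[0,1/2]$, $\sum_ip_i=1$, and $x_i=2(a_i^\top v)^2$. It maximizes over this capped simplex by putting mass $1/2$ on the two largest $x_i$. It then bounds $x_i+x_j\le1+\rho_A$ by the top eigenvalue of $u_iu_i^\top+u_ju_j^\top$, where $u_i=\sqrt2a_i$.

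For the zero locus, the paper takes a null pair $(z,v)$. This forces $z_i^2=1/2$ on every row not orthogonal to $v$. Because $z$ has unit norm, exactly two such rows are allowed, and Parseval with Cauchy--Schwarz then yields the repeated pair with an orthogonal remainder.

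\textbf{What each buys.} Your argument is shorter and makes the critical lower bound an explicit corollary of the global certificate. The paper notes the reduction~\eqref{eq:critical-global-reduction} but still proves the critical bounds independently. The price is reliance on the Naimark-coordinate machinery and the weighted Gram inequality. The paper's argument is elementary and self-contained in the critical class, and it shows both sides of the sandwich as the same capped-simplex optimization.

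\textbf{Upper bound.} Your witness is exactly the paper's: your $\tau$ is its $-\epsilon$, your $w/\sqrt2$ is its $b$, and your $u$ is its $v$. You bound the off-pair weights by $1/2$ and use the Parseval mass $(1-\rho_A)/2$, while the paper drops the nonnegative off-pair terms $p_kx_k$; these are the same estimate. Your sign bookkeeping is right.

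One small point: the qualifier ``using $d\ge2$'' is unnecessary. The identity $\lambda_{\min}\bigl(\tfrac12(I_d-a_ka_k^\top)\bigr)=\tfrac14$ holds for every $d\ge1$.
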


This specialization makes the global sign test geometrically visible:
projectively separated rows have positive uniform slack, whereas the
repeated-pair boundary admits the common spectral-contact residual from
Theorem~\ref{thm:robust-pre-response-phase}.  Its upper bound supplies
additional information not provided by the global lower certificate.
Figure~\ref{fig:conceptual}(b) records this critical-class interpretation.

\paragraph{Proof idea and relation to prior work.}
Every summand in
$R_A(z)=\sum_i(1/2-z_i^2)a_i a_i^\top$ is positive semidefinite.  A zero
Rayleigh quotient forces every row visible in the null direction to saturate
$z_i^2=1/2$; unit norm allows exactly two such coordinates, and Parsevalness
forces the repeated pair and orthogonal remainder.  The coherence bounds use
the corresponding capped-simplex energies.  The complete argument and sharp
witness are in Appendix~\ref{app:critical-geometry}.

The projective-duplicate boundary has classical binary two-erasure ancestry
in Parseval-frame theory~\citep{bodmann2005frames}.  Its complement-Gram
interpretation is classical Naimark-complement background
\citep{casazza2013naimark}.  The functional here is different: one fixed
Naimark complement, one removed residual direction, and coupled fractional
weights.  Coherence therefore bounds $\nu_A$ from two sides but does not
compute it in general.

\paragraph{Exact boundary continuation.}
A prespecified five-point rational diagnostic illustrates the two T3 phase
branches while dimensions, leverage profile, sampler, estimator, and
strict-interior budgets remain fixed.  Exact subset enumeration with the
fixed compatible residual $z_\tau=J_\tau e_1$ recovers spectral contact at
the repeated-pair endpoint for all three interior budgets.  At each declared
positive grid point, the minimum complement pair minor and $t_\perp$ are
positive.  For the primary $s=7$ cell, the theorem/certificate lower bound
and the fixed-residual covariance witness give a deterministic bracket for
the response-uniform gap.  The full construction and all-budget receipt are
in Appendix~\ref{app:boundary-path-diagnostic}.

\begin{figure}[t]
\centering
\includegraphics[width=\textwidth]{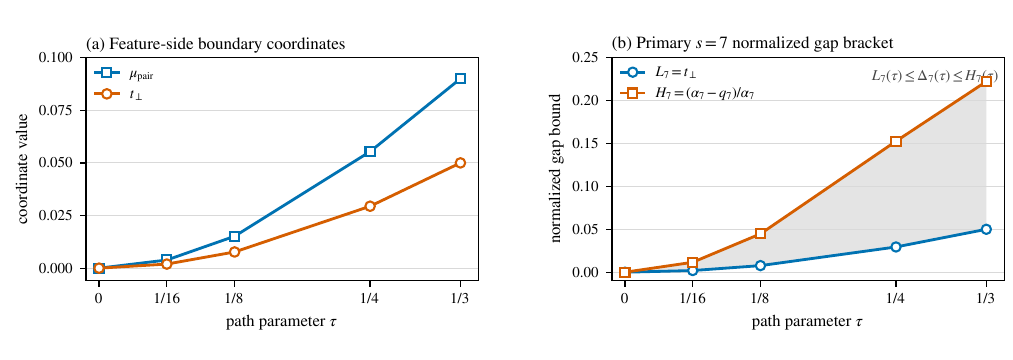}
\caption{\textbf{Exact five-point diagnostic on one prespecified critical
path.} Left: the minimum complement pair minor and the one-sided certificate
vanish at the repeated-pair endpoint and are positive at the four declared
rotated points. Right: for the fixed compatible residual $z_\tau=J_\tau e_1$,
the phase-theorem and complement-certificate lower bound and exact enumerated
covariance give $L_7(\tau)\le\Delta_7(\tau)\le H_7(\tau)$, where
$\Delta_7(\tau)=(\alpha_7-\mathsf T_{\rm spec}^{\rm RU}(A_\tau,7))/\alpha_7$,
$L_7=t_\perp(A_\tau)$, and $H_7=1-q_7(\tau)/\alpha_7$.  The shaded band is a
deterministic theorem-and-enumeration bracket, not a computation of
$\mathsf T_{\rm spec}^{\rm RU}$;
Appendix~\ref{app:boundary-path-diagnostic} reports all exact cells and
support conventions.}
\label{fig:boundary-continuation}
\end{figure}

\section{Certificates and a fixed-query consequence}
\label{sec:pre-response-certificate}

The positive branch can be certified without computing $\nu_A$: a sound
pre-response certificate is any feature-only scalar $t(A)$ with
\begin{equation}
  0\le t(A)\le\nu_A.
  \label{eq:pre-response-certificate-interface}
\end{equation}
On its real Parseval no-coloop domain, the global complement quantity
$t_\perp$ is a special case with the exact sign interpretation
$t_\perp=0$ if and only if $\nu_A=0$; other sufficient certificates below
need not have this converse.

\begin{corollary}[Sound computable pre-response certificates]
\label{cor:pre-response-certificate}
For every full-column-rank $X$ with $m>d$, the general spectral certificate
$c_X$ defined in Appendix~\ref{app:pre-response-certificate} satisfies
$0\le c_X\le\nu_A$.  Under the no-coloop assumptions of
Theorem~\ref{thm:robust-pre-response-phase}, the global complement
certificate $t_\perp(A)$ of
Proposition~\ref{prop:global-complement-certificate} satisfies
$0\le t_\perp(A)\le\nu_A$; under the assumptions of
Theorem~\ref{thm:critical-geometry}, the critical-coherence certificate
$t_\rho(A)$ satisfies $0\le t_\rho(A)\le\nu_A$.  More generally, every
verified $t(A)$ satisfying~\eqref{eq:pre-response-certificate-interface}
obeys, on the no-coloop positive-loss strict-interior domain of
Theorem~\ref{thm:robust-pre-response-phase},
\begin{equation}
  \boxed{
  \alpha_s-\mathsf T_{\mathrm{spec}}^{\mathrm{RU}}(A,s)
  \ge \beta_s\gamma_s t(A).}
  \label{eq:pre-response-certified-slack}
\end{equation}
\end{corollary}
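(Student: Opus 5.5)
The plan is to assemble the corollary from results already stated, treating each certificate separately and then the general transfer inequality. For the transfer statement~\eqref{eq:pre-response-certified-slack}, I would invoke the boxed slack bound~\eqref{eq:robust-pre-response-slack} of Theorem~\ref{thm:robust-pre-response-phase}. On the strict interior $d<s<m$ with $m\ge d+2$, the constants $\beta_s=(s-d)/(m-d)$ and $\gamma_s=(m-s)/(m-d-1)$ are both strictly positive. Hence $t(A)\le\nu_A$ gives $\beta_s\gamma_s t(A)\le\beta_s\gamma_s\nu_A\le\alpha_s-\mathsf T_{\mathrm{spec}}^{\mathrm{RU}}(A,s)$. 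The lower condition $t(A)\ge0$ is used only so that the certified right-hand side is a genuine nonnegative slack. This step is purely monotone and holds for every $\theta$ and every $L^*>0$, because the target is already normalized and independent of both.

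For the two named Parseval certificates, I would quote earlier results directly. The bound $0\le t_\perp(A)\le\nu_A$ is exactly~\eqref{eq:global-complement-certificate} of Proposition~\ref{prop:global-complement-certificate}, under the same no-coloop hypotheses. For the critical class, I would take $t_\rho(A)=(1-\rho_A)/4$, which coincides with $t_\perp(A)$ by~\eqref{eq:critical-global-reduction}. Its upper bound by $\nu_A$ is then the left inequality of~\eqref{eq:critical-coherence-sandwich} in Theorem~\ref{thm:critical-geometry}. Its nonnegativity follows from $\rho_A\le1$, which holds because $2|a_i^\top a_j|\le 2\lVert a_i\rVert\lVert a_j\rVert=1$ by Cauchy--Schwarz.

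The general spectral certificate $c_X$ is the only genuinely new piece, and I expect it to be the main obstacle, since it must hold for arbitrary full-column-rank $X$, coloops included. I would whiten to $A=XG^{-1/2}$ and prove a uniform lower bound $R_A(z)\succeq c_X I_d$ for all $z\in\mathcal Z_A$. The key pointwise fact is $z_i^2\le (P_\perp)_{ii}=1-\ell_i$. This holds because $z=P_\perp z$ with $\lVert z\rVert=1$, so $z_i^2=(e_i^\top P_\perp z)^2\le e_i^\top P_\perp e_i$. It makes every weight $1-\ell_i-z_i^2$ nonnegative, which gives $\nu_A\ge0$ in full generality; coloop rows simply contribute zero weight.

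To get a quantitative floor, I would combine this cap with $\sum_i z_i^2=1$. The adversary can spend unit mass on the weights, each capped at $c_i=1-\ell_i$, so the worst case is a capped-simplex removal. A crude but sound bound is
\[
c_X=\max\Bigl\{0,\ \min_{k}\lambda_{\min}\Bigl(\sum_{i\ne k}c_i a_ia_i^\top\Bigr)-\max_{i}\lVert a_i\rVert_2^2\cdot\bigl(1-\max_i c_i\bigr)\Bigr\}.
\]
More simply, one can take $\lambda_{\min}(\sum_i c_ia_ia_i^\top)-\max_i\ell_i$, clipped at zero. Whichever form the appendix uses, I would verify $c_X\le\lambda_{\min}(R_A(z))$ for each fixed $z$ by Weyl's inequality, subtracting the perturbation $\sum_i z_i^2a_ia_i^\top\preceq(\max_i\ell_i)I_d$. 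I would then minimize over $z$. The clipping at zero keeps $0\le c_X$, and the previous step guarantees that clipping never exceeds $\nu_A$.
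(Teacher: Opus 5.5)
\textbf{Gap in the $c_X$ part.} Your transfer step and your handling of $t_\perp$ and $t_\rho=(1-\rho_A)/4$ match what the paper does. The transfer step is monotonicity through $\beta_s\gamma_s>0$ applied to the boxed slack bound of Theorem~\ref{thm:robust-pre-response-phase}. The problem is $c_X$. The paper defines $c_X=[\lambda_{\min}(R_0)-\tau_X]_+$, where $R_0=\sum_i(1-\ell_i)a_ia_i^\top$, $\tau_X=\lambda_{\max}(N^\top D_\ell N)$, $N$ is an orthonormal basis of $\ker(A^\top)$, and $D_\ell=\operatorname{diag}(\ell)$.

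Your Weyl step bounds $T(z)=\sum_i z_i^2a_ia_i^\top$ by $(\max_i\ell_i)I_d$, using only $\sum_i z_i^2=1$. That certifies the weaker quantity $[\lambda_{\min}(R_0)-\max_i\ell_i]_+$, not $c_X$, because $\tau_X$ can sit far below $\max_i\ell_i$. For example, take $d=1$, $m=3$, $\ell=(4/5,1/10,1/10)$. Then $\lambda_{\min}(R_0)=17/50$ and $\tau_X=6/25$, so $c_X=\nu_A=1/10$, while your bound returns $0$.

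The missing idea is to keep the constraint $z\in\ker(A^\top)$ inside the perturbation bound:
\begin{itemize}
\item First, $v^\top T(z)v\le\lVert v\rVert_2^2\sum_i\ell_iz_i^2$.
\item Writing $z=N\xi$ with $\lVert\xi\rVert_2=1$ gives $\sum_i\ell_iz_i^2=\xi^\top N^\top D_\ell N\xi\le\tau_X$.
\item Hence $R_A(z)\succeq(\lambda_{\min}(R_0)-\tau_X)I_d$.
\item Your cap $z_i^2\le1-\ell_i$ correctly gives $R_A(z)\succeq0$. Combining the two yields $R_A(z)\succeq c_XI_d$ for every admissible $z$, and minimizing over $z$ gives $0\le c_X\le\nu_A$.
\end{itemize}

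\textbf{Your first candidate is not sound.} The expression $\max\{0,\kappa_\perp(A)-\max_i\ell_i\,(1-\max_ic_i)\}$, which you call ``crude but sound,'' can exceed $\nu_A$, and the Weyl verification you describe does not apply to it. Take $d=1$, $m=3$, $a=(2/3,2/3,1/3)^\top$. Then $\kappa_\perp=28/81$ and $\max_i\ell_i\,(1-\max_ic_i)=4/81$, so the candidate equals $8/27$. But $z=(1,-1,0)^\top/\sqrt2\in\ker(a^\top)$ gives $R_A(z)=4/27$, so $\nu_A\le4/27$.

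Your capped-simplex intuition is right, but the adversarial residual need not saturate any single row. Here it splits its unit mass over two rows whose caps are slightly above $1/2$. One leave-one-out deletion plus a $\max_i\ell_i\cdot\min_i\ell_i$ correction cannot control that.
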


Thus $t(A)>0$ certifies strictness uniformly over every compatible residual;
for a generic sufficient test, zero, failure, or abstention is inconclusive.
The certificate constructions and fixed-profile illustration are in
Appendix~\ref{app:pre-response-certificate}.

\subsection{Phase-aware fixed-query variance}
\label{sec:task-conditioned-risk}

The phase controls coefficient variation, not task fit.  Its bounded
linear-readout consequence becomes explicit on a fixed external query pool.
Let $Z\in\R^{q\times d}$ contain the query features and let $y_Q\in\R^q$
contain fixed scalar targets, both fixed before the subset draw.  Define
\begin{equation}
  F^Q=\lVert Zw^*-y_Q\rVert_2^2,
  \qquad
  V_s^Q=\tr(ZM_sZ^\top),
  \qquad
  \kappa^Q=\tr(ZG^{-1}Z^\top).
  \label{eq:query-risk-terms}
\end{equation}
Here $F^Q$ is the squared loss of the full-training-pool head on the query
pool, while $V_s^Q$ is the additional expected loss due only to subset
refitting.

\begin{corollary}[Phase-aware fixed-query squared loss]
\label{cor:task-conditioned-query-risk}
Assume the no-coloop positive-loss strict-interior domain of
Theorem~\ref{thm:robust-pre-response-phase}, and let a verified feature-only
certificate satisfy $0\le t(A)\le\nu_A$.  With $(Z,y_Q)$ fixed before the
subset draw,
\begin{equation}
\begin{split}
  R^Q(s)&:=\E_X\lVert Zw_S-y_Q\rVert_2^2=F^Q+V_s^Q,\\
  V_s^Q&\le
  \bigl(\alpha_s-\beta_s\gamma_s t(A)\bigr)
  \lossstar\kappa^Q.
  \label{eq:query-risk-decomposition}
\end{split}
\end{equation}
Thus a positive certificate tightens the certified subset-refit variance term
relative to the universal envelope.
\end{corollary}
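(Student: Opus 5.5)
The plan is to split $R^Q(s)$ into a bias part and a variance part using the unbiasedness in Theorem~\ref{thm:main-envelope}. I would then bound the variance part by transporting the response-uniform spectral slack of Theorem~\ref{thm:robust-pre-response-phase} from whitened coordinates back to the original coordinates.

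\emph{Step 1 (decomposition).} Write $Zw_S-y_Q=(Zw^*-y_Q)+Z(w_S-w^*)$ and expand the square. The query pair $(Z,y_Q)$ is fixed before the draw, and Theorem~\ref{thm:main-envelope} gives $\E_Xw_S=w^*$, so the cross term $2(Zw^*-y_Q)^\top Z\,\E_X(w_S-w^*)$ vanishes. The remaining term is
\[
\E_X\lVert Z(w_S-w^*)\rVert_2^2=\tr\bigl(Z\,\E_X[(w_S-w^*)(w_S-w^*)^\top]Z^\top\bigr)=\tr(ZM_sZ^\top)=V_s^Q,
\]
which gives $R^Q(s)=F^Q+V_s^Q$.

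\emph{Step 2 (reduction to whitened coordinates).} This is the step I expect to need the most care. We have $X=AG^{1/2}$ with $A=XG^{-1/2}$, so $X_S=A_SG^{1/2}$ and $D_S(X)=\det(G)\,D_S(A)$. The volume law \eqref{eq:law} is therefore identical for $X$ and for $A$, with the same positive-volume support. Selected OLS is equivariant under this change: the fit for $(X,y)$ equals $G^{-1/2}$ applied to the fit for $(A,y)$. Taking $\theta=G^{1/2}w^*$ and $z=e/\sqrt{\lossstar}\in\mathcal Z_A$ gives $y=A\theta+\sqrt{\lossstar}\,z=y_z$. Consequently the whitened covariance $\overline M_s=G^{1/2}M_sG^{1/2}$ of the original problem equals $\overline M_s^{A,z}$ for this particular $z$. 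I would verify the equivariance at the level of each supported subset, so that the endpoint and support conventions of Section~\ref{sec:endpoints} are respected.

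\emph{Step 3 (spectral bound and congruence).} By the definition of $\mathsf T^{\mathrm{RU}}_{\mathrm{spec}}$ as a maximum over $\mathcal Z_A$, and by the slack inequality \eqref{eq:robust-pre-response-slack},
\[
\lambda_{\max}(\overline M_s)\le\lossstar\,\mathsf T^{\mathrm{RU}}_{\mathrm{spec}}(A,s)\le\lossstar(\alpha_s-\beta_s\gamma_s\nu_A)\le\lossstar(\alpha_s-\beta_s\gamma_s t(A)),
\]
where the last inequality uses $t(A)\le\nu_A$ and $\beta_s\gamma_s>0$ on the strict interior. Since $\overline M_s\succeq0$, this gives $\overline M_s\preceq c\,I_d$ with $c:=\lossstar(\alpha_s-\beta_s\gamma_s t(A))\ge0$.

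Conjugating by $G^{-1/2}$ yields $M_s\preceq c\,G^{-1}$. A further congruence by $Z$ gives $ZM_sZ^\top\preceq c\,ZG^{-1}Z^\top$. Taking traces, which is monotone in the Loewner order, gives $V_s^Q\le c\,\kappa^Q$, which is \eqref{eq:query-risk-decomposition}.
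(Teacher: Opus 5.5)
Your proposal is correct and follows essentially the paper's proof: unbiasedness gives $R^Q(s)=F^Q+V_s^Q$, and a whitened bound $\overline M_s\preceq(\alpha_s-\beta_s\gamma_s t(A))\lossstar I_d$ is pulled back by $G^{-1/2}$ and then by $Z$, with trace monotonicity finishing the argument. The only difference is routing: the paper applies the realized-residual linear bound \eqref{eq:robust-pre-response-linear-bound} directly with $R_A(z)\succeq\nu_A I_d\succeq t(A)I_d$, whereas you pass through $\mathsf T^{\mathrm{RU}}_{\mathrm{spec}}$ and \eqref{eq:robust-pre-response-slack}; your explicit equivariance check in Step 2 supplies the ``returning from whitened coordinates'' step that the paper leaves implicit.
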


This conclusion concerns only the conditional subset-refit variance term,
not $F^Q$ or the exact realized covariance; its proof, multi-output columnwise
extension, and cross-pipeline scope are in
Appendix~\ref{app:task-conditioned-risk}.

\section{Relation to prior work}
\label{sec:related}

Table~\ref{tab:prior-work-comparison-main} separates the objects most easily
conflated with our result.  Ordinary volume-sampling normalizers,
selected-OLS unbiasedness, and inverse-Gram identities are established
ingredients.  The closest fixed-response regression identities concern the
rank-size endpoint, whereas our first result supplies an all-budget centered
whitened coefficient-covariance envelope and our phase theorem classifies its
fixed-design equality quantifiers.  A known all-budget lower construction has
the same budget dependence for a different scalar loss object, not the
present Loewner target or its every/some phase.

\begin{table}[tbp]
\centering
\caption{Nearest theorem-level comparison.  ``Fixed-design phase''
means an equality classification for the present centered whitened
coefficient-covariance target, not for volume sampling as a whole.}
\label{tab:prior-work-comparison-main}
\begingroup
\footnotesize
\setlength{\tabcolsep}{3pt}
\renewcommand{\arraystretch}{1.06}
\begin{tabular}{@{}L{.14\linewidth}L{.16\linewidth}L{.16\linewidth}L{.23\linewidth}L{.25\linewidth}@{}}
\toprule
Work & Budget & Response / randomness & Target & Equality information \\
\midrule
D--W 2017/18
\citep{derezinski2017unbiased,derezinski2018reverse}
& all legal $s$: unbiasedness and inverse moments; $s=d$: loss and prediction-operator identity
& fixed $X,y$; subset draw
& inverse-Gram moments; endpoint loss and prediction-operator second moment
& endpoint identities; no all-budget every/some phase in these results \\
D--W--H 2018 \citep{derezinski2018leveraged}
& all-budget factor in a lower family
& fixed response; sampling
& scalar full-pool loss
& limiting obstruction, not a Loewner equality classification \\
Epperly 2026 v2 \citep{epperly2026adaptive}
& $s=d$ regression endpoint
& fixed response; selection
& scalar expected full-pool loss
& endpoint identity/optimality, not the present phase \\
This paper
& Theorem~\ref{thm:main-envelope}: all legal $s$; Theorem~\ref{thm:robust-pre-response-phase}: $d<s<m$
& fixed pool; Theorem~\ref{thm:robust-pre-response-phase} ranges over compatible residuals
& centered whitened coefficient covariance in Loewner/spectral order
& $\nu_A>0$: every residual strict; $\nu_A=0$: one common residual makes directional spectral contact \\
\bottomrule
\end{tabular}
\endgroup
\end{table}

For exact source locations, see the all-size identities and rank-size
regression results in \citet[Theorems~5--8]{derezinski2018reverse}, the
ordinary-law lower construction in
\citet[Theorem~1, Eq.~(2)]{derezinski2018leveraged}, and the rank-size
active-regression statement in
\citet[Theorem~3.2, Eq.~(3.4)]{epperly2026adaptive}.

The nearest 2017/18 works share the ordinary law and selected-OLS primitive;
the leveraged ordinary-law lower construction has related scalar-loss budget
dependence, while its positive method changes the sampling and fit
\citep{derezinski2018leveraged}.  Regularized volume sampling changes the law
and estimator to ridge~\citep{derezinski2018regularized}.  The AISTATS 2019
and JMLR 2022 volume-rescaled line changes to random-design sampling, and
minimax experimental design changes the randomized design and variance
criterion~\citep{derezinski2019bias,derezinski2022random,derezinski2019minimax}.
Other sketching, active-regression, and dependent-leverage methods likewise
change the sampling law, estimator, response model, or target
\citep{chi2022projector,niu2026debiasing,chen2019active,gittens2023reduced,shimizu2024improved}.
Our residual-augmented law is analysis-only.  The critical specialization
uses the binary-erasure geometry of uniform Parseval frames
\citep{bodmann2005frames} and Naimark-complement Gram geometry
\citep{casazza2013naimark}, but its residual-induced weights are coupled and
fractional.  The relation map and changed-design comparisons are given in
Appendix~\ref{app:prior-work-map}.

Classical Naimark-complement identities~\citep{casazza2013naimark} and
erasure-stability lower-frame quantities~\citep{fickus2012nerf} provide the
geometric primitives.  The derived complement proposition connects them to
the residual-coupled margin $\nu_A$ and, through
Theorem~\ref{thm:robust-pre-response-phase}, to the strict-interior
ordinary-volume covariance phase.  Appendix~\ref{app:prior-work-map} gives
the detailed object-level comparison.

\section{Scope and conclusion}
\label{sec:limitations}

Global sharpness does not settle fixed-design attainability.  The universal
envelope identifies a necessary class-level coefficient, while $\nu_A$
separates response-uniformly strict pools from pools admitting one compatible
spectral-contact residual at every strict-interior budget.  Residual
augmentation explains the positive branch, the critical specialization
exposes a concrete zero boundary, and verified lower certificates transfer
the structure to a conditional subset-refit variance term.

The phase result concerns the stated fixed-pool covariance law.  On its real
Parseval domain, the complement test determines the phase sign exactly but
not the positive magnitude of $\nu_A$; the fixed-query corollary likewise
concerns conditional variance rather than population performance.  General
margin computation and analogous boundaries for other samplers, estimators,
or regularization remain open.

\clearpage
\subsection*{Use of generative AI}
Generative AI tools assisted with literature organization, manuscript drafting
and revision, \LaTeX{} organization, code and compilation diagnostics, and
adversarial review of proof drafts.  AI outputs were not treated as evidence
for mathematical correctness, novelty, or empirical outcomes, and no data or
experiment results were generated by AI.  The author reviewed the resulting
theorem statements, proofs, citations, experimental artifacts, scope
constraints, and final manuscript and supporting artifacts; AI systems are
not authors, and the author takes responsibility for the final content.

\bibliographystyle{plainnat}
\bibliography{references}

\clearpage
\appendix
\clearpage
\section*{Proof support and supplementary material}

This appendix supplies the derivations used by the main result chain.
All subsets are unordered subsets of indexed rows.  A zero-volume set has
zero probability and receives no selected inverse or fit.  The proof sections
follow the result dependencies: ordinary law and covariance envelope,
residual augmentation, robust pre-response geometry, the global complement
certificate, critical geometry, and the response-uniform feature certificate.

\subsection*{Proof guide}
\phantomsection
\label{app:proof-map}

The six proof sections below contain the full result chain.
\begin{enumerate}[leftmargin=*,nosep,label=\arabic*.]
\item \textbf{Universal envelope.} The normalizer, support convention,
unbiasedness, Loewner envelope, class-level attainment, and strict-interior
slack underlying Theorem~\ref{thm:main-envelope} appear in
Appendix~\ref{sec:ue-universal-envelope}.
\item \textbf{Residual augmentation.} The exact interior transform and its
one-sided boundary resolvent extension underlying
Theorem~\ref{thm:residual-mechanism} appear in
Appendix~\ref{sec:residual-augmented-resolvent}.
\item \textbf{Fixed-design phase.} Compactness, uniform slack, the
zero-margin support calculation, and the converse equality argument for
Theorem~\ref{thm:robust-pre-response-phase} appear in
Appendix~\ref{app:proof-robust-pre-response-geometry}.
\item \textbf{Feature-only complement test.} The pair-minor equivalence,
signed complement witness, and lower certificate for
Proposition~\ref{prop:global-complement-certificate} appear in
Appendix~\ref{app:global-complement-certificate}.
\item \textbf{Critical geometry.} The repeated-projective-pair zero locus,
coherence sandwich, and sharp witness for
Theorem~\ref{thm:critical-geometry} appear in
Appendix~\ref{app:critical-geometry}.
\item \textbf{Certificates and query consequence.} The sufficient
feature-only certificates, fixed-profile arithmetic, and fixed-query
consequence appear in Appendices~\ref{app:pre-response-certificate}
and~\ref{app:task-conditioned-risk}.
\end{enumerate}

Supplementary fixed-profile instantiation, matched arithmetic, calibration,
scope summaries, source comparison, and fixed-pool diagnostics follow the
proof chain.

\section{The universal fixed-pool covariance envelope}
\label{sec:ue-universal-envelope}

This section treats a fixed design and a fixed response.  Let
$X\in\R^{m\times d}$ have full column rank and let $y\in\R^m$ be
arbitrary.  Write
\begin{equation}
  G=X^\top X,
  \qquad w^*=G^{-1}X^\top y,
  \qquad e=y-Xw^*,
  \qquad \lossstar=\lVert e\rVert_2^2 .
  \label{eq:ue-full-fit}
\end{equation}
For an unordered set of row indices $S\subseteq[m]$, $|S|=s$, put
\begin{equation}
  G_S=X_S^\top X_S,
  \qquad D_S=\det(G_S).
  \label{eq:ue-subset-gram}
\end{equation}
The sampler below is ordinary, unrescaled, indexed, fixed-size, and without
replacement; it is followed by unweighted least squares.  Thus rows having
the same numerical value remain different indexed observations.  This is the
ordinary fixed-size volume law studied, in equivalent row or transposed
column notation, in~\citep{derezinski2018reverse,avron2013faster,li2017dual}.
Only sets with $D_S>0$ support an estimator, and on such sets
\begin{equation}
  w_S=G_S^{-1}X_S^\top y_S,
  \qquad L_S=\lVert X_Sw_S-y_S\rVert_2^2.
  \label{eq:ue-selected-fit}
\end{equation}
No fit is assigned to a zero-volume set.  There is no rescaling, ridge term,
importance weight, replacement, or random-response expectation in these
definitions; every expectation below is conditional on the displayed fixed
$X$ and $y$.

The endpoint branches will be used without taking a quotient.  If $m=d$,
then $s=m$, $X$ is invertible, $w_S=w^*=X^{-1}y$, and the covariance and both
residual losses are zero; no expression containing $(m-d)^{-1}$ is evaluated.
If $m>d$ and $s=m$, the unique set is the full pool and its centered second
moment is zero.  If $\lossstar=0$, then $y=Xw^*$ and every supported selected
fit equals $w^*$, so again the centered second moment is zero.  If $s=d<m$,
every supported square selected system interpolates; no rank-$(d+1)$
representation is used here.  These conventions also cover repeated rows
and zero-volume sets.

\subsection{Fixed-cardinality normalizer and theorem statement}
\label{subsec:ue-law}

\begin{proposition}[Fixed-cardinality normalizer]
\label{prop:ue-normalizer}
For $m>d$ and $d\le s\le m$,
\begin{equation}
  Z_{X,s}:=\sum_{|S|=s}D_S
  =\binom{m-d}{s-d}\det(G)>0,
  \qquad
  \PP_X(S)=\frac{D_S}{Z_{X,s}}.
  \label{eq:ue-normalizer}
\end{equation}
\end{proposition}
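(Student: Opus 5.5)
The plan is to expand each subset determinant with Cauchy--Binet and then count, for each $d$-subset of rows, how many size-$s$ supersets contain it. For $|S|=s\ge d$, Cauchy--Binet applied to $G_S=X_S^\top X_S$ gives
\[
  D_S=\sum_{T\subseteq S,\ |T|=d}\det(X_T)^2 ,
\]
where $X_T$ is the $d\times d$ submatrix of rows indexed by $T$. Summing over all $|S|=s$ and exchanging the order of summation, each $d$-subset $T$ is counted once for every $S\supseteq T$ with $|S|=s$. Such an $S$ is obtained by choosing $s-d$ further indices from the remaining $m-d$, so there are exactly $\binom{m-d}{s-d}$ of them, and this count does not depend on $T$. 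Hence
\[
  Z_{X,s}=\binom{m-d}{s-d}\sum_{|T|=d}\det(X_T)^2 .
\]
A second application of Cauchy--Binet, this time to $G=X^\top X$ over the full index set, identifies the remaining sum with $\det(G)$.

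Positivity follows from two facts. Full column rank gives $\det(G)>0$, and $d\le s\le m$ makes the binomial coefficient at least $1$. Since every $D_S\ge0$ and the $D_S$ sum to the positive number $Z_{X,s}$, the formula $\PP_X(S)=D_S/Z_{X,s}$ is a well-defined probability law. It is supported exactly on the positive-volume subsets, in agreement with the support convention of Section~\ref{sec:endpoints}.

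I do not expect a genuine obstacle here; the only point needing care is bookkeeping. Subsets must be treated as indexed, so that repeated numerical rows count as distinct indices in both the Cauchy--Binet expansion and the superset count. With that convention the count $\binom{m-d}{s-d}$ holds verbatim, including at the endpoints $s=d$ (count $1$) and $s=m$ (count $\binom{m-d}{m-d}=1$, with $Z_{X,m}=\det(G)$).
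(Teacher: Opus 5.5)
Your proposal is correct and follows the paper's proof: Cauchy--Binet on each $D_S$, the $\binom{m-d}{s-d}$ superset count per indexed $d$-set, a second Cauchy--Binet on $G$, and positivity from full column rank. Your remark about indexed subsets with repeated rows matches the bookkeeping note the paper itself makes.
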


\begin{proof}
For each indexed $S$, Cauchy--Binet gives
\begin{equation}
  D_S=\sum_{\substack{T\subseteq S\\|T|=d}}\det(X_T)^2.
  \label{eq:ue-inner-cauchy-binet}
\end{equation}
Each indexed $d$-set $T$ occurs in exactly
$\binom{m-d}{s-d}$ indexed size-$s$ supersets.  Summing
\eqref{eq:ue-inner-cauchy-binet} and applying Cauchy--Binet once more yields
\[
  \sum_{|S|=s}D_S
  =\binom{m-d}{s-d}\sum_{|T|=d}\det(X_T)^2
  =\binom{m-d}{s-d}\det(X^\top X).
\]
Full column rank makes the final determinant positive.  The count is over
indexed subsets, so it is unchanged when some rows coincide; terms with zero
determinant simply contribute zero.
\end{proof}

Until unbiasedness has been established, define only the centered second
moment
\begin{equation}
  C_s:=\E_X\bigl[(w_S-w^*)(w_S-w^*)^\top\bigr].
  \label{eq:ue-centered-second-moment}
\end{equation}
For $m>d$, let
\begin{equation}
  \alpha=\frac{m-s}{m-d},
  \qquad
  \beta=\frac{s-d}{m-d}=1-\alpha.
  \label{eq:ue-alpha-beta}
\end{equation}

\begin{theorem}[Universal covariance envelope for fixed-pool volume-sampled least squares]
\label{thm:ue-universal-envelope}
For every full-column-rank $X$, every fixed $y$, $m>d$, and
$d\le s\le m$,
\begin{equation}
  \E_X w_S=w^*.
  \label{eq:ue-unbiasedness}
\end{equation}
Consequently $M_s:=C_s$ is the covariance of $w_S$, and
\begin{equation}
  M_s\preceq \alpha\lossstar G^{-1},
  \qquad
  \overline M_s:=G^{1/2}M_sG^{1/2}
  \preceq \alpha\lossstar I_d.
  \label{eq:ue-envelope}
\end{equation}
Moreover,
\begin{equation}
  \E_X\lVert Xw_S-y\rVert_2^2
  \le (1+d\alpha)\lossstar
  =\left(1+\frac{d(m-s)}{m-d}\right)\lossstar.
  \label{eq:ue-loss-corollary}
\end{equation}
The coefficient in~\eqref{eq:ue-envelope} is attained by the family in
Proposition~\ref{prop:ue-attainment}.  For positive-loss row-general-position
designs with $d\ge2$ it is strictly unattained when $d<s<m$, but it remains
their operator-norm supremum as described in
Proposition~\ref{prop:ue-interior-supremum}.
\end{theorem}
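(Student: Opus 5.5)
I will prove Theorem~\ref{thm:ue-universal-envelope} by reducing to the whitened picture and working first on row-general-position designs, then extending by a limiting argument.

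\textbf{Reduction.} Replacing $X$ by $A=XG^{-1/2}$ maps $w_S-w^*$ to $G^{1/2}(w_S-w^*)$ and leaves both the volume law~\eqref{eq:law} and every $L_S$ unchanged. So it suffices to take $G=I_d$. The endpoint branches $s=m$ and $\lossstar=0$ give zero covariance and are trivial.

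\textbf{Unbiasedness.} I would use Cramer's rule with Cauchy--Binet. For $D_S>0$,
\[
D_S\,w_S=\sum_{T\subseteq S,\,|T|=d}\det(X_T)^2\,X_T^{-1}y_T .
\]
This expresses the size-$s$ sample as a mixture: a size-$d$ volume sample $T$, padded by $s-d$ uniformly chosen extra rows. Hence $\E_Xw_S=\sum_S \PP_X(S)w_S$ reduces to the known rank-size unbiasedness $\E_T X_T^{-1}y_T=w^*$ \citep{derezinski2017unbiased}. The sum over supersets contributes only the constant $\binom{m-d}{s-d}$, which cancels the normalizer of Proposition~\ref{prop:ue-normalizer}.

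\textbf{Covariance envelope (row general position, $d<s<m$, $\lossstar>0$).} I would prove the slack identity~\eqref{eq:main-interior-slack}. Write $w_S-w^*=G_S^{-1}X_S^\top e_S$. The key computation is the loss-weighted change of measure
\[
\PP_X(S)L_S=\beta\lossstar P_B(S),
\]
with $B=[A\ z]$. To get it, I would use the Schur-complement formula $\det(B_S^\top B_S)=D_S\cdot L_S/\lossstar$ together with the normalizer ratio $\binom{m-d-1}{s-d-1}/\binom{m-d}{s-d}=\beta$. Next, the law of total covariance conditioned on $S$ should give
\[
\E_X\bigl[(w_S-w^*)(w_S-w^*)^\top\bigr]=\lossstar I-\E_X\bigl[L_S\,G_S^{-1}\bigr]+\text{correction}.
\]
The route I would try first uses the identity $\E_X[L_S]=\beta\lossstar\cdot$const together with the first-moment identity $\E_X[L_SG_S^{-1}]$. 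The idea is to view $y\mapsto$ selected fit as a projection and use the known inverse-moment formula $\E_X[G_S^{-1}]$ from \citep{derezinski2018reverse}, aiming to land exactly at
\[
\alpha\lossstar I-M_s=\E_X\bigl[L_S(G_S^{-1}-I)\bigr].
\]
Since $G_S\preceq G=I$, we have $G_S^{-1}\succeq I$, so the right side is $\succeq0$. I expect this exact identity to be the \emph{main obstacle}: matching the constant $\alpha$ requires carefully combining the selected-residual first moment $\E_X L_S=\beta\lossstar\cdot(m-d)/(m-d)$-type counting with the padding coupling.

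\textbf{Boundary designs and loss bound.} For boundary full-column-rank designs, perturb $X_\epsilon=X+\epsilon E$ into row general position. I would check that $\PP_{X_\epsilon}(S)$ and $\PP_{X_\epsilon}(S)w_S^{\epsilon}$ converge, using the determinant-weighted (adjugate) form so that zero-volume sets contribute zero in the limit. The Loewner inequality is closed, so it passes to the limit; unbiasedness is polynomial-identity based and holds directly. The loss bound then follows by taking expectation in~\eqref{eq:pythagoras-main} and tracing: $\E\lVert Xw_S-y\rVert^2=\lossstar+\tr\overline M_s\le(1+d\alpha)\lossstar$.

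\textbf{Remaining claims.} Attainment comes from evaluating on the family of Proposition~\ref{prop:ue-attainment}. Strictness in the interior comes from $G_S^{-1}-I\succ0$ in some direction on a positive-probability set with $L_S>0$; this is available because $d\ge2$ and row general position hold. The supremum statement is deferred to Proposition~\ref{prop:ue-interior-supremum}, via perturbing the attaining family.
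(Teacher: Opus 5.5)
\textbf{Main gap: the covariance identity.} Your plan stalls at the step that carries the theorem. You leave the identity $M_s=\lossstar G^{-1}-\E_X[L_SG_S^{-1}]$ with an unspecified ``correction'', and the route you suggest cannot close it.

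\begin{itemize}
\item A formula for $\E_X[G_S^{-1}]$ together with $\E_XL_S$ does not determine $\E_X[L_SG_S^{-1}]$, because $L_S$ and $G_S^{-1}$ are dependent.
\item Neither quantity says anything about $M_s$ itself. Your change of measure only rewrites $\E_X[L_SG_S^{-1}]$ as $\beta\lossstar\E_B[G_S^{-1}]$.
\item You never say which random variable the total covariance is taken of. Since $w_S$ is a function of $S$, conditioning on $S$ alone gives no variance decomposition of $w_S$.
\end{itemize}

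The missing idea is the rank-size \emph{second}-moment identity, applied to an auxiliary basis. Suppose $T$ is drawn with probability $\det(X_T)^2/\det(G)$ and $w_T=X_T^{-1}y_T$. Then $\operatorname{Cov}(w_T)=\lossstar G^{-1}$. To see this, let $B_j$ be $X$ with column $j$ replaced by $e$. Cramer's rule gives $\det(X_T)(w_T-w^*)_j=\det((B_j)_T)$, and polarized Cauchy--Binet gives $\det(B_j^\top B_k)=\lossstar\det(G)(G^{-1})_{jk}$.

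Now draw $S$ from the size-$s$ law and then $T\subseteq S$ from the rank-size law of $X_S$. This is exactly your basis-plus-uniform-padding coupling, so $T$ is marginally rank-size volume sampled from $X$.

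\begin{itemize}
\item Applying the identity within $X_S$ gives $\E[w_T\mid S]=w_S$ and $\operatorname{Cov}(w_T\mid S)=L_SG_S^{-1}$.
\item Total covariance of $w_T$ (not of $w_S$) then gives $\lossstar G^{-1}=\E_X[L_SG_S^{-1}]+\operatorname{Cov}(w_S)$, with no correction term.
\item Your change of measure already gives $\E_XL_S=\beta\lossstar$ on summing over $S$. Your ``const'' is $1$, because $P_B$ is a probability.
\item Subtracting yields $\alpha\lossstar G^{-1}-M_s=\E_X[L_S(G_S^{-1}-G^{-1})]\succeq0$.
\end{itemize}

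The same lemma also covers $s=d$, where $L_S=0$ and the envelope holds with equality. Your plan never treats that budget.

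\textbf{Two further steps fail as written.}

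\emph{Boundary passage.} The adjugate argument removes newly supported sets only from the first moment, not from the second. Take $X=(1,0,0)^\top$ and $s=2$. Both supported sets return $w_S=y_1=w^*$, so $M_s=0$. Perturbing rows $2,3$ to $\epsilon p,\epsilon q$ makes $\{2,3\}$ supported, and its contribution to $M_{s,\epsilon}$ tends to $(py_2+qy_3)^2/\{2(p^2+q^2)\}$. So $M_{s,\epsilon}\not\to M_s$ in general, and closedness of the Loewner cone is not enough.

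The fix is one-sided. Keep the limiting-support terms, which do converge, and drop the new terms, which are nonnegative. This gives $u^\top M_su\le\liminf_\epsilon u^\top M_{s,\epsilon}u\le\alpha\lossstar u^\top G^{-1}u$. The same one-sided liminf, squeezed against the envelope, is what makes your deferred perturbation argument for the supremum work.

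\emph{Strictness.} Slack in ``some direction'' on a single set is not enough, since $G_S^{-1}-G^{-1}$ has rank at most $m-s$. You need, for every direction $u$, a positive-probability $S$ that both omits a row $i$ with $x_i^\top G^{-1}u\ne0$ and has $L_S>0$. The paper gets this from the row matroid of $[X\ e]$:

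\begin{itemize}
\item row general position forces at least $d+1$ non-coloops;
\item at most $d-1$ rows are orthogonal to $G^{-1}u$, so some non-coloop is visible in direction $u$;
\item a row basis avoiding that non-coloop extends to the required $S$.
\end{itemize}
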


The directional ratio associated with~\eqref{eq:ue-envelope} is
$\lambda_{\max}(\overline M_s)/(\alpha\lossstar)$ only when
$\alpha\lossstar>0$.  It is not the corresponding raw Euclidean eigenvalue
of $M_s$, and no normalized ratio is formed at $s=m$ or $\lossstar=0$.

\subsection{Rank-size basis moments and uniform padding}
\label{subsec:ue-basis-coupling}

The following rank-size calculation includes the off-diagonal second
moments.  Rank-size unbiasedness and volume-sampling moments are known
ingredients~\citep{derezinski2017unbiased,derezinski2018reverse}; the direct
derivation fixes the signs and support needed below.

\begin{lemma}[Rank-size basis moments]
\label{lem:ue-basis-moments}
Let $A\in\R^{q\times d}$ have full column rank, suppose every indexed
$d$-row submatrix is nonsingular, and fix $b\in\R^q$.  Define
\[
  K=A^\top A,
  \qquad v=K^{-1}A^\top b,
  \qquad r=b-Av.
\]
Draw a $d$-set $T$ with probability $\det(A_T)^2/\det(K)$ and put
$v_T=A_T^{-1}b_T$.  Then
\begin{equation}
  \E v_T=v,
  \qquad
  \E[(v_T-v)(v_T-v)^\top]
  =\lVert r\rVert_2^2K^{-1}.
  \label{eq:ue-basis-moments}
\end{equation}
\end{lemma}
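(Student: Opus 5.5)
The plan is to prove both identities in \eqref{eq:ue-basis-moments} by Cramer's rule combined with Cauchy--Binet, working in a basis-free form. Fix $u\in\R^d$ and study the scalar $u^\top v_T$. By Cramer's rule,
\[
 u^\top A_T^{-1}b_T=\frac{\det(A_T)\,u^\top A_T^{-1}b_T}{\det(A_T)},
\]
and the numerator $\det(A_T)u^\top A_T^{-1}b_T$ is linear in $b_T$. Writing $u=\sum_k u_k e_k$, this numerator is a signed sum of determinants of $A_T$ with column $k$ replaced by $b_T$. Hence
\[
 \det(A_T)^2\,u^\top v_T=\det(A_T)\cdot N_T(u,b),
\]
where $N_T(u,b)$ is a $d\times d$ minor of the matrix $[A\ b]$ after a column operation depending on $u$.

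\textbf{First moment.} Summing $\det(A_T)N_T$ over $T$ is a Cauchy--Binet sum of products of two $d\times d$ minors. The result is $\det\bigl(A^\top \tilde A\bigr)$, where $\tilde A$ is $A$ with column $k$ replaced by $b$ (and then combined linearly over $k$). This equals $\det(K)\,(K^{-1}A^\top b)_k$ by Cramer's rule applied to $K v=A^\top b$. Dividing by $\det K$ gives $\E v_T=v$. Equivalently, I can use the identity $\E[A_T^{-1}I_T]=K^{-1}A^\top$ for the padded inverse, which is the standard volume-sampling pseudoinverse identity cited from \citep{derezinski2017unbiased}. I would present it this way and verify it via the Cauchy--Binet adjugate computation.

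\textbf{Second moment.} First reduce to the residual. Since $A_T v=(Av)_T$, we have $v_T-v=A_T^{-1}r_T$, so it suffices to show
\[
 \E[A_T^{-1}r_Tr_T^\top A_T^{-\top}]=\lVert r\rVert^2K^{-1}
\]
for $r\perp\operatorname{col}(A)$. Take quadratic forms $u^\top(\cdot)u$. Then
\[
 \det(A_T)^2\,(u^\top A_T^{-1}r_T)^2=\bigl(\det(A_T)\,u^\top A_T^{-1}r_T\bigr)^2,
\]
and the quantity being squared is a determinant of a $d\times d$ row-submatrix of a matrix built from $A$ and $r$. The key step is to consider the $(d+1)$-column matrix $C=[A\ r]$ and the vector $w=K^{-1}u$. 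By Cramer, $u^\top A_T^{-1}r_T=\sum_k u_k\det(A_T^{(k\leftarrow r)})/\det A_T$. Summing the squares over $T$ via Cauchy--Binet gives a Gram determinant
\[
 \sum_k\sum_{k'}u_ku_{k'}\det\bigl((A^{(k\leftarrow r)})^\top A^{(k'\leftarrow r)}\bigr).
\]
Because $A^\top r=0$, each such Gram matrix is block-simple: replacing column $k$ of $A$ by $r$ and column $k'$ by $r$ yields a determinant equal to $\lVert r\rVert^2$ times the $(k,k')$ cofactor of $K$. That is,
\[
 \lVert r\rVert^2\det(K)(K^{-1})_{kk'}.
\]
Summing gives $\lVert r\rVert^2\det(K)\,u^\top K^{-1}u$, and dividing by $\det K$ yields the claim.

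\textbf{Main obstacle.} The delicate step is the cofactor computation with $A^\top r=0$, which requires evaluating $\det(\tilde A_k^\top\tilde A_{k'})$ for $k\neq k'$. I would handle it by writing $\tilde A_k=A+(r-Ae_k)e_k^\top$ and expanding. With $A^\top r=0$, the entries of $\tilde A_k^\top\tilde A_{k'}$ agree with $K$ except in row $k$ and column $k'$, and the $(k,k')$ entry is $r^\top r$ (or $r^\top a_{k'}=0$ when only one side carries $r$). A Laplace expansion along that row then isolates $\lVert r\rVert^2$ times the cofactor. The row-general-position assumption is used only so that every $T$ carries a well-defined $v_T$; zero-probability sets would otherwise be harmless anyway.
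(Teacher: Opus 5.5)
Your proposal is correct and follows essentially the same route as the paper: you use Cramer's rule to write $\det(A_T)(v_T-v)_k$ as a $d$-row minor of $A$ with column $k$ replaced by $r$, sum over $T$ with the polarized Cauchy--Binet identity, and use $A^\top r=0$ so that the resulting Gram matrix has row $k$ equal to $\lVert r\rVert_2^2e_{k'}^\top$, which a Laplace expansion turns into $\lVert r\rVert_2^2$ times the $(k,k')$ cofactor of $K$. The only cosmetic differences are that you get the first moment directly for $v_T$ (replacing a column by $b$ and applying Cramer to $Kv=A^\top b$) rather than showing $\det(A^\top B_j)=0$ for the residual-replaced matrices, and that you phrase the second moment through quadratic forms in $u$ rather than entrywise.
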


\begin{proof}
The normal equations give $A^\top r=0$.  For $j\in[d]$, let $B_j$ be
$A$ with column $j$ replaced by $r$.  Cramer's rule, using increasing row
order throughout, gives
\begin{equation}
  \det(A_T)(v_T-v)_j=\det((B_j)_T).
  \label{eq:ue-cramer}
\end{equation}
The polarized Cauchy--Binet identity
\begin{equation}
  \sum_{|T|=d}\det(C_T)\det(D_T)=\det(C^\top D)
  \label{eq:ue-polarized-cauchy-binet}
\end{equation}
therefore gives
\[
  \det(K)\E[(v_T-v)_j]=\det(A^\top B_j)=0,
\]
because column $j$ of $A^\top B_j$ is $A^\top r=0$.

For the full second moment, let $\rho=\lVert r\rVert_2^2$.  Equations
\eqref{eq:ue-cramer}--\eqref{eq:ue-polarized-cauchy-binet} give
\begin{equation}
  \det(K)\E[(v_T-v)_j(v_T-v)_k]=\det(B_j^\top B_k).
  \label{eq:ue-basis-entry}
\end{equation}
If $j=k$, expansion through the replaced row and column gives
\[
  \det(B_j^\top B_j)
  =\rho\det(K_{-j,-j})
  =\rho\det(K)(K^{-1})_{jj}.
\]
If $j\ne k$, row $j$ of $B_j^\top B_k$ has the sole nonzero entry
$\rho$ in column $k$, and hence
\[
  \det(B_j^\top B_k)
  =(-1)^{j+k}\rho\det(K_{-j,-k})
  =\rho\det(K)(K^{-1})_{kj}.
\]
Symmetry of $K^{-1}$ completes every entry of
\eqref{eq:ue-basis-moments}.
\end{proof}

Say that $X$ is in \emph{row general position} when every indexed set of
$d$ rows is nonsingular.  Assume this condition temporarily.  First sample
$S$ from~\eqref{eq:ue-normalizer}; conditional on $S$, sample a $d$-set
$T\subseteq S$ by the rank-size volume law within $X_S$.  The joint law is
\begin{equation}
  \PP_X(S,T)
  =\frac{D_S}{Z_{X,s}}\frac{\det(X_T)^2}{D_S}
  =\frac{\det(X_T)^2}
  {\binom{m-d}{s-d}\det(G)}.
  \label{eq:ue-joint-coupling}
\end{equation}
Consequently
\begin{equation}
  \PP_X(T)=\frac{\det(X_T)^2}{\det(G)},
  \qquad
  \PP_X(S\mid T)=\binom{m-d}{s-d}^{-1}
  \quad(T\subseteq S).
  \label{eq:ue-coupling-marginals}
\end{equation}
Thus $S\mid T$ is uniform over the size-$s$ supersets of the sampled basis.
This basis-plus-uniform-padding representation is also used in the
fixed-size volume-sampling literature~\citep{derezinski2019bias}.

\begin{proposition}[Exact row-general-position covariance decomposition]
\label{prop:ue-rgp-covariance}
If $X$ is in row general position, then
\begin{equation}
  \E_Xw_S=w^*,
  \qquad
  M_s=\lossstar G^{-1}-\E_X[L_SG_S^{-1}].
  \label{eq:ue-rgp-covariance}
\end{equation}
\end{proposition}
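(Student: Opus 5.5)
We use the basis-plus-uniform-padding coupling \eqref{eq:ue-joint-coupling}--\eqref{eq:ue-coupling-marginals}, together with the law of total expectation and total covariance conditional on $S$. Row general position makes every $d$-row block of every $X_S$ nonsingular, so $D_S>0$ for every $|S|=s\ge d$. Hence every size-$s$ set is supported, and Lemma~\ref{lem:ue-basis-moments} applies both to $X_S$ (with response $y_S$) and to the full pool $X$.

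\textbf{Unbiasedness.} Conditional on $S$, draw $T\subseteq S$ by the rank-size law within $X_S$. Lemma~\ref{lem:ue-basis-moments} applied to $(X_S,y_S)$ gives
\[
  \E[v_T\mid S]=w_S,\qquad
  \E[(v_T-w_S)(v_T-w_S)^\top\mid S]=L_S\,G_S^{-1},
\]
where $v_T=X_T^{-1}y_T$. By \eqref{eq:ue-coupling-marginals}, the marginal law of $T$ is the rank-size volume law on the full pool. Applying the same lemma to $(X,y)$ then gives $\E v_T=w^*$ and $\E[(v_T-w^*)(v_T-w^*)^\top]=\lossstar G^{-1}$. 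The tower property yields $\E_X w_S=\E\,\E[v_T\mid S]=\E v_T=w^*$, which is the first claim of \eqref{eq:ue-rgp-covariance}.

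\textbf{Covariance identity.} Decompose $v_T-w^*=(v_T-w_S)+(w_S-w^*)$ and take the outer product. The cross terms vanish after conditioning on $S$, because $w_S-w^*$ is $S$-measurable and $\E[v_T-w_S\mid S]=0$. Therefore
\[
  \lossstar G^{-1}=\E_X[L_SG_S^{-1}]+M_s,
\]
which rearranges to \eqref{eq:ue-rgp-covariance}.

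\textbf{Main obstacle.} The only delicate point is legitimacy of the coupling. One must check that the conditional law of $T$ given $S$ is well defined, which holds since $D_S>0$ for all $S$ under row general position. One must also check that the joint law \eqref{eq:ue-joint-coupling} is correctly normalized and produces the stated marginals. Both follow from the inner Cauchy--Binet identity \eqref{eq:ue-inner-cauchy-binet} and Proposition~\ref{prop:ue-normalizer}: every $d$-set $T$ has exactly $\binom{m-d}{s-d}$ size-$s$ supersets, each receiving the same weight. Beyond this, the argument is the bookkeeping above.
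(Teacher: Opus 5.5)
Your proposal is correct and takes essentially the same route as the paper: the basis-plus-uniform-padding coupling, Lemma~\ref{lem:ue-basis-moments} applied conditionally to $(X_S,y_S)$ and unconditionally to $(X,y)$, iterated expectation for unbiasedness, and the matrix law of total covariance, which you verify directly by showing the cross terms vanish. Your explicit checks that row general position gives $D_S>0$ for every $S$ and that the coupling has the stated marginals are not in the proof itself, since the paper establishes the coupling just before the proposition, but they do no harm.
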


\begin{proof}
Apply Lemma~\ref{lem:ue-basis-moments} conditionally with
$(A,b)=(X_S,y_S)$ and then unconditionally with $(A,b)=(X,y)$.  The coupling
gives
\begin{equation}
  \E[w_T\mid S]=w_S,
  \qquad
  \operatorname{Cov}(w_T\mid S)=L_SG_S^{-1},
  \label{eq:ue-conditional-basis}
\end{equation}
and
\begin{equation}
  \E w_T=w^*,
  \qquad
  \operatorname{Cov}(w_T)=\lossstar G^{-1}.
  \label{eq:ue-full-basis}
\end{equation}
Iterated expectation proves $\E w_S=w^*$.  The matrix law of total
covariance applied to~\eqref{eq:ue-conditional-basis} and
\eqref{eq:ue-full-basis} gives
\[
  \lossstar G^{-1}
  =\E_X[L_SG_S^{-1}]+\operatorname{Cov}(w_S),
\]
which is~\eqref{eq:ue-rgp-covariance}.  No trace has been taken.
\end{proof}

\subsection{The selected residual first moment}
\label{subsec:ue-selected-residual}

\begin{lemma}[Selected residual mean]
\label{lem:ue-selected-residual}
For every full-column-rank $X$, $m>d$, and $d\le s\le m$,
\begin{equation}
  \E_XL_S=\beta\lossstar.
  \label{eq:ue-selected-residual}
\end{equation}
This identity does not require $\lossstar>0$.
\end{lemma}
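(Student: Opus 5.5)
We need to prove $\E_X L_S=\beta\lossstar$ for every full-column-rank $X$, where $L_S$ is the selected residual loss and the expectation is over ordinary size-$s$ volume sampling. Our plan is a direct determinant identity that avoids any general-position assumption. The key fact is that for a supported $S$ the selected residual loss is a ratio of Gram determinants:
\[
 L_S=\frac{\det\bigl([X_S\ y_S]^\top[X_S\ y_S]\bigr)}{\det(G_S)},
\]
by the Schur complement of the augmented Gram matrix. This holds whenever $D_S>0$, and zero-volume sets carry no probability. Hence
\[
 \E_X L_S=\frac{1}{Z_{X,s}}\sum_{|S|=s,\,D_S>0}\det\bigl(\widetilde X_S^\top\widetilde X_S\bigr),\qquad \widetilde X=[X\ y].
\]
We may drop the restriction $D_S>0$. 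If $D_S=0$, then $X_S$ has a column dependency, so $\widetilde X_S$ does too, and the augmented determinant also vanishes.

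Next we evaluate the unrestricted sum with the normalizer argument of Proposition~\ref{prop:ue-normalizer}, applied to the $m\times(d+1)$ matrix $\widetilde X$.

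\emph{Case $\lossstar>0$.} Here $\widetilde X$ has full column rank $d+1$. If $s\ge d+1$, Cauchy--Binet gives
\[
 \sum_{|S|=s}\det(\widetilde X_S^\top\widetilde X_S)=\binom{m-d-1}{s-d-1}\det(\widetilde X^\top\widetilde X)=\binom{m-d-1}{s-d-1}\lossstar\det(G).
\]
The last equality is again the Schur complement, now for the full pool. This Cauchy--Binet step only needs the counting of $(d+1)$-subsets inside $s$-supersets, so it holds verbatim, including for repeated rows. If $s=d$, every square supported system interpolates, so $L_S=0$; this agrees with $\beta=0$, and the Cauchy--Binet sum of $(d+1)$-minors over $d$-sets is empty.

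\emph{Case $\lossstar=0$.} Here $\widetilde X$ is rank deficient, every augmented determinant is zero, and both sides vanish.

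It remains to divide by $Z_{X,s}=\binom{m-d}{s-d}\det(G)$. This uses the binomial identity
\[
 \binom{m-d-1}{s-d-1}\Big/\binom{m-d}{s-d}=\frac{s-d}{m-d}=\beta,
\]
which gives $\E_XL_S=\beta\lossstar$.

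The only delicate point is bookkeeping on the boundary: zero-volume subsets, $s=d$, and $\lossstar=0$. We handle these by showing that in each case the augmented determinant vanishes, together with the empty-sum convention, so no extra argument is needed beyond the support conventions of Section~\ref{sec:endpoints}.
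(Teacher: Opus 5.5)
Your proof is correct and follows essentially the same route as the paper's. Both use the Schur-complement identity $\det(H_S^\top H_S)=D_SL_S$ for $H=[X\ y]$, the vanishing of that determinant on zero-volume sets, fixed-cardinality Cauchy--Binet on the $d+1$ columns, and division by the ordinary normalizer, with $s=d$ handled separately; the only cosmetic difference is that you split off $\lossstar=0$ as its own case, whereas the paper just observes that the Cauchy--Binet identity holds with both sides zero when $\operatorname{rank}(H)\le d$.
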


\begin{proof}
Let $H=[X\ \ y]\in\R^{m\times(d+1)}$.  On a supported set, the Schur
complement gives
\begin{equation}
  \det(H_S^\top H_S)=D_SL_S,
  \qquad
  \det(H^\top H)=\det(G)\lossstar.
  \label{eq:ue-schur-residual}
\end{equation}
If $D_S=0$, then $\operatorname{rank}(X_S)<d$, so
$\operatorname{rank}(H_S)\le d$ and the determinant on the left is also
zero.  Thus, with unsupported terms understood as zero determinant weights,
the first equality may be summed over all $S$.

For $s\ge d+1$, fixed-cardinality Cauchy--Binet applied to the $d+1$
columns of $H$ yields
\[
  \sum_{|S|=s}\det(H_S^\top H_S)
  =\binom{m-d-1}{s-d-1}\det(H^\top H).
\]
If $\operatorname{rank}(H)\le d$, both sides are zero, so this step does not
divide by $\lossstar$.  Dividing only by the positive ordinary normalizer
from Proposition~\ref{prop:ue-normalizer} gives
\[
  \E_XL_S
  =\frac{\binom{m-d-1}{s-d-1}}
  {\binom{m-d}{s-d}}\lossstar
  =\frac{s-d}{m-d}\lossstar.
\]
When $s=d$, every supported square system interpolates and both sides of
\eqref{eq:ue-selected-residual} are zero.
\end{proof}

\subsection{The Loewner envelope in row general position}
\label{subsec:ue-rgp-envelope}

For row-general-position $X$, combine
Proposition~\ref{prop:ue-rgp-covariance} and
Lemma~\ref{lem:ue-selected-residual}.  Since
$G_S\preceq G$, inverse order gives $G_S^{-1}\succeq G^{-1}$, and hence
\begin{equation}
\begin{aligned}
  \alpha\lossstar G^{-1}-M_s
  &=\E_X[L_SG_S^{-1}]-\beta\lossstar G^{-1}\\
  &=\E_X\!\left[L_S(G_S^{-1}-G^{-1})\right]\succeq0.
\end{aligned}
\label{eq:ue-rgp-slack}
\end{equation}
Congruence by $G^{1/2}$ proves both inequalities in
\eqref{eq:ue-envelope} on this interior class.

The full-pool loss consequence is also a matrix corollary.  The normal
equations $X^\top e=0$ imply, for every $w$,
\begin{equation}
  \lVert Xw-y\rVert_2^2
  =\lossstar+\lVert w-w^*\rVert_G^2.
  \label{eq:ue-pythagoras}
\end{equation}
Therefore
\begin{equation}
\begin{aligned}
  \E_X\lVert Xw_S-y\rVert_2^2
  &=\lossstar+\tr(GM_s)\\
  &=\lossstar+\tr(\overline M_s)
  \le (1+d\alpha)\lossstar,
\end{aligned}
\label{eq:ue-loss-from-envelope}
\end{equation}
which proves~\eqref{eq:ue-loss-corollary} in row general position.

\subsection{Arbitrary full-rank boundary designs}
\label{subsec:ue-boundary}

Unbiasedness and the covariance inequality require different boundary
arguments.  In particular, the exact decomposition
\eqref{eq:ue-rgp-covariance} is not asserted when some selected Gram matrices
are singular.

\begin{proposition}[Determinant-weighted continuation of the mean]
\label{prop:ue-boundary-mean}
Equation~\eqref{eq:ue-unbiasedness} holds for every full-column-rank $X$,
including designs outside row general position.
\end{proposition}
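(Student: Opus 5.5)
Our plan is to prove $\E_X w_S=w^*$ for an arbitrary full-column-rank $X$ by writing the expectation as a ratio of polynomials in the entries of $X$, and then passing to a limit from the row-general-position case, where Proposition~\ref{prop:ue-rgp-covariance} already gives unbiasedness. The key point is to avoid dividing by $D_S$ on sets where it may vanish. By Cramer's rule (adjugate form), on every supported set $D_S w_S=\operatorname{adj}(G_S)X_S^\top y_S$. The right-hand side is a polynomial in the entries of $X$ (with $y$ fixed), and it is defined for every indexed $S$, including zero-volume sets. Since a zero-volume set has zero probability, we can write
\[
  Z_{X,s}\,\E_X w_S=\sum_{|S|=s}\operatorname{adj}(G_S)X_S^\top y_S=:N(X),
\]
provided the zero-volume terms of this sum vanish.

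\textbf{Step 1 (zero-volume terms).} We show that $\operatorname{adj}(G_S)X_S^\top y_S$ is harmless when $D_S=0$. There are two ways to handle this.
\begin{itemize}
\item Direct route: if $\operatorname{rank}(X_S)\le d-2$, then $\operatorname{adj}(G_S)=0$. If $\operatorname{rank}(X_S)=d-1$, then $\operatorname{adj}(G_S)=c\,uu^\top$ with $u$ spanning $\ker X_S$, and so $\operatorname{adj}(G_S)X_S^\top=c\,u(X_Su)^\top=0$.
\item Cleaner route: define $N(X)$ as the sum over supported sets only, and note that it agrees with the full polynomial sum. This holds because $\operatorname{adj}(G_S)X_S^\top=0$ whenever $D_S=0$, which is exactly the computation in the direct route.
\end{itemize}
Either way, $N(X)$ is a polynomial identity valid for all $X$.

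\textbf{Step 2 (identity on a dense set).} On the open dense set of row-general-position matrices, Proposition~\ref{prop:ue-rgp-covariance} gives $N(X)=Z_{X,s}w^*(X)$. Using Proposition~\ref{prop:ue-normalizer}, this reads
\[
  N(X)=\binom{m-d}{s-d}\operatorname{adj}(G)X^\top y.
\]
Both sides are polynomials in the entries of $X$, and they agree on a nonempty open set (row general position is the complement of a proper algebraic set, namely a union of zero loci of the $\det(X_T)$). Hence they agree identically on all of $\R^{m\times d}$.

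\textbf{Step 3 (conclusion).} For any full-column-rank $X$, divide by $Z_{X,s}=\binom{m-d}{s-d}\det(G)>0$. This gives $\E_Xw_S=\operatorname{adj}(G)X^\top y/\det(G)=w^*$.

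The only delicate point is Step 1, the vanishing of $\operatorname{adj}(G_S)X_S^\top$ on rank-deficient subsets. It is a short rank argument and is exactly what makes the ``determinant-weighted'' sum continuous across the boundary. The endpoint cases $s=m$ and $s=d$ are covered by the same argument, and need no ratio involving $L^*$.
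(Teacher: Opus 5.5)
Your proof is correct and matches the paper's argument step for step: the adjugate-weighted numerator, its vanishing on zero-volume sets, the polynomial identity extended from the row-general-position case, and the final division by $Z_{X,s}$. The only difference is cosmetic: the paper obtains the vanishing of $\operatorname{adj}(G_S)X_S^\top$ in one line from $A_SA_S^\top=\operatorname{adj}(G_S)G_S\operatorname{adj}(G_S)=0$ with $A_S=\operatorname{adj}(G_S)X_S^\top$, whereas you split by the rank of $X_S$.
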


\begin{proof}
For a supported set,
\begin{equation}
  D_Sw_S=\operatorname{adj}(G_S)X_S^\top y_S.
  \label{eq:ue-adjugate-numerator}
\end{equation}
The right side is polynomial in the entries of $X$ and $y$.  It vanishes
when $G_S$ is singular: indeed, with
$A_S=\operatorname{adj}(G_S)X_S^\top$,
\[
  A_SA_S^\top
  =\operatorname{adj}(G_S)G_S\operatorname{adj}(G_S)=0,
\]
so $A_S=0$.  Thus singular subsets contribute zero to the
determinant-weighted numerator without receiving a selected fit.

On the dense set of row-general-position matrices,
Proposition~\ref{prop:ue-rgp-covariance} and
Proposition~\ref{prop:ue-normalizer} imply
\begin{equation}
  \sum_{|S|=s}\operatorname{adj}(G_S)X_S^\top y_S
  =\binom{m-d}{s-d}\operatorname{adj}(G)X^\top y.
  \label{eq:ue-polynomial-mean}
\end{equation}
Both sides are polynomial, so the identity holds for every $X$ by
polynomial continuation.  For full-column-rank $X$, divide
\eqref{eq:ue-polynomial-mean} by
$Z_{X,s}=\binom{m-d}{s-d}\det(G)$ and use the preceding zero-numerator fact.
The result is
\[
  \E_Xw_S=G^{-1}X^\top y=w^*.
\]
\end{proof}

\begin{proposition}[One-sided boundary passage for the covariance]
\label{prop:ue-boundary-envelope}
The Loewner inequalities~\eqref{eq:ue-envelope}, and consequently the loss
bound~\eqref{eq:ue-loss-corollary}, hold for every full-column-rank $X$.
\end{proposition}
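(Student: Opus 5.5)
The plan is to pass from the row-general-position case, already proved in Section~\ref{subsec:ue-rgp-envelope}, to an arbitrary full-column-rank $X$ by a determinant-weighted limit, as the proof idea after Theorem~\ref{thm:main-envelope} indicates. Polynomial continuation cannot be used directly: the exact identity~\eqref{eq:ue-rgp-covariance} may fail at the boundary, and only a one-sided inequality is to be carried across. I will therefore write the unnormalized second moment as a sum over supported sets. For each $S$ with $D_S>0$, put
\[
  N_S(X,y):=D_S(w_S-w^*)(w_S-w^*)^\top ,
  \qquad
  D_S(w_S-w^*)=\operatorname{adj}(G_S)X_S^\top y_S-D_S w^* .
\]
This gives $Z_{X,s}C_s=\sum_{D_S>0}N_S$. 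Near a full-column-rank $X$, the vector $w^*$ and the normalizer $Z_{X,s}$ are continuous in $X$, by Proposition~\ref{prop:ue-normalizer}.

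The key step is a per-subset lower-semicontinuity statement. Take a sequence $X^{(k)}\to X$ of row-general-position matrices, with $y$ held fixed; such sequences exist because these designs are dense. For supported $S$, the map $X\mapsto N_S$ is continuous at $X$, since $D_S>0$ persists nearby. For unsupported $S$, i.e.\ $D_S(X)=0$, we have $N_S=D_S^{-1}\bigl(D_S(w_S-w^*)\bigr)\bigl(D_S(w_S-w^*)\bigr)^\top$ along the sequence. The numerator vector tends to $\operatorname{adj}(G_S)X_S^\top y_S-0=0$ by the argument in Proposition~\ref{prop:ue-boundary-mean}, while $D_S\to0$, so the limit could be anything. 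However, every such $N_S$ is positive semidefinite. Hence
\[
  Z_{X^{(k)},s}C_s(X^{(k)})\succeq\sum_{D_S(X)>0}N_S(X^{(k)})\longrightarrow Z_{X,s}C_s(X).
\]
This is the correct direction: the boundary covariance is at most the limit inferior of interior covariances, in the sense that dropping the PSD unsupported terms only decreases the sum.

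It then remains to combine this with the interior bound $C_s(X^{(k)})\preceq\alpha L^*(X^{(k)})G(X^{(k)})^{-1}$. The right-hand side converges to $\alpha L^*G^{-1}$, because $L^*$ and $G^{-1}$ are continuous at full-column-rank $X$. Since the Loewner cone is closed, we get $C_s(X)\preceq\lim_k C_s(X^{(k)})$ along a subsequence on which the total converges, or directly via $v^\top(\cdot)v$ for each fixed $v$ with limits taken termwise. The result is $M_s\preceq\alpha L^*G^{-1}$, and $M_s=C_s$ by Proposition~\ref{prop:ue-boundary-mean}. Congruence by $G^{1/2}$ yields the whitened form, and the trace computation~\eqref{eq:ue-loss-from-envelope}, which uses only~\eqref{eq:ue-pythagoras}, gives the loss bound. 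The endpoint cases $s=d$, $s=m$ and $L^*=0$ are covered automatically, or by the conventions of Section~\ref{sec:endpoints}.

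I expect the main obstacle to be the unsupported subsets. One has to make sure that dropping them is legitimate, i.e.\ that their interior contributions are PSD and never required to be bounded. It must also be checked that the interior bound is applied with each quantity's own $L^*$ and $G$, which converge. A scalar-in-direction argument $v^\top(\cdot)v$ avoids any worry about matrix limits of possibly divergent terms, because each dropped term is nonnegative.
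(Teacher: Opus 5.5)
Your proposal is correct and follows essentially the same route as the paper. You approximate $X$ by row-general-position designs with $y$ held fixed; the paper uses the explicit Vandermonde perturbation $X+t_aV$, where you appeal to density. You keep the limiting positive-volume terms, drop the newly supported nonnegative terms in a fixed direction, and pass the interior envelope to the limit. Congruence, unbiasedness, and the Pythagorean trace identity then give the whitened inequality and the loss bound, exactly as in the paper.
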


\begin{proof}
Let $V\in\R^{m\times d}$ be the Vandermonde matrix
$V_{ij}=i^{j-1}$.  For each $d$-set $T$,
$\det(X_T+tV_T)$ is a nonzero polynomial in $t$, since its leading
coefficient is $\det(V_T)\ne0$.  Choose a sequence $t_a\downarrow0$ avoiding
the finitely many roots of all these polynomials and set
$X^{(a)}=X+t_aV$.  Then every $X^{(a)}$ is in row general position and
$X^{(a)}\to X$.

Use a subscript $a$ for quantities formed from $X^{(a)}$, keeping $y$ fixed.
Fix $u\in\R^d$, and let
$\mathcal P=\{S:|S|=s,\ D_S>0\}$.  For $S\in\mathcal P$, ordinary inverse
continuity gives
\[
  D_{S,a}\{u^\top(w_{S,a}-w_a^*)\}^2
  \longrightarrow
  D_S\{u^\top(w_S-w^*)\}^2.
\]
No selected inverse is continued for $S\notin\mathcal P$.  All such omitted
perturbed terms are nonnegative, so the finite sum obeys
\begin{equation}
\begin{aligned}
  Z_{X,s}\,u^\top M_su
  &=\lim_a\sum_{S\in\mathcal P}
    D_{S,a}\{u^\top(w_{S,a}-w_a^*)\}^2\\
  &\le\liminf_a\sum_{|S|=s}
    D_{S,a}\{u^\top(w_{S,a}-w_a^*)\}^2.
\end{aligned}
\label{eq:ue-directional-liminf}
\end{equation}
For every $a$, the row-general-position envelope bounds the last sum by
\[
  Z_{X^{(a)},s}\,\alpha\lossstar_a\,
  u^\top G_a^{-1}u.
\]
Full column rank persists near $X$, and the normalizer, full fit, residual
loss, Gram inverse, and right side all converge ordinarily.  Hence
\eqref{eq:ue-directional-liminf} gives
\[
  u^\top M_su\le\alpha\lossstar u^\top G^{-1}u.
\]
Because this holds for every fixed $u$, it is the first inequality in
\eqref{eq:ue-envelope}; congruence gives the second.  Finally,
Proposition~\ref{prop:ue-boundary-mean} makes $M_s$ a covariance, and
\eqref{eq:ue-pythagoras}--\eqref{eq:ue-loss-from-envelope} give the loss
corollary.  The argument retains limiting positive-volume terms and drops
only newly supported nonnegative terms; it does not continue
\eqref{eq:ue-rgp-covariance} to the boundary.
\end{proof}

\subsection{Exact matrix attainment, strict interior slack, and supremum}
\label{subsec:ue-sharpness}

\begin{proposition}[Core-plus-zero exact matrix attainment]
\label{prop:ue-attainment}
Fix $m>d\ge1$, let $q=m-d-1$, and define
\begin{equation}
  X_0=\begin{bmatrix}I_d\\\1_d^\top\\0_{q\times d}\end{bmatrix},
  \qquad
  y=\begin{bmatrix}\1_d\\-1\\0_q\end{bmatrix}.
  \label{eq:ue-attaining-family}
\end{equation}
For every $d\le s\le m$, this family satisfies
\begin{equation}
  M_s=\alpha\lossstar G^{-1}.
  \label{eq:ue-exact-attainment}
\end{equation}
\end{proposition}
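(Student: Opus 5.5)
The plan is a direct enumeration: for this family every supported subset either reproduces the full fit or interpolates a square core system, and each case can be written down explicitly. First I compute the full-pool quantities. The zero rows contribute nothing, so $G=I_d+\1_d\1_d^\top$, $\det G=d+1$ and $G^{-1}=I_d-\1_d\1_d^\top/(d+1)$. Next $X_0^\top y=\1_d-\1_d=0$, so $w^*=0$, $e=y$ and $\lossstar=\lVert y\rVert_2^2=d+1$. The target identity~\eqref{eq:ue-exact-attainment} therefore reads $M_s=\alpha\bigl((d+1)I_d-\1_d\1_d^\top\bigr)$.

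Second, I classify the support of~\eqref{eq:ue-normalizer}. Call the first $d+1$ rows of $X_0$ the core; they form a circuit, since every $d$ of them are nonsingular. Indeed, deleting $\1_d^\top$ leaves $I_d$, and deleting $e_k^\top$ leaves a matrix of determinant $\pm1$. Hence $D_S>0$ exactly when $S$ contains at least $d$ core rows.
\begin{itemize}[leftmargin=*,nosep]
\item If $S$ contains all $d+1$ core rows, then $w_S=w^*=0$, because the zero rows do not change the normal equations.
\item If $S$ omits exactly the core row $k$, then by Cauchy--Binet $D_S=1$, since the only nonsingular $d$-row submatrix of $X_S$ is the unimodular core minor.
\end{itemize}
In the second case the selected system is square and interpolates. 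Omitting the $\1_d^\top$ row gives $w_S=\1_d$. Omitting row $k\le d$ gives $w_S=\1_d-(d+1)e_k$, since the remaining equations force $w_j=1$ for $j\ne k$ and $\sum_j w_j=-1$.

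Third, I count probabilities. Subsets omitting core row $k$ are obtained by padding with $s-d$ of the $q=m-d-1$ zero rows. Hence
\[
\PP_X(\text{omit }k)=\frac{\binom{m-d-1}{s-d}}{(d+1)\binom{m-d}{s-d}}=\frac{\alpha}{d+1}.
\]
Because $w^*=0$, only these omission events contribute to $M_s$, which gives $M_s=\frac{\alpha}{d+1}\sum_k w^{(k)}w^{(k)\top}$. The sum of outer products is
\[
\1_d\1_d^\top+\sum_{k\le d}\bigl(\1_d\1_d^\top-(d+1)(\1_de_k^\top+e_k\1_d^\top)+(d+1)^2e_ke_k^\top\bigr)=(d+1)^2I_d-(d+1)\1_d\1_d^\top,
\]
which equals $(d+1)^2G^{-1}$. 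Therefore $M_s=\alpha(d+1)G^{-1}=\alpha\lossstar G^{-1}$.

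Finally, I check the endpoints against the conventions of Section~\ref{sec:endpoints}. At $s=m$ the binomial $\binom{q}{s-d}$ vanishes and $\alpha=0$, which is consistent with a zero covariance. At $s=d$ no padding occurs and $\alpha=1$, and the same computation applies. The case $q=0$ also needs no separate treatment. The only delicate point is the support and count bookkeeping in the second and third steps: confirming that subsets with fewer than $d$ core rows have zero volume, and that each omission event carries unit determinant weight. The rest is routine algebra.
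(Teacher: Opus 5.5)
Your proposal is correct and follows essentially the same route as the paper: you split the support into all-core and exactly-$d$-core subsets, use the interpolating fits $\1_d$ and $\1_d-(d+1)e_k$, and use the identity $(d+1)^2I_d-(d+1)\1_d\1_d^\top=(d+1)^2G^{-1}$. The only difference is cosmetic: you read off the per-row omission probability $\alpha/(d+1)$ directly from the Cauchy--Binet normalizer, whereas the paper adds the two aggregate determinant weights and applies Pascal's identity to get the event probability $\alpha$, with the omitted core row then uniform.
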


\begin{proof}
The first $d+1$ rows are the core.  Direct calculation gives
\begin{equation}
  G=I_d+\1_d\1_d^\top,
  \qquad
  G^{-1}=I_d-\frac{\1_d\1_d^\top}{d+1},
  \qquad
  w^*=0,
  \qquad
  \lossstar=d+1.
  \label{eq:ue-attaining-reference}
\end{equation}
A supported size-$s$ set contains either exactly $d$ core rows or all
$d+1$.  Every $d$-core minor has squared determinant one, whereas the
all-core Gram determinant is $d+1$.  With the convention that an
out-of-range binomial coefficient is zero, the aggregate determinant weights
are
\begin{equation}
  W_d=(d+1)\binom q{s-d},
  \qquad
  W_{d+1}=(d+1)\binom q{s-d-1}.
  \label{eq:ue-core-event-weights}
\end{equation}
Pascal's identity therefore gives
\begin{equation}
  \PP_X(\text{exactly $d$ core rows})
  =\frac{W_d}{W_d+W_{d+1}}
  =\frac{m-s}{m-d}=\alpha.
  \label{eq:ue-core-event-probability}
\end{equation}

On the all-core event, the normal equations give $w_S=0$.  Conditional on
exactly $d$ core rows, the omitted core row is uniform among the $d+1$
possibilities and the corresponding interpolating fits are
\begin{equation}
  a_0=\1_d,
  \qquad
  a_i=\1_d-(d+1)e_i,
  \quad i=1,\ldots,d,
  \label{eq:ue-core-fits}
\end{equation}
where $e_i$ is the $i$th coordinate vector in $\R^d$.  Their mean is zero,
and their complete matrix second moment is
\begin{equation}
\begin{aligned}
  \frac1{d+1}\left(a_0a_0^\top+\sum_{i=1}^da_ia_i^\top\right)
  &=(d+1)I_d-\1_d\1_d^\top\\
  &=\lossstar G^{-1}.
\end{aligned}
\label{eq:ue-core-conditional-covariance}
\end{equation}
Indeed, expanding the sum gives
$(d+1)^2I_d-(d+1)\1_d\1_d^\top$ before division by $d+1$.
Multiplying~\eqref{eq:ue-core-conditional-covariance} by the event
probability~\eqref{eq:ue-core-event-probability} proves
\eqref{eq:ue-exact-attainment}, including all off-diagonal entries.  The
same formulas cover $s=d$ and $s=m$ through $\alpha=1$ and $\alpha=0$,
respectively.  This proposition supplies an attaining family only.
\end{proof}

\begin{proposition}[Strict interior slack and row-general-position supremum]
\label{prop:ue-interior-supremum}
If $X$ is in row general position, $\lossstar>0$, $d\ge2$, and $d<s<m$, then
\begin{equation}
  M_s\prec\alpha\lossstar G^{-1}.
  \label{eq:ue-strict-interior}
\end{equation}
For every fixed $(m,d,s)$ with $m>d$, $d\ge2$, and $d\le s<m$, row-general-position
perturbations $X^{(a)}\to X_0$ of the family
\eqref{eq:ue-attaining-family} can be chosen so that, with the same fixed
$y$,
\begin{equation}
  \left\lVert
    \frac{G_a^{1/2}M_{s,a}G_a^{1/2}}{\lossstar_a}-\alpha I_d
  \right\rVert_{\mathrm{op}}\longrightarrow0.
  \label{eq:ue-operator-supremum}
\end{equation}
Thus, for $d\ge2$ and $d<s<m$, the coefficient is not attained in row general
position but is its supremum.  At the rank-size endpoint $s=d$, the basis
identity already gives equality for every positive-loss row-general-position
design; at $s=m$ no normalized ratio is formed.
\end{proposition}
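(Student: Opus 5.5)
The plan splits into the two claims: strict slack in row general position, and the perturbative supremum.

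\emph{Strictness.} By the exact slack identity~\eqref{eq:ue-rgp-slack}, it suffices to show that for every $u\ne0$ some supported $S$ has positive probability, $L_S>0$, and $u^\top(G_S^{-1}-G^{-1})u>0$. We write
\[
G_S^{-1}-G^{-1}=G_S^{-1}X_{S^c}^\top X_{S^c}G^{-1}.
\]
With $w=G^{-1}u$, this shows that $u^\top(G_S^{-1}-G^{-1})u$ vanishes exactly when $X_{S^c}w=0$. Row general position with $d\ge2$ implies that at most $d-1$ rows lie in the hyperplane $w^\perp$; call this set $O$. Every $S$ of size $s\ge d$ is supported. It therefore remains to find a row $k\notin O$ and a size-$s$ set $S\not\ni k$ with $L_S>0$. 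Since $s\le m-1$, such an $S$ can always be taken as a superset of any $S_0\not\ni k$ once $L_{S_0}>0$.

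To find $S_0$, use the Schur identity~\eqref{eq:ue-schur-residual} together with Cauchy--Binet for $H=[X\ y]$. These show that $L_S>0$ if and only if $S$ contains a $(d+1)$-set $T$ with $\det H_T\ne0$, i.e.\ a basis of the row matroid of $H$, which has rank $d+1$ because $\lossstar>0$. So we need some $k\notin O$ that is not a coloop of this matroid.

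If $k$ is a coloop, then $H_{-k}$ has rank $d$. Since $X_{-k}$ already has rank $d$ (as $m-1\ge d$), this means $y_{-k}\in\operatorname{col}(X_{-k})$. Two coloops $k_1,k_2$ would force rank$(H_{-\{k_1,k_2\}})=d-1$. But $X_{-\{k_1,k_2\}}$ has rank $\min(d,m-2)=d$ because $m\ge d+2$, a contradiction. Hence $H$ has at most one coloop.

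Meanwhile the number of rows outside $O$ is at least $m-d+1\ge3$. So some $k\notin O$ is not a coloop, and a basis $T\not\ni k$ exists. Padding $T$ to size $s$ while avoiding $k$ gives the required $S$. This matroid counting step is where I expect the care to be needed, in particular to confirm that the hypotheses $d\ge2$ and $s<m$ are genuinely used.

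\emph{Supremum.} Take $X^{(a)}=X_0+t_aV$ exactly as in the proof of Proposition~\ref{prop:ue-boundary-envelope}, keeping $y$ fixed. Then each $X^{(a)}$ is in row general position, and $G_a$, $w_a^*$, $\lossstar_a$ and $Z_{X^{(a)},s}$ converge to their values at $X_0$, with $\lossstar=d+1>0$.

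Fix $u$. The directional inequality~\eqref{eq:ue-directional-liminf} gives
\[
\liminf_a Z_{X^{(a)},s}\,u^\top M_{s,a}u\ \ge\ Z_{X_0,s}\,u^\top M_s(X_0)u ,
\]
and by Proposition~\ref{prop:ue-attainment} the right side equals $Z_{X_0,s}\,\alpha\lossstar u^\top G^{-1}u$. In the other direction, the envelope of Theorem~\ref{thm:ue-universal-envelope} at each $a$ gives
\[
Z_{X^{(a)},s}\,u^\top M_{s,a}u\le Z_{X^{(a)},s}\,\alpha\lossstar_a u^\top G_a^{-1}u ,
\]
which converges to the same limit. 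Squeezing, $u^\top M_{s,a}u\to\alpha\lossstar u^\top G^{-1}u$ for every $u$. By polarization this gives entrywise convergence $M_{s,a}\to\alpha\lossstar G^{-1}$.

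Congruence by $G_a^{1/2}\to G^{1/2}$ and division by $\lossstar_a\to\lossstar$ then yield~\eqref{eq:ue-operator-supremum}. This argument covers all $d\le s<m$, including $s=d$. Combined with the strictness part, for $d<s<m$ this shows that the coefficient is a supremum that is not attained in row general position.
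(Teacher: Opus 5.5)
Your proposal is correct and follows essentially the paper's proof. For strictness, both arguments use the exact slack identity together with a non-coloop row $k$ of the row matroid of $[X\ y]$ satisfying $x_k^\top G^{-1}u\ne0$, then take a basis avoiding $k$ and pad it to size $s$. For the supremum, both perturb $X_0$ by the Vandermonde construction and squeeze between the directional liminf and the universal envelope.

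The differences are only minor technical variants, and both of yours are valid:
\begin{itemize}
\item You show there is at most one coloop by double deletion, using $m\ge d+2$. The paper instead exhibits a circuit of size at least $d+1$.
\item You squeeze each quadratic form $u^\top M_{s,a}u$ and then polarize. The paper sums over a basis and uses $\lVert R_a\rVert_{\mathrm{op}}\le\tr R_a$.
\end{itemize}

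As you suspected, $d\ge2$ is never actually used in the strictness step, either in your argument or in the paper's. Only $d<s<m$ is needed: it gives $m\ge d+2$ and room to pad around $k$.
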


\begin{proof}
For strictness, the slack is the positive-semidefinite expectation in
\eqref{eq:ue-rgp-slack}.  We show that it has no nonzero null direction.
Let $H=[X\ \ e]$.  Since $X^\top e=0$ and $\lossstar>0$,
$\operatorname{rank}(H)=d+1$; since $d\ge2$ and $d<s<m$, also
$m\ge d+2$.
The rank-$(d+1)$ row matroid of $H$ has a circuit.  No set of at most $d$
rows of $H$ can be dependent, because projection of such a dependence onto
the first $d$ coordinates would contradict row general position of $X$.
Hence the circuit contains at least $d+1$ rows.  Every circuit element is a
non-coloop, so $H$ has at least $d+1$ non-coloops.

Fix $u\ne0$ and put $v=G^{-1}u\ne0$.  At most $d-1$ rows can satisfy
$x_i^\top v=0$: any $d$ such rows would form a singular $d$-row submatrix.
Thus some non-coloop $i$ satisfies $x_i^\top v\ne0$.  Because $i$ is not a
coloop, $H$ has a row basis $R$ avoiding $i$.  Extend it to a size-$s$ set
$R\subseteq S\subseteq[m]\setminus\{i\}$, possible because
$d+1\le s\le m-1$.  Then $\operatorname{rank}(H_S)=d+1$, and the Schur
identity~\eqref{eq:ue-schur-residual} together with row general position
gives $L_S>0$ and $D_S>0$.

Let $C_S=G-G_S=X_{S^c}^\top X_{S^c}$.  Since $u=Gv$, direct expansion gives
\begin{equation}
\begin{aligned}
  u^\top(G_S^{-1}-G^{-1})u
  &=v^\top\{C_S+C_SG_S^{-1}C_S\}v\\
  &\ge v^\top C_Sv
   =\lVert X_{S^c}v\rVert_2^2>0,
\end{aligned}
\label{eq:ue-directionwise-strictness}
\end{equation}
where the last inequality uses $i\notin S$ and $x_i^\top v\ne0$.
This $S$ has positive probability and positive $L_S$.  Every other summand
in~\eqref{eq:ue-rgp-slack} is positive semidefinite, so the slack is positive
in every nonzero direction $u$, proving~\eqref{eq:ue-strict-interior}.

For the supremum, use the Vandermonde perturbations from
Proposition~\ref{prop:ue-boundary-envelope} with $X=X_0$.  The directional
liminf~\eqref{eq:ue-directional-liminf}, applied to a fixed basis of
coefficient directions and summed, together with exact attainment at the
limit, gives
\begin{equation}
  \liminf_a\tr(G_aM_{s,a})\ge d\alpha\lossstar.
  \label{eq:ue-trace-lower-limit}
\end{equation}
For completeness, one may take the fixed directions
$u_j=G^{1/2}e_j$: their limiting sum is
$\tr(GM_s)=d\alpha\lossstar$.  Replacing $G$ by $G_a$ in the trace changes
the sum by $o(1)$, because $G_a\to G$ and
$M_{s,a}\preceq\alpha\lossstar_aG_a^{-1}$ keeps $M_{s,a}$ bounded.
The envelope also gives the matching upper bound
\begin{equation}
  \tr(G_aM_{s,a})\le d\alpha\lossstar_a
  \longrightarrow d\alpha\lossstar.
  \label{eq:ue-trace-upper-limit}
\end{equation}
Consequently the matrices
\[
  R_a:=\alpha\lossstar_aI_d-G_a^{1/2}M_{s,a}G_a^{1/2}
\]
are positive semidefinite and satisfy $\tr(R_a)\to0$.  For a
positive-semidefinite matrix, $\lVert R_a\rVert_{\mathrm{op}}\le\tr(R_a)$.
Since $\lossstar_a\to\lossstar=d+1>0$, division by $\lossstar_a$ proves
\eqref{eq:ue-operator-supremum}.  When $d\ge2$ and $d<s<m$, combining this
convergence with~\eqref{eq:ue-strict-interior} distinguishes supremum from
attainment.
\end{proof}

\section{Residual-augmented representation and response-aware resolvent envelope}
\label{sec:residual-augmented-resolvent}

This section concerns ordinary, unrescaled, fixed-size volume sampling of
unordered sets of row indices, without replacement, followed by unweighted
least squares.  Equal rows at different indices remain different
observations.  The response is fixed throughout, and every expectation is
conditional on the displayed pair.

Let $X\in\mathbb R^{m\times d}$ have full column rank and let
$y\in\mathbb R^m$.  Write
\begin{equation}
  G=X^\top X,\qquad
  w^*=G^{-1}X^\top y,\qquad
  e=y-Xw^*,\qquad
  L^*=\lVert e\rVert_2^2.
  \label{eq:residual-augmented-full-fit}
\end{equation}
For an indexed $s$-set $S\subseteq[m]$, put
$G_S=X_S^\top X_S$ and $D_S=\det(G_S)$.  The ordinary law and its
fixed-cardinality Cauchy--Binet normalizer are
\begin{equation}
  Z_{X,s}=\sum_{|S|=s}D_S
  =\binom{m-d}{s-d}\det(G),\qquad
  \mathbb P_X(S)=\frac{D_S}{Z_{X,s}}.
  \label{eq:residual-augmented-ordinary-law}
\end{equation}
This ordinary fixed-size law and its inverse-Gram machinery are standard in
the volume-sampling literature
\citep{derezinski2018reverse,avron2013faster,li2017dual}; the support and
normalizers are displayed here because they are essential to the measure
change below.
Only sets with $D_S>0$ are estimator-supported.  On such sets, and only on
such sets, define
\begin{equation}
  w_S=G_S^{-1}X_S^\top y_S,\qquad
  L_S=\lVert X_Sw_S-y_S\rVert_2^2.
  \label{eq:residual-augmented-selected-fit}
\end{equation}
There is no pseudoinverse or assigned fit on a zero-volume set.
For later use, define on every full-column-rank design the supported centered
second moment
\begin{equation}
  M_s=\sum_{\substack{|S|=s\\D_S>0}}\mathbb P_X(S)
  (w_S-w^*)(w_S-w^*)^\top.
  \label{eq:residual-augmented-centered-second-moment}
\end{equation}

The nontrivial domain in this section is
\begin{equation}
  L^*>0,\qquad d<s<m.
  \label{eq:residual-augmented-interior-domain}
\end{equation}
Thus $m\ge d+2$, and the constants
\begin{equation}
  \alpha=\frac{m-s}{m-d},\qquad
  \beta=\frac{s-d}{m-d}=1-\alpha,\qquad
  \gamma=\frac{m-s}{m-d-1}
  \label{eq:residual-augmented-constants}
\end{equation}
are well defined.  The endpoint branches are separate: if $m=d$, necessarily
$s=m$ and the selected and full fits coincide; if $s=m$, the unique sample
has zero centered second moment; if $L^*=0$, every supported selected fit
equals $w^*$ and no residual direction is formed; and if $s=d<m$, supported
square systems interpolate but the rank-$(d+1)$ augmented law below is not
invoked.  In particular, no quotient in
\eqref{eq:residual-augmented-interior-domain} is silently evaluated at an
endpoint.

\subsection{Whitening and the exact change of measure}
\label{subsec:residual-augmented-change-measure}

Assume first that $X$ is in row general position, meaning that every indexed
$d$-row submatrix is nonsingular, in addition to
\eqref{eq:residual-augmented-interior-domain}.  Define
\begin{equation}
  A=XG^{-1/2},\qquad z=\frac{e}{\sqrt{L^*}},\qquad B=[A\ \ z].
  \label{eq:residual-augmented-whitening}
\end{equation}
The full normal equations give
\begin{equation}
  A^\top A=I_d,\qquad A^\top z=0,\qquad
  \lVert z\rVert_2=1,\qquad B^\top B=I_{d+1}.
  \label{eq:residual-augmented-orthonormality}
\end{equation}
For each $S$, let $K_S=A_S^\top A_S$.  The augmented volume law is a
different law, of rank $d+1$, with its own support and normalizer:
\begin{equation}
  Z_{B,s}=\sum_{|S|=s}\det(B_S^\top B_S)
  =\binom{m-d-1}{s-d-1},\qquad
  \mathbb P_B(S)=\frac{\det(B_S^\top B_S)}{Z_{B,s}}.
  \label{eq:residual-augmented-law}
\end{equation}
Here $P_B$ is a response-dependent analysis law induced by the full residual,
not the sampler for $w_S$; the estimator continues to sample and fit under
the ordinary feature-only law $P_X$.
An augmented-supported set has $\operatorname{rank}(B_S)=d+1$, hence
$\operatorname{rank}(A_S)=d$ and $K_S\succ0$.  Thus every inverse under
$\mathbb E_B$ below is taken only on augmented support.  No inverse is
assigned to a zero-augmented-volume set.  Augmented support is contained in,
and can be strictly smaller than, ordinary support.

For an ordinary-supported set, the Schur determinant formula and
$y_S=X_Sw^*+\sqrt{L^*}z_S$ give
\begin{equation}
  \det(B_S^\top B_S)
  =\det(K_S)\left(
    \lVert z_S\rVert_2^2-z_S^\top A_SK_S^{-1}A_S^\top z_S
  \right)
  =\det(K_S)\frac{L_S}{L^*}.
  \label{eq:residual-augmented-setwise-schur}
\end{equation}
If $S$ is not ordinary-supported, then
$\operatorname{rank}(A_S)<d$ and $\operatorname{rank}(B_S)\le d$, so both
determinants in the corresponding zero-equals-zero rank statement vanish;
neither $w_S$ nor $L_S$ is defined there.  Also
$\det(K_S)=D_S/\det(G)$.  Consequently, if the finite measure
$\mu_X$ is defined by
\begin{equation}
  \mu_X(\{S\})=
  \begin{cases}
    \mathbb P_X(S)L_S,&D_S>0,\\
    0,&D_S=0,
  \end{cases}
  \label{eq:residual-augmented-loss-measure}
\end{equation}
then the two distinct normalizers in
\eqref{eq:residual-augmented-ordinary-law} and
\eqref{eq:residual-augmented-law} yield the exact setwise change of measure
\begin{equation}
  \begin{gathered}
    \boxed{
    \mathbb P_X(S)L_S=\beta L^*\mathbb P_B(S)\quad(D_S>0),
    \qquad
    \mu_X(\{S\})=\beta L^*\mathbb P_B(S)\quad(\text{all }S)} ,\\
    \dfrac{Z_{B,s}}{\binom{m-d}{s-d}}=\beta.
  \end{gathered}
  \label{eq:residual-augmented-measure-change}
\end{equation}
The zero branch in \eqref{eq:residual-augmented-loss-measure} extends only a
measure, not an inverse fit.

\subsection{Exact row-general-position covariance transform}
\label{subsec:residual-augmented-covariance-transform}

For completeness, the ordinary row-general-position basis coupling gives
the matrix identity needed here; related basis-moment and basis-padding
representations appear in
\citep{derezinski2017unbiased,derezinski2018reverse,derezinski2019bias}.
Couple a size-$s$ ordinary volume sample $S$ to a size-$d$ volume-sampled
basis $T\subseteq S$.  The polarized
Cauchy--Binet and Cramer calculation for a rank-size basis gives
\begin{equation}
  \mathbb E[w_T\mid S]=w_S,\qquad
  \operatorname{Cov}(w_T\mid S)=L_SG_S^{-1},
  \label{eq:residual-augmented-conditional-basis-moments}
\end{equation}
whereas its full-design version gives
\begin{equation}
  \mathbb E w_T=w^*,\qquad
  \operatorname{Cov}(w_T)=L^*G^{-1}.
  \label{eq:residual-augmented-full-basis-moments}
\end{equation}
Iterated expectation proves $\mathbb E_Xw_S=w^*$, and total covariance gives
\begin{equation}
  M_s=\operatorname{Cov}_X(w_S)
  =L^*G^{-1}-\mathbb E_X[L_SG_S^{-1}].
  \label{eq:residual-augmented-row-general-covariance}
\end{equation}
Since
$K_S^{-1}=G^{1/2}G_S^{-1}G^{1/2}$, congruence of
\eqref{eq:residual-augmented-row-general-covariance} and then
\eqref{eq:residual-augmented-measure-change} prove the exact transform
\begin{equation}
  \boxed{
  \frac{\overline{M}_s}{L^*}
  =I_d-\beta\,\mathbb E_B[K_S^{-1}]},
  \qquad
  \overline{M}_s=G^{1/2}M_sG^{1/2}.
  \label{eq:residual-augmented-exact-covariance-transform}
\end{equation}
Equation~\eqref{eq:residual-augmented-exact-covariance-transform} is an exact
covariance identity only for row-general-position $X$ with $L^*>0$ and
$d<s<m$.  This restriction is part of the statement, not merely a proof
convenience.

\subsection{Augmented exclusion marginal and the augmented Gram first moment}
\label{subsec:residual-augmented-deletion}

Let $a_i^\top$ and $b_i^\top$ denote row $i$ of $A$ and $B$, respectively,
and define
\begin{equation}
  h_i=\lVert b_i\rVert_2^2,\qquad
  R=\sum_{i=1}^m(1-h_i)a_ia_i^\top.
  \label{eq:residual-augmented-response-geometry}
\end{equation}
The fixed-cardinality Cauchy--Binet step is part of the ordinary volume-law
machinery~\citep{derezinski2018reverse}; applying it to $B_{-i}$ gives the
augmented exclusion marginal
\begin{align}
  \mathbb P_B(i\notin S)
  &=\frac{\binom{m-d-2}{s-d-1}
     \det(B_{-i}^\top B_{-i})}
     {\binom{m-d-1}{s-d-1}} \notag\\
  &=\frac{m-s}{m-d-1}\det(I_{d+1}-b_ib_i^\top)
    =\gamma(1-h_i).
  \label{eq:residual-augmented-deletion-probability}
\end{align}
The last equality is the rank-one determinant lemma.  In particular, the
augmented exclusion factor is $\gamma(1-h_i)$, not a reciprocal factor.  Since
$\sum_i a_ia_i^\top=I_d$, linearity now gives
\begin{equation}
  \boxed{
  \mathbb E_BK_S
  =\sum_i\mathbb P_B(i\in S)a_ia_i^\top
  =I_d-\gamma R\succ0.}
  \label{eq:residual-augmented-first-moment}
\end{equation}
Strict positivity follows because every augmented-supported $S$ has
$K_S\succ0$ and the augmented law has nonempty support.

\subsection{Classical operator Jensen and two resolvent bounds}
\label{subsec:residual-augmented-jensen}

The classical operator Jensen inequality for the inversion map on the
positive-definite cone~\citep{hansen2003jensen} gives
\begin{equation}
  \mathbb E_B[K_S^{-1}]
  \succeq(\mathbb E_BK_S)^{-1}
  =(I_d-\gamma R)^{-1}.
  \label{eq:residual-augmented-operator-jensen}
\end{equation}
This generic Jensen step is classical; the volume-specific inputs are the
change of measure and augmented exclusion first moment above.  Substitution into
\eqref{eq:residual-augmented-exact-covariance-transform} reverses the final
order because of its minus sign and proves
\begin{equation}
  \boxed{
  \overline{M}_s\preceq
  L^*\left[I_d-\beta(I_d-\gamma R)^{-1}\right].}
  \label{eq:residual-augmented-resolvent-envelope}
\end{equation}
Moreover, $R\succeq0$ and $I_d-\gamma R\succ0$, so scalar functional
calculus gives
$(I_d-\gamma R)^{-1}\succeq I_d+\gamma R$.  Hence the linear corollary used
as the response-uniform slack input to
Theorem~\ref{thm:robust-pre-response-phase} is
\begin{equation}
  \boxed{
  \overline{M}_s
  \preceq \alpha L^*I_d-\beta\gamma L^*R.}
  \label{eq:residual-augmented-linear-corollary}
\end{equation}
The preceding change-of-measure and augmented exclusion-moment identities use
the volume-law structure; no high-dimensional sharpness statement is made for
\eqref{eq:residual-augmented-resolvent-envelope}.

\subsection{Second-centered-Gram refinement of the response-aware resolvent}
\label{subsec:residual-augmented-second-moment}

\begin{proposition}[Contraction-specific second-moment refinement]
\label{prop:residual-augmented-second-moment}
Assume that $X$ has full column rank and that
$L^*>0$ and $d<s<m$.  Let $A,z,B$ and $\mathbb P_B$ be as in
\eqref{eq:residual-augmented-whitening}--\eqref{eq:residual-augmented-law},
and write
\begin{equation}
  \Omega_B=\{S\subseteq[m]: |S|=s,\ 
  \det(B_S^\top B_S)>0\}.
  \label{eq:second-moment-positive-support}
\end{equation}
The law $\mathbb P_B$ is analysis-only, and every selected inverse below is
restricted to $\Omega_B$.  Define
\begin{equation}
  H_s=\mathbb E_BK_S,\qquad
  \Delta_S=K_S-H_s,\qquad
  V_s=\mathbb E_B[\Delta_S^2],\qquad
  \mathcal Q_s=H_s^{-1}V_sH_s^{-1}.
  \label{eq:second-moment-definitions}
\end{equation}
Then $0\prec K_S\preceq I_d$ on $\Omega_B$, and
\begin{equation}
  \mathbb E_B[K_S^{-1}]-H_s^{-1}
  =\mathcal Q_s+H_s^{-1}\mathbb E_B\!\left[
    \Delta_S(K_S^{-1}-I_d)\Delta_S
  \right]H_s^{-1}
  \succeq\mathcal Q_s\succeq0.
  \label{eq:second-moment-ordered-remainder}
\end{equation}
The coefficient of $V_s$ is one: this is an exact ordered inverse remainder,
not a Taylor approximation.  If $X$ is in row general position, the exact
covariance transform gives
\begin{equation}
  \overline M_s\preceq
  L^*\left[I_d-\beta H_s^{-1}-\beta\mathcal Q_s\right]
  \preceq L^*\left[I_d-\beta H_s^{-1}\right].
  \label{eq:second-moment-interior-hierarchy}
\end{equation}

The correction is computable from singleton and pair exclusions.  Put
$T=S^c$, $k=m-s$, $q=m-d-1$, $D_i=a_ia_i^\top$, and retain
$h_i=\lVert b_i\rVert_2^2$.  The singleton exclusion probability is
\begin{align}
  \pi_i&=\mathbb P_B(i\in T)=\frac{k}{q}(1-h_i),
  \label{eq:second-moment-singleton-exclusion}
\end{align}
and, for $q\ge2$, the pair exclusion probability is
\begin{align}
  \pi_{ij}&=\mathbb P_B(i,j\in T)
  =\frac{k(k-1)}{q(q-1)}
  \left[(1-h_i)(1-h_j)-(b_i^\top b_j)^2\right].
  \label{eq:second-moment-pair-exclusion}
\end{align}
When $m=d+2$, the strict domain forces $q=k=1$; in that branch set
$\pi_{ij}=0$ rather than evaluating
\eqref{eq:second-moment-pair-exclusion}.  In either branch,
\begin{equation}
  V_s=\sum_i\pi_i(1-\pi_i)D_i^2
  +\sum_{i<j}(\pi_{ij}-\pi_i\pi_j)
    (D_iD_j+D_jD_i).
  \label{eq:second-moment-pair-assembly}
\end{equation}

For every full-column-rank design in the same strict domain, without a row
general position assumption, define
\begin{equation}
  F_s=\mathbb E_BG_S,\qquad
  W_s=\mathbb E_B[(G_S-F_s)G^{-1}(G_S-F_s)]\succeq0.
  \label{eq:second-moment-raw-definitions}
\end{equation}
For the ordinary-law supported, full-fit-centered second moment $M_s$ in
\eqref{eq:residual-augmented-centered-second-moment}, only the one-sided
hierarchy
\begin{equation}
  M_s\preceq L^*\left[G^{-1}-\beta F_s^{-1}
    -\beta F_s^{-1}W_sF_s^{-1}\right]
  \preceq L^*\left[G^{-1}-\beta F_s^{-1}\right]
  \label{eq:second-moment-boundary-hierarchy}
\end{equation}
is asserted.  Outside the row-general-position interior this is not an exact
inverse-moment covariance identity or an exact covariance-gap decomposition.
The proposition is not invoked at the existing zero branches $L^*=0$,
$s=m$, or $m=d$, nor at $s=d<m$; when $m=d+1$ the strict domain is empty.
\end{proposition}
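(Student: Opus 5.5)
We treat the proposition in three layers: an algebraic inverse identity valid on $\Omega_B$, a combinatorial evaluation of $V_s$ under $\mathbb P_B$, and a one-sided boundary passage for~\eqref{eq:second-moment-boundary-hierarchy}.

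\textbf{Ordered remainder and interior hierarchy.} On $\Omega_B$ we have $\operatorname{rank}(B_S)=d+1$, so $\operatorname{rank}(A_S)=d$ and $K_S\succ0$. Also $K_S=A_S^\top A_S\preceq A^\top A=I_d$, hence $K_S^{-1}\succeq I_d$. Next, $H_s\succ0$ by~\eqref{eq:residual-augmented-first-moment}. We then use the exact two-step resolvent identity
\[
K_S^{-1}=H_s^{-1}-H_s^{-1}\Delta_SH_s^{-1}+H_s^{-1}\Delta_SK_S^{-1}\Delta_SH_s^{-1}.
\]
It is verified by multiplying on the left by $H_s$ and on the right by $K_S$, using $K_S=H_s+\Delta_S$. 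We take $\mathbb E_B$ and use $\mathbb E_B\Delta_S=0$. Splitting $K_S^{-1}=I_d+(K_S^{-1}-I_d)$ inside the last term gives the equality in~\eqref{eq:second-moment-ordered-remainder}. Its remainder is an average of congruences of $K_S^{-1}-I_d\succeq0$, so it is positive semidefinite, and $\mathcal Q_s\succeq0$ since $V_s\succeq0$. Finally, substituting this into the exact transform~\eqref{eq:residual-augmented-exact-covariance-transform} of Theorem~\ref{thm:residual-mechanism} reverses the order through the factor $-\beta$, which yields~\eqref{eq:second-moment-interior-hierarchy}.

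\textbf{Exclusion probabilities and $V_s$.} We apply fixed-cardinality Cauchy--Binet to $B_{-J}$ for $J=\{i\}$ or $J=\{i,j\}$. This gives
\[
\mathbb P_B(J\subseteq T)=\frac{\binom{q+1-|J|-1}{s-d-1}}{\binom{q}{s-d-1}}\det(B_{-J}^\top B_{-J}).
\]
Since $B^\top B=I_{d+1}$, we have $\det(B_{-J}^\top B_{-J})=\det(I-B_J^\top B_J)=\det(I_{|J|}-B_JB_J^\top)$. For $|J|=1$ this equals $1-h_i$, and for $|J|=2$ it equals $(1-h_i)(1-h_j)-(b_i^\top b_j)^2$. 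Writing $s-d-1=q-k$, the binomial ratios reduce to $k/q$ and $\binom{q-2}{k-2}/\binom qk=k(k-1)/(q(q-1))$. When $m=d+2$ we have $k=1$, so two rows can never both be omitted and $\pi_{ij}=0$. Because $\sum_iD_i=I_d$, we can write $K_S=I_d-\sum_i\1_{\{i\in T\}}D_i$, so $\Delta_S=-\sum_i(\1_{\{i\in T\}}-\pi_i)D_i$. Expanding $\Delta_S^2$ and using $\operatorname{Var}(\1_{\{i\in T\}})=\pi_i(1-\pi_i)$ and $\operatorname{Cov}(\1_{\{i\in T\}},\1_{\{j\in T\}})=\pi_{ij}-\pi_i\pi_j$ gives~\eqref{eq:second-moment-pair-assembly}.

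\textbf{Boundary hierarchy.} First, the raw-coordinate objects are congruent to the whitened ones, $F_s=G^{1/2}H_sG^{1/2}$ and $F_s^{-1}W_sF_s^{-1}=G^{-1/2}\mathcal Q_sG^{-1/2}$. Hence on the row-general-position interior, congruence of~\eqref{eq:second-moment-interior-hierarchy} by $G^{-1/2}$ gives~\eqref{eq:second-moment-boundary-hierarchy}. For an arbitrary full-column-rank $X$, we reuse the Vandermonde perturbations $X^{(a)}\to X$ from Proposition~\ref{prop:ue-boundary-envelope} and the directional liminf~\eqref{eq:ue-directional-liminf}. Only limiting positive-volume terms of $M_s$ are retained, which gives $u^\top M_su\le\liminf_a u^\top M_{s,a}u$. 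It remains to show that the right side of the hierarchy is continuous in $a$. The quantities $G$, $w^*$, $L^*$ and $e$ are continuous near $X$, and $L^*>0$ persists, so $z$ and $B$ vary continuously. The weights of $\mathbb P_B$ are polynomial in $B$ with a constant normalizer, and $G_S$ is polynomial, so $F_s$ and $W_s$ are continuous. Moreover $F_s\succ0$ holds at the limit, because the first-moment computation~\eqref{eq:residual-augmented-first-moment} used only Cauchy--Binet and not row general position. I expect this continuity and invertibility check at the limit to be the main obstacle; with it in hand, $F_{s,a}^{-1}\to F_s^{-1}$ and the hierarchy passes to the limit one direction at a time. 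The second inequality in~\eqref{eq:second-moment-boundary-hierarchy} follows from $W_s\succeq0$.
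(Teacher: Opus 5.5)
Your proposal is correct and follows essentially the same route as the paper. Both use the exact two-step resolvent identity with the split $K_S^{-1}=I_d+(K_S^{-1}-I_d)$, Cauchy--Binet exclusion marginals combined with the rank-one and rank-two determinant lemmas and assembled into the anticommutator form of $V_s$, congruence to raw coordinates via $F_s=G^{1/2}H_sG^{1/2}$ and $W_s=G^{1/2}V_sG^{1/2}$, and a directional-liminf boundary passage that rests on continuity of the finite sums $F_s,W_s$ and on $F_s\succ0$ from nonempty augmented support.
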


\begin{proof}
For $S\in\Omega_B$, augmented support implies $K_S\succ0$, while
$I_d-K_S=A_{S^c}^\top A_{S^c}\succeq0$.  Thus $K_S$ is a positive
contraction and $H_s\succ0$.  With $K=K_S$, $H=H_s$, and $\Delta=K-H$,
direct multiplication, without commuting any factors, gives
\begin{equation}
  K^{-1}=H^{-1}-H^{-1}\Delta H^{-1}
  +H^{-1}\Delta K^{-1}\Delta H^{-1}.
  \label{eq:second-moment-resolvent-identity}
\end{equation}
Taking expectations removes the centered linear term.  Splitting
$K_S^{-1}=I_d+(K_S^{-1}-I_d)$ in the last term yields
\eqref{eq:second-moment-ordered-remainder}.  Its remainder is positive
semidefinite because $K_S^{-1}-I_d\succeq0$.  Thus the first-order comparison
is the classical operator-Jensen step for inversion
\citep{hansen2003jensen}, whereas \eqref{eq:second-moment-ordered-remainder}
is the support-restricted resolvent calculation for these contractions.
Combining it with
\eqref{eq:residual-augmented-exact-covariance-transform} proves
\eqref{eq:second-moment-interior-hierarchy}.

For the marginal formulas, fixed-cardinality Cauchy--Binet applied after
deleting one or two rows gives the support-aware specialization of the
generic dual-volume inclusion-marginal mechanism
\citep{li2017dual}.  In particular,
\begin{align}
 \mathbb P_B(i\in T)
 &=\frac{\binom{q-1}{s-d-1}}{\binom q{s-d-1}}
   \det(I_{d+1}-b_ib_i^\top),\notag\\
 \mathbb P_B(i,j\in T)
 &=\frac{\binom{q-2}{s-d-1}}{\binom q{s-d-1}}
   \det(I_{d+1}-b_ib_i^\top-b_jb_j^\top),
 \label{eq:second-moment-marginal-cauchy-binet}
\end{align}
where the second line is used only for $q\ge2$.  The rank-one and rank-two
determinant lemmas give
\eqref{eq:second-moment-singleton-exclusion} and
\eqref{eq:second-moment-pair-exclusion}.  If $q=k=1$, the excluded set has
size one, so its pair-exclusion probability is zero directly.  Finally,
$K_S=I_d-\sum_{i\in T}D_i$; expanding the square of its centered version
retains both product orders and gives the anticommutator in
\eqref{eq:second-moment-pair-assembly}.

On a row-general-position design,
$F_s=G^{1/2}H_sG^{1/2}$ and
$W_s=G^{1/2}V_sG^{1/2}$, so congruence of
\eqref{eq:second-moment-interior-hierarchy} gives the raw-coordinate
hierarchy.  The following subsection supplies the directional-liminf passage
that retains only this one-sided hierarchy on an arbitrary full-column-rank
boundary, without continuing any selected inverse through zero support.
\end{proof}

\subsection{One-sided extension to arbitrary full-column-rank boundary designs}
\label{subsec:residual-augmented-boundary-extension}

Now let $X$ be an arbitrary full-column-rank design, still with $L^*>0$ and
$d<s<m$; row general position is no longer assumed.  The quantities
$A,z,B,h_i$ remain defined by
\eqref{eq:residual-augmented-whitening} and
\eqref{eq:residual-augmented-response-geometry}.  In original coordinates
put
\begin{equation}
  Q=\sum_{i=1}^m(1-h_i)x_ix_i^\top.
  \label{eq:residual-augmented-original-response-geometry}
\end{equation}
The augmented exclusion calculation itself did not require row general position.
Because augmented support implies $G_S\succ0$, it gives the exact first
moment
\begin{equation}
  F_s=\mathbb E_BG_S
  =G-\gamma Q\succ0.
  \label{eq:residual-augmented-original-first-moment}
\end{equation}
The first-order member of
Proposition~\ref{prop:residual-augmented-second-moment} is therefore the
response-aware upper inequality
\begin{equation}
  \boxed{
  M_s\preceq L^*\left[G^{-1}
    -\beta(G-\gamma Q)^{-1}\right].}
  \label{eq:residual-augmented-boundary-resolvent}
\end{equation}

To justify the extension without continuing a singular selected inverse,
choose row-general-position matrices $X_t\to X$.  With $y$ fixed, all full
quantities $G_t,w_t^*,L_t^*,h_{i,t},Q_t$ converge to their unperturbed
counterparts; $L_t^*>0$ for all sufficiently large $t$.  For a fixed
coefficient direction $u$, write the ordinary centered-second-moment
numerator as the finite sum
\begin{equation}
  \sum_{\substack{|S|=s\\D_S>0}}
  D_S\{u^\top(w_S-w^*)\}^2.
  \label{eq:residual-augmented-boundary-numerator}
\end{equation}
Every summand displayed in
\eqref{eq:residual-augmented-boundary-numerator} is the limit of its
perturbed counterpart.  Sets with $D_S=0$ contribute no boundary estimator
and their perturbed contributions are nonnegative.  Dropping precisely those
newly supported terms therefore gives the one-sided comparison
\begin{equation}
  u^\top M_su\le\liminf_{t\to0}u^\top M_{s,t}u.
  \label{eq:residual-augmented-directional-liminf}
\end{equation}
The ordinary normalizer converges.  The finite-sum definitions of $F_s$ and
$W_s$ in \eqref{eq:second-moment-raw-definitions} are continuous under this
perturbation, and augmented support gives $F_s\succ0$; hence both right-hand
sides of \eqref{eq:second-moment-boundary-hierarchy} are continuous.
Applying the row-general-position hierarchy to $X_t$ and passing to the
directional liminf proves \eqref{eq:second-moment-boundary-hierarchy}, and in
particular \eqref{eq:residual-augmented-boundary-resolvent}.
The same passage applied to
\eqref{eq:residual-augmented-linear-corollary}, or the elementary inequality
$(G-\gamma Q)^{-1}\succeq G^{-1}+\gamma G^{-1}QG^{-1}$, gives its
original-coordinate linear form
\begin{equation}
  M_s\preceq
  \alpha L^*G^{-1}-\beta\gamma L^*G^{-1}QG^{-1}.
  \label{eq:residual-augmented-boundary-linear-corollary}
\end{equation}

\begin{proposition}[Boundary failure of the exact inverse-moment identity]
\label{prop:residual-augmented-boundary-counterexample}
Let
\begin{equation}
  X=\begin{bmatrix}
    1&0\\0&1\\1&0\\0&1\\0&0
  \end{bmatrix},\qquad
  y=(1,-1,2,0,3)^\top,\qquad s=3.
  \label{eq:residual-augmented-boundary-counterexample-data}
\end{equation}
Then $G=2I_2$, $L^*=10$, and $M_s=I_2/6$, whereas
\begin{equation}
  \mathbb E_B[G_S^{-1}]=\frac{39}{40}I_2,\qquad
  M_s-L^*\left\{G^{-1}-\beta\mathbb E_B[G_S^{-1}]\right\}
  =-\frac{19}{12}I_2\ne0.
  \label{eq:residual-augmented-boundary-identity-defect}
\end{equation}
The boundary resolvent inequality nevertheless remains strict, with slack
$(209/126)I_2\succ0$.
\end{proposition}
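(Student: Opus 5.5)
The proposition is a finite computation, so the plan is to enumerate everything exactly. Because the design is diagonal and the two coordinates decouple, every matrix involved will be a multiple of $I_2$ by symmetry.

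First I would compute the full-pool quantities. The rows are $x_1=x_3=e_1$, $x_2=x_4=e_2$ and $x_5=0$, so $G=2I_2$. Averaging gives $w^*=(3/2,-1/2)^\top$, and hence $e=(-\tfrac12,-\tfrac12,\tfrac12,\tfrac12,3)^\top$ and $L^*=1+9=10$. With $m=5$, $d=2$, $s=3$ the constants are $\beta=1/3$ and $\gamma=(m-s)/(m-d-1)=1$.

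Next I would enumerate the supported $3$-sets. A set is supported exactly when it contains at least one row from $\{1,3\}$ and one from $\{2,4\}$, and then $D_S=a\,b$, where $a$ and $b$ count its $e_1$ rows and $e_2$ rows. This gives three event classes, each of total weight $4$ out of the normalizer $Z_{X,3}=\binom31\cdot4=12$ from Proposition~\ref{prop:ue-normalizer}.
\begin{itemize}
\item \emph{Row $5$ plus one row from each pair.} There are four such sets, each with $D_S=1$. Here $G_S=I_2$, and the fit interpolates, so $L_S=9$.
\item \emph{$a=2$, $b=1$.} There are two such sets, each with $D_S=2$. Here $G_S^{-1}=\operatorname{diag}(1/2,1)$ and $L_S=(y_1-y_3)^2/2=1/2$.
\item \emph{$a=1$, $b=2$.} This is the mirror case, with $G_S^{-1}=\operatorname{diag}(1,1/2)$ and $L_S=1/2$.
\end{itemize}
In each coordinate, the deviation of $w_S$ from $w^*$ is $\pm1/2$ when that coordinate uses a single row, and $0$ when it averages both rows. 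The signs are symmetric, so the off-diagonal terms cancel. This yields $M_s=\tfrac23\cdot\tfrac14 I_2=I_2/6$.

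For the augmented expectation I would not enumerate $\det(B_S^\top B_S)$ directly. Instead I would use the setwise change of measure~\eqref{eq:residual-augmented-measure-change}, which only needs the Schur identity and therefore holds here: $\mathbb P_X(S)L_S=\beta L^*\mathbb P_B(S)$. This gives $\mathbb E_B[G_S^{-1}]=\mathbb E_X[L_SG_S^{-1}]/(\beta L^*)$. From the enumeration above,
\[
 \mathbb E_X[L_SG_S^{-1}]=\tfrac13\cdot9I_2+\tfrac16\bigl(\operatorname{diag}(\tfrac12,1)+\operatorname{diag}(1,\tfrac12)\bigr)=\tfrac{13}{4}I_2,
\]
so $\mathbb E_B[G_S^{-1}]=\tfrac{3}{10}\cdot\tfrac{13}{4}I_2=\tfrac{39}{40}I_2$. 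Then $L^*\{G^{-1}-\beta\mathbb E_B[G_S^{-1}]\}=10\cdot\tfrac{7}{40}I_2=\tfrac74I_2$, and the defect is $\tfrac16-\tfrac74=-\tfrac{19}{12}$ times $I_2$.

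The conceptual point is that the row-general-position identity~\eqref{eq:residual-augmented-row-general-covariance} fails here, because the basis coupling breaks when duplicate rows create singular $2$-subsets. I would remark on this, but the equality claims themselves are verified purely by the arithmetic above.

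For the slack I would evaluate the right side of~\eqref{eq:residual-augmented-boundary-resolvent}. The leverages are $\ell_i=1/2$ for $i\le4$ and $\ell_5=0$, and $z_i^2=e_i^2/L^*$ equals $1/40$ for $i\le4$ and $9/10$ for $i=5$. Hence $1-h_i=19/40$ for rows $1$ through $4$, and row $5$ contributes nothing because $x_5=0$. This gives $Q=\tfrac{19}{20}I_2$ and $(G-\gamma Q)^{-1}=\tfrac{20}{21}I_2$. The resolvent ceiling is therefore $10(\tfrac12-\tfrac{20}{63})I_2=\tfrac{230}{126}I_2$, and subtracting $M_s=\tfrac{21}{126}I_2$ leaves the stated slack $\tfrac{209}{126}I_2\succ0$.

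The main risk is bookkeeping, not any conceptual step. One must keep indexed duplicate rows distinct, exclude zero-volume sets from both the estimator and the Schur measure change, and correctly identify $L_S=9$ on the sets containing row $5$.
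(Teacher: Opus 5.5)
Your proposal is correct and follows essentially the paper's route: you use the same enumeration of the eight supported triples into three classes, get $M_s=I_2/6$ from the symmetric $\pm\tfrac12$ deviations, obtain $\mathbb E_B[G_S^{-1}]$ from the setwise Schur change of measure (your $\mathbb P_X(S)L_S/(\beta L^*)$ is exactly the paper's $D_SL_S/40$), and compute the slack from $h_i=21/40$, $h_5=9/10$ and $Q=\tfrac{19}{20}I_2$. All of your arithmetic checks out.
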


\begin{proof}
Here $w^*=(3/2,-1/2)^\top$ and
$e=(-1/2,-1/2,1/2,1/2,3)^\top$, which gives the stated $G$ and $L^*$.
Put $C_1=\{1,3\}$ and $C_2=\{2,4\}$.  The eight ordinary-supported
triples, whose normalizer is $Z_{X,3}=12$, split as follows:
\begin{center}
\begin{tabular}{c@{\quad}c@{\quad}c@{\quad}c@{\quad}c}
type of $S$ & count & $D_S$ & $w_S-w^*$ & $L_S$ \\
\midrule
$C_1\cup\{j\}$, $j\in C_2$ & $2$ & $2$ & $(0,\,\pm1/2)^\top$ & $1/2$ \\
$\{i\}\cup C_2$, $i\in C_1$ & $2$ & $2$ & $(\pm1/2,\,0)^\top$ & $1/2$ \\
$\{i,j,5\}$, $(i,j)\in C_1\times C_2$ & $4$ & $1$ & $(\pm1/2,\,\pm1/2)^\top$ & $9$
\end{tabular}
\end{center}
All four sign combinations occur in the last row.  Weighting by
$\mathbb P_X(S)=D_S/12$, the three rows contribute respectively
$\operatorname{diag}(0,1/12)$, $\operatorname{diag}(1/12,0)$, and
$I_2/12$ to the centered second moment.  Hence $M_s=I_2/6$.

For the augmented law, $Z_{B,3}=1$.  The setwise Schur determinant gives
\begin{equation}
  \mathbb P_B(S)=\det(B_S^\top B_S)
  =\frac{D_SL_S}{\det(G)L^*}=\frac{D_SL_S}{40}.
  \label{eq:residual-augmented-boundary-counterexample-weights}
\end{equation}
Thus the per-triple probabilities in the three rows are, respectively,
$1/40$, $1/40$, and $9/40$.  Their inverse Gramians are
$\operatorname{diag}(1/2,1)$, $\operatorname{diag}(1,1/2)$, and $I_2$.
Exact summation therefore gives
\begin{equation}
  \mathbb E_B[G_S^{-1}]
  =\frac{2}{40}\operatorname{diag}(1/2,1)
   +\frac{2}{40}\operatorname{diag}(1,1/2)
   +\frac{36}{40}I_2
  =\frac{39}{40}I_2.
\end{equation}
Since $\beta=1/3$, substitution yields the defect in
\eqref{eq:residual-augmented-boundary-identity-defect}.

Finally,
$h_i=x_i^\top G^{-1}x_i+e_i^2/L^*=21/40$ for $i\le4$, while
$h_5=9/10$.  Hence $\gamma=1$, $Q=(19/20)I_2$, and
$G-\gamma Q=(21/20)I_2$.  The exact boundary slack is therefore
\begin{equation}
  L^*\left[G^{-1}-\beta(G-\gamma Q)^{-1}\right]-M_s
  =10\left(\frac12-\frac13\frac{20}{21}\right)I_2-\frac16I_2
  =\frac{209}{126}I_2\succ0.
\end{equation}
\end{proof}

\begin{remark}[Identity--inequality firewall]
\label{rem:residual-augmented-boundary-firewall}
The Schur determinant, exact measure change, augmented exclusion marginal, and augmented
Gram first moment are support-aware determinant statements.  The covariance
identity \eqref{eq:residual-augmented-exact-covariance-transform}, however,
is row-general-position-only.  On an arbitrary full-column-rank boundary
design, only the one-sided upper inequalities
\eqref{eq:second-moment-boundary-hierarchy},
\eqref{eq:residual-augmented-boundary-resolvent} and
\eqref{eq:residual-augmented-boundary-linear-corollary} extend.  In
particular, the boundary passage neither assigns a fit to an ordinary
zero-volume set nor asserts an exact inverse-moment covariance identity
or an exact covariance-gap decomposition there.
Proposition~\ref{prop:residual-augmented-boundary-counterexample}
shows that the exact identity can indeed fail while the boundary inequality
remains valid.
\end{remark}

\section{Proof of the robust pre-response phase boundary}
\label{app:proof-robust-pre-response-geometry}

This section gives the support calculation behind
Theorem~\ref{thm:robust-pre-response-phase}.  It uses ordinary unrescaled
indexed fixed-size volume sampling and selected unweighted least squares.
Zero-volume sets have zero probability and receive no selected fit.  The
proof has four steps: compactness and positive semidefiniteness; the
response-uniform one-sided slack bound; directional spectral contact from a
zero-margin witness; and the converse implication from spectral contact.
\begin{equation}
  \begin{gathered}
    \text{compactness, PSD, and the one-sided bound}
    \longrightarrow\text{uniform slack},\\[-2pt]
    \text{zero-margin support saturation}
    \longrightarrow\text{directional spectral contact},\\[-2pt]
    \text{spectral contact and the one-sided bound}
    \longrightarrow\text{zero margin}.
  \end{gathered}
  \label{eq:app-robust-pre-response-proof-map}
\end{equation}

\subsection{Domain, support, and extrema}

Throughout this proof, $A\in\R^{m\times d}$ has orthonormal columns,
$d\ge1$, $m\ge d+2$, and $d<s<m$.  For any $\theta\in\R^d$, let
\begin{equation}
  \mathcal Z_A=\{z\in\ker(A^\top):\lVert z\rVert_2=1\},
  \qquad
  y_z=A\theta+\sqrt{L^*}z,
  \qquad L^*>0.
  \label{eq:app-robust-pre-response-domain}
\end{equation}
The residual sphere is nonempty and compact because $m>d$.  For a positive-
volume indexed set $S$, define
\begin{equation}
  p_A(S)=\frac{\det(A_S^\top A_S)}{\binom{m-d}{s-d}},
  \qquad
  w_S(z)=(A_S^\top A_S)^{-1}A_S^\top(y_z)_S.
  \label{eq:app-robust-pre-response-law}
\end{equation}
The fixed-cardinality Cauchy--Binet identity makes the displayed probabilities
sum to one over all indexed $s$-sets; zero determinants simply contribute
zero.  This is the ordinary volume law and its support convention.  For fixed
$A$ and $S$, $w_S(z)$ is affine in $z$, so the finite sum
\begin{equation}
  \overline M_s^{A,z}
  =\sum_{\substack{|S|=s\\\det(A_S^\top A_S)>0}}
  p_A(S)(w_S(z)-\theta)(w_S(z)-\theta)^\top
  \label{eq:app-robust-pre-response-moment}
\end{equation}
is continuous in $z$.  Hence both the minimum defining $\nu_A$ and the
maximum defining $\mathsf T_{\mathrm{spec}}^{\mathrm{RU}}(A,s)$ are attained.

For $z\in\mathcal Z_A$, the matrix $B=[A\ z]$ has orthonormal columns.
Consequently $BB^\top$ is an orthogonal projection and
\begin{equation}
  h_i:=(BB^\top)_{ii}=\ell_i+z_i^2\le1.
  \label{eq:app-robust-pre-response-leverage}
\end{equation}
It follows that
\begin{equation}
  R_A(z)=\sum_{i=1}^m(1-h_i)a_i a_i^\top\succeq0,
  \qquad \nu_A\ge0.
  \label{eq:app-robust-pre-response-psd}
\end{equation}

\subsection{Uniform slack and the strict direction}

The response-aware covariance relation, in its boundary-valid one-sided
form, gives for every $z\in\mathcal Z_A$
\begin{equation}
  \frac{\overline M_s^{A,z}}{L^*}
  \preceq \alpha_s I_d-\beta_s\gamma_sR_A(z),
  \qquad
  \alpha_s=\frac{m-s}{m-d},\quad
  \beta_s=\frac{s-d}{m-d},\quad
  \gamma_s=\frac{m-s}{m-d-1}.
  \label{eq:app-robust-pre-response-one-sided}
\end{equation}
This step is valid for arbitrary full-column-rank designs after whitening;
it does not continue an exact inverse-moment identity through a
rank-changing support boundary.  Since
$R_A(z)\succeq\nu_A I_d$ for every admissible $z$,
\eqref{eq:app-robust-pre-response-one-sided} implies
\begin{equation}
  \frac{\overline M_s^{A,z}}{L^*}
  \preceq(\alpha_s-\beta_s\gamma_s\nu_A)I_d.
  \label{eq:app-robust-pre-response-uniform-bound}
\end{equation}
Taking the largest eigenvalue and then the maximum over $z$ proves
\eqref{eq:robust-pre-response-slack}.  Because
$\beta_s\gamma_s>0$, it also proves
$\nu_A>0\Rightarrow\mathsf T_{\mathrm{spec}}^{\mathrm{RU}}(A,s)<\alpha_s$.

\subsection{A zero-margin witness makes directional spectral contact}

Assume now that $\nu_A=0$.  By attainment and positive semidefiniteness,
there are $z\in\mathcal Z_A$ and a unit vector $v\in\R^d$ such that
\begin{equation}
  v^\top R_A(z)v=0.
  \label{eq:app-robust-pre-response-zero-rayleigh}
\end{equation}
Set
\begin{equation}
  r_i=a_i^\top v,
  \qquad J=\{i:r_i\ne0\}.
  \label{eq:app-robust-pre-response-active-set}
\end{equation}
Using \eqref{eq:app-robust-pre-response-leverage}, every term in
\begin{equation}
  0=v^\top R_A(z)v
   =\sum_i(1-h_i)r_i^2
  \label{eq:app-robust-pre-response-termwise}
\end{equation}
is nonnegative.  Therefore
\begin{equation}
  h_i=1\quad(i\in J),
  \qquad
  \sum_{i\in J}r_i^2=\lVert Av\rVert_2^2=1.
  \label{eq:app-robust-pre-response-saturation}
\end{equation}
The no-coloop condition gives
\begin{equation}
  z_i^2=1-\ell_i>0\quad(i\in J).
  \label{eq:app-robust-pre-response-nonzero}
\end{equation}

For a saturated row, $BB^\top e_i=e_i$, hence
\begin{equation}
  e_i=Aa_i+zz_i.
  \label{eq:app-robust-pre-response-saturated-row}
\end{equation}
Let $S$ be a positive-volume set and suppose it omits $i\in J$.  Restricting
\eqref{eq:app-robust-pre-response-saturated-row} to $S$ gives
$z_S=-(1/z_i)A_Sa_i$.  The selected least-squares equations then yield
\begin{equation}
  \frac{w_S(z)-\theta}{\sqrt{L^*}}
  =(A_S^\top A_S)^{-1}A_S^\top z_S
  =-\frac{a_i}{z_i}.
  \label{eq:app-robust-pre-response-omission-error}
\end{equation}

No positive-volume set can omit two distinct members $i,j\in J$.  Otherwise
\eqref{eq:app-robust-pre-response-omission-error} would give
$a_i/z_i=a_j/z_j$.  At the same time, two distinct saturated rows are
orthogonal in the augmented space, so
\begin{equation}
  a_i^\top a_j+z_i z_j=0.
  \label{eq:app-robust-pre-response-augmented-orthogonality}
\end{equation}
Writing $a_i=z_iq$ and $a_j=z_jq$ for the common vector
$q=a_i/z_i=a_j/z_j$ would turn the left-hand side into
$z_i z_j(\lVert q\rVert_2^2+1)$, which is nonzero by
\eqref{eq:app-robust-pre-response-nonzero}.  This is a contradiction.

If $S$ contains every member of $J$, all omitted rows are orthogonal to $v$.
Thus
\begin{equation}
  (A_S^\top A_S)v=v.
  \label{eq:app-robust-pre-response-restricted-gram}
\end{equation}
By symmetry, $v^\top(A_S^\top A_S)^{-1}=v^\top$, and therefore
\begin{equation}
  v^\top\frac{w_S(z)-\theta}{\sqrt{L^*}}
  =v^\top A_S^\top z_S
  =\sum_{i\in J}r_i z_i
  =v^\top A^\top z=0.
  \label{eq:app-robust-pre-response-no-omission-error}
\end{equation}
The two cases above exhaust the positive-volume support.

For $i\in J$, fixed-cardinality Cauchy--Binet gives the exact exclusion
marginal
\begin{align}
  \mathbb P_A(i\notin S)
  &=\frac{\binom{m-d-1}{s-d}
      \det(A_{-i}^\top A_{-i})}
      {\binom{m-d}{s-d}} \notag\\
  &=\frac{m-s}{m-d}(1-\ell_i)
   =\alpha_s z_i^2.
  \label{eq:app-robust-pre-response-exclusion}
\end{align}
Here $A_{-i}^\top A_{-i}=I_d-a_i a_i^\top\succ0$ by the no-coloop
condition, and the rank-one determinant lemma gives
$\det(I_d-a_i a_i^\top)=1-\ell_i$.  The omission events for members of $J$
are mutually exclusive, as shown above, and the no-omission branch has zero
directional error.  Consequently,
\begin{align}
  \frac{v^\top\overline M_s^{A,z}v}{L^*}
  &=\sum_{i\in J}\mathbb P_A(i\notin S)\frac{r_i^2}{z_i^2} \notag\\
  &=\sum_{i\in J}\alpha_s z_i^2\frac{r_i^2}{z_i^2}
   =\alpha_s\sum_{i\in J}r_i^2
   =\alpha_s.
  \label{eq:app-robust-pre-response-directional-equality}
\end{align}
The universal spectral envelope gives the reverse inequality
$\lambda_{\max}(\overline M_s^{A,z})/L^*\le\alpha_s$.  Thus the covariance
touches the Loewner ceiling in direction $v$.  Together with the spectral
upper bound, this is spectral contact, not the matrix equality
$\overline M_s^{A,z}=\alpha_sL^*I_d$.  The witness $z$ and direction $v$ were
selected from the feature geometry and do not depend on $s$.  Applying the same
calculation at any $s'\in\{d+1,\ldots,m-1\}$ proves the simultaneous
witness statement in \eqref{eq:robust-pre-response-common-witness}.

\subsection{Spectral contact forces zero margin}

Conversely, suppose
$\mathsf T_{\mathrm{spec}}^{\mathrm{RU}}(A,s)=\alpha_s$.  Compactness gives a
maximizing $z\in\mathcal Z_A$; choose a unit top eigenvector $u$ of
$\overline M_s^{A,z}$.  Evaluating
\eqref{eq:app-robust-pre-response-one-sided} in $u$ gives
\begin{equation}
  \begin{aligned}
    \alpha_s
    &=\frac{u^\top\overline M_s^{A,z}u}{L^*}\\
    &\le\alpha_s-\beta_s\gamma_su^\top R_A(z)u.
  \end{aligned}
  \label{eq:app-robust-pre-response-converse}
\end{equation}
Since $R_A(z)\succeq0$ and $\beta_s\gamma_s>0$, equality forces
\begin{equation}
  u^\top R_A(z)u=0.
  \label{eq:app-robust-pre-response-converse-zero}
\end{equation}
For a positive-semidefinite matrix this implies
$\lambda_{\min}(R_A(z))=0$.  The nonnegativity already proved then gives
$\nu_A=0$.  This converse does not need the no-coloop condition; that
condition is retained in the theorem because it is used by the forward
support argument.

Combining the strict implication, the zero-margin spectral-contact argument, and
this converse proves both equivalences in Theorem~\ref{thm:robust-pre-response-phase}.
\qed

\subsection{Scope boundaries}

The no-coloop condition is sufficient, not asserted to be necessary.  It
cannot be removed wholesale: for
\begin{equation}
  d=1,\qquad m=3,\qquad s=2,\qquad A=(1,0,0)^\top,
  \label{eq:app-robust-pre-response-coloop-example}
\end{equation}
the margin is zero, but every positive-volume sample contains the
leverage-one row and the selected coefficient is deterministic, so
$\mathsf T_{\mathrm{spec}}^{\mathrm{RU}}(A,2)=0<1/2=\alpha_2$.  The theorem
also requires $L^*>0$ and the strict-interior regime $d<s<m$; its normalized
residual and covariance ratio are not evaluated at $L^*=0$, $s=d$, or $s=m$.
The ordinary sampler, selected unweighted estimator, and centered
full-Gram-whitened spectral covariance are part of the statement.  No
rescaling, replacement, reweighting, ridge term, alternative estimator, or
different covariance or loss metric is used.

\section{Global complement phase test and certificate}
\label{app:global-complement-certificate}

This appendix proves Proposition~\ref{prop:global-complement-certificate}.
The argument is self-contained on the real Parseval no-coloop domain.  Its
role is to translate the margin in
Theorem~\ref{thm:robust-pre-response-phase}; it does not introduce a new
sampling law, estimator, covariance target, or general frame-erasure claim.

\paragraph{Relation to prior work.}
The complement Gram interpretation of $P_\perp$ is standard
Naimark-complement background~\citep{casazza2013naimark}.  The factor
$\kappa_\perp$ is a particular weighted leave-one-out lower-frame quantity
of erasure-stability type~\citep{fickus2012nerf}.  Neither source states the
residual margin $\nu_A$, the product $t_\perp$, or the ordinary-volume
covariance consequence below; those follow from the direct derivation here
and the already proved phase theorem.

\begin{proof}[Proof of Proposition~\ref{prop:global-complement-certificate}]
Put $q=m-d$ and choose $N\in\R^{m\times q}$ with orthonormal columns spanning
$\ker(A^\top)$.  If $n_i^\top$ is row $i$ of $N$, then
\[
  P_\perp=NN^\top,
  \qquad
  c_i=(P_\perp)_{ii}=\lVert n_i\rVert_2^2.
\]
The no-coloop assumption makes
\[
  u_i:=\frac{n_i}{\sqrt{c_i}}\in\R^q
\]
well-defined and unit norm.  Therefore
\begin{equation}
  (P_\perp)_{ij}
  =\sqrt{c_ic_j}\,u_i^\top u_j,
  \qquad
  \rho_\perp(A)=\max_{i\ne j}\lvert u_i^\top u_j\rvert\in[0,1].
  \label{eq:app-global-complement-normalization}
\end{equation}
Every $z\in\mathcal Z_A$ has the unique form $z=N\xi$ for a unit
$\xi\in\R^q$.  Hence
\begin{equation}
  z_i=\sqrt{c_i}\,u_i^\top\xi,
  \qquad
  c_i-z_i^2=c_i\{1-(u_i^\top\xi)^2\}\ge0.
  \label{eq:app-global-complement-residual-coordinate}
\end{equation}
Thus $R_A(z)\succeq0$ for every $z\in\mathcal Z_A$, so $\nu_A\ge0$.

We first prove the zero-phase characterization.  The sphere
$\mathcal Z_A$ is compact and $z\mapsto\lambda_{\min}(R_A(z))$ is
continuous.  If $\nu_A=0$, there are unit vectors
$\xi\in\R^q$ and $v\in\R^d$ such that, for $z=N\xi$,
\begin{equation}
  0=v^\top R_A(z)v
   =\sum_{i=1}^m
      c_i\{1-(u_i^\top\xi)^2\}(a_i^\top v)^2.
  \label{eq:app-global-complement-zero-sum}
\end{equation}
Every summand is nonnegative.  Hence, for every index in
$J:=\{i:a_i^\top v\ne0\}$, equality in Cauchy--Schwarz gives
$u_i=\sigma_i\xi$ for a sign $\sigma_i\in\{-1,1\}$.  The set $J$ has at
least two elements.  Indeed, $\lVert Av\rVert_2=\lVert v\rVert_2=1$; if
$Av$ were supported on a single index $i$, then
$\lvert a_i^\top v\rvert=1$, contradicting
$\lvert a_i^\top v\rvert\le\lVert a_i\rVert_2<1$.  Thus two distinct
indices $i,j\in J$ satisfy $\lvert u_i^\top u_j\rvert=1$, and
$\rho_\perp(A)=1$.

Conversely, suppose $\rho_\perp(A)=1$.  Since the maximum is over finitely
many pairs, there are distinct $i,j$ and a sign
$\sigma\in\{-1,1\}$ such that $u_j=\sigma u_i$.  Define
\begin{equation}
  x:=\sqrt{c_j}\,e_i-\sigma\sqrt{c_i}\,e_j.
  \label{eq:app-global-complement-pair-witness}
\end{equation}
Since $n_i=\sqrt{c_i}u_i$ and
$n_j=\sigma\sqrt{c_j}u_i$,
\[
  N^\top x
  =\sqrt{c_j}n_i-\sigma\sqrt{c_i}n_j=0.
\]
Thus $x\in\ker(N^\top)=\operatorname{col}(A)$.  Set
\[
  v:=\frac{A^\top x}{\lVert x\rVert_2},
  \qquad
  \xi:=u_i,
  \qquad
  z:=N\xi.
\]
Then $v$ and $z$ are unit vectors, $Av=x/\lVert x\rVert_2$, and
\[
  z_i=\sqrt{c_i},
  \qquad
  z_j=\sigma\sqrt{c_j}.
\]
The vector $Av$ is supported on $\{i,j\}$, while both corresponding
coefficients $c_i-z_i^2$ and $c_j-z_j^2$ vanish.  Thus
$v^\top R_A(z)v=0$.  Since $R_A(z)\succeq0$, its smallest eigenvalue is
zero, and $\nu_A=0$.

For any distinct $i,j$, the relevant principal complement minor is
\begin{equation}
  \det\bigl((P_\perp)_{\{i,j\},\{i,j\}}\bigr)
  =c_ic_j-(P_\perp)_{ij}^2
  =c_ic_j\{1-(u_i^\top u_j)^2\}.
  \label{eq:app-global-complement-pair-minor}
\end{equation}
Because $c_ic_j>0$, this minor vanishes exactly when
$\lvert u_i^\top u_j\rvert=1$.  This proves all equivalences in
\eqref{eq:global-complement-zero-equivalence}, including both parallel and
antiparallel complement directions.

The same pair-minor has a direct row-deletion interpretation.  For a pair
$J=\{i,j\}$, let $A_{-J}$ retain the indexed rows outside $J$.  Since
$P_\perp=I_m-AA^\top$ and $A^\top A=I_d$, Sylvester's determinant identity
gives
\begin{equation}
 \det\bigl((P_\perp)_{J,J}\bigr)
 =\det\bigl(I_2-A_JA_J^\top\bigr)
 =\det\bigl(I_d-A_J^\top A_J\bigr)
 =\det\bigl(A_{-J}^\top A_{-J}\bigr).
 \label{eq:app-global-complement-rank-deletion}
\end{equation}
Consequently, the vanishing-pair condition is equivalent to the existence of
two indexed whitened rows whose deletion lowers the column rank of $A$.  This
is only a translation of the complement condition used above; no claim about
a new deletion-based sampling method is intended.

It remains to prove the lower certificate.  Let
$c_{\min}:=\min_i c_i>0$ and
\[
  S_k:=\sum_{i\ne k}c_i a_i a_i^\top.
\]
Parsevalness and no-coloop give
\begin{equation}
  S_k
  \succeq c_{\min}\sum_{i\ne k}a_i a_i^\top
  =c_{\min}(I_d-a_ka_k^\top)
  \succeq c_{\min}c_k I_d
  \succeq c_{\min}^2I_d.
  \label{eq:app-global-complement-kappa-floor}
\end{equation}
The middle spectral inequality follows from
$a_ka_k^\top\preceq\lVert a_k\rVert_2^2I_d$.  Therefore
$\kappa_\perp(A)\ge c_{\min}^2>0$.

We use the following weighted Gram estimate.  If $u_1,\ldots,u_m$ are unit
vectors with $\max_{i\ne j}\lvert u_i^\top u_j\rvert\le\rho$ and
$w_i\ge0$, then
\begin{equation}
  \lambda_{\max}\left(\sum_iw_i u_i u_i^\top\right)
  \le
  \rho\sum_iw_i+(1-\rho)\max_iw_i.
  \label{eq:app-global-complement-weighted-gram}
\end{equation}
To verify its direction, let $U$ have rows $u_i^\top$,
$D=\operatorname{diag}(w_1,\ldots,w_m)$, and let $y\in\R^m$ be unit.
The nonzero eigenvalues of $U^\top DU$ and
$D^{1/2}UU^\top D^{1/2}$ agree, while
\begin{align*}
  y^\top D^{1/2}UU^\top D^{1/2}y
  &\leq \sum_iw_i y_i^2
       +\rho\sum_{i\ne j}\sqrt{w_iw_j}\lvert y_i y_j\rvert\\
  &=(1-\rho)\sum_iw_i y_i^2
       +\rho\left(\sum_i\sqrt{w_i}\lvert y_i\rvert\right)^2\\
  &\leq (1-\rho)\max_iw_i+\rho\sum_iw_i.
\end{align*}
This proves the upper, rather than lower, spectral bound in
\eqref{eq:app-global-complement-weighted-gram}; no common sign is imposed on
the off-diagonal complement inner products.

Fix unit $v\in\R^d$ and $z=N\xi\in\mathcal Z_A$, and set
\[
  w_i:=c_i(a_i^\top v)^2,\qquad
  W:=\sum_iw_i,\qquad
  w_\star:=\max_iw_i.
\]
Using \eqref{eq:app-global-complement-residual-coordinate} and then
\eqref{eq:app-global-complement-weighted-gram} with
$\rho=\rho_\perp(A)$ gives
\begin{align}
  v^\top R_A(z)v
  &=W-\xi^\top\left(\sum_iw_i u_i u_i^\top\right)\xi\notag\\
  &\geq(1-\rho_\perp(A))(W-w_\star).
  \label{eq:app-global-complement-weighted-direction}
\end{align}
Choose $k_\star$ with $w_{k_\star}=w_\star$.  Then
\begin{equation}
  W-w_\star
  =v^\top S_{k_\star}v
  \geq\lambda_{\min}(S_{k_\star})
  \geq\kappa_\perp(A).
  \label{eq:app-global-complement-kappa-direction}
\end{equation}
These bounds hold for every unit $v$ and every $z\in\mathcal Z_A$, so
\[
  R_A(z)\succeq t_\perp(A)I_d
  \qquad(z\in\mathcal Z_A).
\]
Minimizing over $z$ proves $t_\perp(A)\le\nu_A$, while
$\rho_\perp(A)\in[0,1]$ and $\kappa_\perp(A)>0$ give
$t_\perp(A)\ge0$.
Combining this conclusion with
\eqref{eq:app-global-complement-kappa-floor} gives the eigenvalue-free coarse
floor
\begin{equation}
 0\le (1-\rho_\perp(A))c_{\min}^2
 \le t_\perp(A)\le\nu_A.
 \label{eq:app-global-complement-cmin-floor}
\end{equation}
This floor discards the design-specific leave-one-out factor
$\kappa_\perp(A)$, so it is a supplementary analytic check rather than a
positive-margin approximation.
\end{proof}

\paragraph{Covariance consequence.}
For $d<s<m$, the factors $\beta_s$ and $\gamma_s$ in
Theorem~\ref{thm:robust-pre-response-phase} are positive.  Combining its
one-sided slack inequality with $t_\perp\le\nu_A$ gives
\eqref{eq:global-complement-covariance-consequence}.  This implication
retains the theorem's centered, full-Gram-whitened directional spectral
target and every-compatible-response quantifier; it is not an equality claim.

\section{Critical geometry: proof and relation to prior work}
\label{app:critical-geometry}

This section proves Theorem~\ref{thm:critical-geometry} and the unnumbered
sharp critical witness in Section~\ref{sec:critical-geometry}.  Throughout,
$A\in\R^{2d\times d}$ has orthonormal columns, each row has squared norm
$1/2$, $P=AA^\top$, and $P_\perp=I_{2d}-P$.

\subsection{Fixed-complement residual weighting}

For $z\in\mathcal Z_A$, one has $P_\perp z=z$.  Therefore
\begin{equation}
  Q_z=P_\perp-zz^\top
  \label{eq:app-critical-qz}
\end{equation}
is an orthogonal projection: $Q_z^2=Q_z$, $Q_z\succeq0$, and
$\operatorname{rank}(Q_z)=d-1$.  Its diagonal is
\begin{equation}
  (Q_z)_{ii}=(P_\perp)_{ii}-z_i^2
  =1-\ell_i-z_i^2=\frac12-z_i^2.
  \label{eq:app-critical-qz-diagonal}
\end{equation}
Consequently,
\begin{equation}
  R_A(z)
  =\sum_i\left(\frac12-z_i^2\right)a_i a_i^\top
  =A^\top\operatorname{Diag}(\operatorname{diag}Q_z)A.
  \label{eq:app-critical-weighted-frame}
\end{equation}
This proves the fixed-complement representation used in the main text.  It
also gives $z_i^2\le1/2$ and hence $R_A(z)\succeq0$.

The ingredients surrounding \eqref{eq:app-critical-weighted-frame} are
established frame theory.  Naimark complementation supplies the complementary
projection and Gram geometry~\citep{casazza2013naimark}; finite projection
diagonals and Schur--Horn frame admissibility describe broader diagonal
realization problems~\citep{kadison2002pythagorean,antezana2007schurhorn};
and weighted-frame operators treat exogenously specified weights
\citep{balazs2010weighted}.  The restriction here is narrower: $P_\perp$ is
fixed, $Q_z\preceq P_\perp$ has corank one in that fixed range, and the
weights are then minimized over $z\in\operatorname{ran}(P_\perp)\cap
S^{2d-1}$.  This attribution distinguishes the residual functional from a
claim to Naimark complementation, projection diagonals, or weighted frames.

\subsection{Zero locus}

Suppose first that $\nu_A=0$.  Compactness of $\mathcal Z_A$ and of the unit
sphere in $\R^d$ gives unit vectors $z\in\mathcal Z_A$ and $v\in\R^d$ for
which
\begin{equation}
  0=v^\top R_A(z)v
  =\sum_i\left(\frac12-z_i^2\right)(a_i^\top v)^2.
  \label{eq:app-critical-zero-sum}
\end{equation}
Every summand is nonnegative.  Hence every row nonorthogonal to $v$ must have
$z_i^2=1/2$.  Parsevalness gives
\begin{equation}
  \sum_i(a_i^\top v)^2=1,
  \qquad
  (a_i^\top v)^2\le\lVert a_i\rVert_2^2=\frac12.
  \label{eq:app-critical-parseval-direction}
\end{equation}
At least two rows are therefore nonorthogonal to $v$.  Since
$\sum_i z_i^2=1$, exactly two coordinates, say $i$ and $j$, can be
saturated.  The two nonzero terms in
\eqref{eq:app-critical-parseval-direction} must both equal $1/2$.
Equality in Cauchy--Schwarz yields signs $\sigma_i,\sigma_j$ such that
\begin{equation}
  a_i=\frac{\sigma_i}{\sqrt2}v,
  \qquad
  a_j=\frac{\sigma_j}{\sqrt2}v,
  \qquad
  a_k^\top v=0\quad(k\notin\{i,j\}).
  \label{eq:app-critical-zero-geometry}
\end{equation}

Conversely, assume \eqref{eq:app-critical-zero-geometry} and define
\begin{equation}
  z_i=\frac{\sigma_i}{\sqrt2},
  \qquad
  z_j=-\frac{\sigma_j}{\sqrt2},
  \qquad
  z_k=0\quad(k\notin\{i,j\}).
  \label{eq:app-critical-zero-witness}
\end{equation}
Then $A^\top z=0$, $\lVert z\rVert_2=1$, and
$v^\top R_A(z)v=0$.  Since every $R_A(z)$ is positive semidefinite, this
proves the equivalence in Theorem~\ref{thm:critical-geometry}.

This zero boundary is not claimed as a newly identified erasure geometry.
For a uniform Parseval frame, the exact binary two-erasure formula of
\citet{bodmann2005frames} gives, in the present normalization, binary error
$(1+\rho_A)/2$ and retained binary lower bound $(1-\rho_A)/2$.  It therefore
has the same projective-duplicate singular boundary.  Binary masking is a
different functional from \eqref{eq:app-critical-weighted-frame}: the latter
uses the coupled fractional weights $1/2-z_i^2$ generated by one direction
inside a fixed complement.

\subsection{Coherence sandwich}

Put $u_i=\sqrt2a_i$.  Then $\lVert u_i\rVert_2=1$,
$\sum_i u_i u_i^\top=2I_d$, and
$\rho_A=\max_{i\ne j}|u_i^\top u_j|$.  For unit
$z\in\mathcal Z_A$ and unit $v\in\R^d$, set
\begin{equation}
  p_i=z_i^2,
  \qquad
  x_i=(u_i^\top v)^2.
\end{equation}
Then $0\le p_i\le1/2$, $\sum_i p_i=1$, $\sum_i x_i=2$, and
\begin{equation}
  v^\top R_A(z)v
  =\frac12-\frac12\sum_i p_i x_i.
  \label{eq:app-critical-rayleigh}
\end{equation}
The maximum of $\sum_i p_i x_i$ over the capped simplex
$0\le p_i\le1/2$, $\sum_i p_i=1$, places mass $1/2$ on the two largest
$x_i$.  For any pair,
\begin{equation}
  x_i+x_j
  \le\lambda_{\max}(u_i u_i^\top+u_j u_j^\top)
  =1+|u_i^\top u_j|
  \le1+\rho_A.
  \label{eq:app-critical-pair-bound}
\end{equation}
Thus $\sum_i p_i x_i\le(1+\rho_A)/2$, and
\eqref{eq:app-critical-rayleigh} gives
\begin{equation}
  \nu_A\ge\frac{1-\rho_A}{4}.
  \label{eq:app-critical-lower}
\end{equation}

For the other direction, choose $i,j$ attaining $\rho_A$ and choose
$\epsilon\in\{\pm1\}$ so that
$u_i^\top(\epsilon u_j)=\rho_A$.  Define
\begin{equation}
  b=\frac{e_i-\epsilon e_j}{\sqrt2},
  \qquad
  h=b^\top P_\perp b=\frac{1+\rho_A}{2},
  \qquad
  z=\frac{P_\perp b}{\sqrt h}.
  \label{eq:app-critical-upper-witness}
\end{equation}
The vector $z$ is a unit residual, and a direct coordinate calculation gives
\begin{equation}
  z_i^2=z_j^2=\frac{1+\rho_A}{4}.
  \label{eq:app-critical-upper-coordinates}
\end{equation}
For
\begin{equation}
  v=\frac{u_i+\epsilon u_j}{\sqrt{2(1+\rho_A)}},
\end{equation}
one has $x_i=x_j=(1+\rho_A)/2$.  Keeping only these two nonnegative terms
in \eqref{eq:app-critical-rayleigh} yields
\begin{equation}
  \nu_A
  \le\frac12-\frac{(1+\rho_A)^2}{8}
  =\frac{(1-\rho_A)(3+\rho_A)}8.
  \label{eq:app-critical-upper}
\end{equation}
Together, \eqref{eq:app-critical-lower} and
\eqref{eq:app-critical-upper} prove
\eqref{eq:critical-coherence-sandwich}.

\subsection{Ordinary-volume sharp witness}

Assume the zero geometry \eqref{eq:app-critical-zero-geometry}, fix
$\theta\in\R^d$ and $L^*>0$, and use the single residual $z_*$ in
\eqref{eq:app-critical-zero-witness}, fixed before sampling and independently
of the budget.  Fix $s=d+r$ with $1\le r\le d-1$.  Choose an orthonormal basis
$[v,V]$ of $\R^d$, and for $k\notin\{i,j\}$ let $b_k=V^\top a_k$.  The matrix $B$
with these $2d-2$ rows satisfies
\begin{equation}
  B^\top B=I_{d-1}.
  \label{eq:app-critical-remainder-parseval}
\end{equation}
A positive-volume $s$-set contains exactly one or both of $i,j$.  If $T$
collects its remaining rows, the two determinant branches are
\begin{equation}
  \det(A_S^\top A_S)
  =\frac12\det(B_T^\top B_T)
  \quad\text{(exactly one)},
  \qquad
  \det(A_S^\top A_S)
  =\det(B_T^\top B_T)
  \quad\text{(both)}.
  \label{eq:app-critical-determinant-branches}
\end{equation}
The fixed-cardinality Cauchy--Binet identity gives their total weights
\begin{equation}
  W_{\mathrm{one}}=\binom{d-1}{r},
  \qquad
  W_{\mathrm{both}}=\binom{d-1}{r-1},
  \qquad
  W_{\mathrm{one}}+W_{\mathrm{both}}=\binom dr.
  \label{eq:app-critical-branch-weights}
\end{equation}
Thus
\begin{equation}
  \PP_A(\text{exactly one of $i,j$ is selected})
  =\frac{\binom{d-1}{r}}{\binom dr}
  =\frac{d-r}{d}=\frac{2d-s}{d}.
  \label{eq:app-critical-one-probability}
\end{equation}

For $y_{z_*}=A\theta+\sqrt{L^*}z_*$, the selected coefficient error on the
one-member branch is $\pm\sqrt{L^*}v$; the two signs have equal determinant
weight and therefore cancel in the mean.  On the both-member branch, the
two residual contributions cancel and the coefficient error is zero.
Direct summation over the ordinary-volume support gives
\begin{equation}
  \overline M_s^{A,z_*}
  =\frac{d-r}{d}L^*vv^\top
  =\frac{2d-s}{d}L^*vv^\top,
  \qquad
  \frac{\lambda_{\max}(\overline M_s^{A,z_*})}{L^*}
  =\frac{2d-s}{d}.
  \label{eq:app-critical-exact-covariance}
\end{equation}
\begingroup
\makeatletter
\edef\@currentlabel{\theequation}
\edef\@currentHref{equation.\theHequation}
\label{eq:critical-sharp-witness}
\makeatother
\endgroup
The same $z_*$ works for every $r\in\{1,\ldots,d-1\}$, giving the critical
equal-leverage instance of the common-witness conclusion in
Theorem~\ref{thm:robust-pre-response-phase}.
This direct support enumeration
is valid even when the repeated-pair design is outside row general position;
no exact interior inverse-moment identity is extended to that boundary.

The scope is deliberately narrow.  The response is fixed before the random
subset is drawn; the law is ordinary unrescaled indexed fixed-size volume
sampling; the estimator is selected unweighted least squares on
positive-volume sets; $d<s<2d$ and $L^*>0$; and the statistic is the centered,
full-Gram-whitened spectral coefficient covariance.  Equation
\eqref{eq:app-critical-exact-covariance} is rank one, so it does not saturate
the trace-derived scalar-loss envelope.

\citet{derezinski2018reverse} establish ordinary-volume sampling identities,
including fixed-response control at rank size and all-size unbiasedness and
inverse-Gram moments.  The scalar-loss obstruction of
\citet{derezinski2018leveraged} uses the same ordinary law and selected OLS
and has the same numerical excess factor when its dimensions are specialized,
but its theorem takes a highly nonuniform-leverage limit in a two-block
family.  At equal block scale that family becomes the duplicated-pair
geometry after whitening, but the published obstruction is a scalar expected
loss statement, not the fixed one-coordinate residual and centered rank-one
covariance equality in \eqref{eq:app-critical-exact-covariance}.  The witness
here is therefore presented only as a critical-class corollary of the phase
geometry, not as a separate volume-sampling obstruction or a claim to the
factor or duplicated-pair construction.

\section{Sound pre-response certificates: proofs and boundaries}
\label{app:pre-response-certificate}

This section proves Corollary~\ref{cor:pre-response-certificate} and derives
the fixed-profile instantiation in Appendix~\ref{app:fixed-profile-instantiation}.
It separates the exact phase
variable $\nu_A$ from several sufficient lower certificates.  None of the
arguments changes the ordinary indexed fixed-size volume law or the selected
unweighted OLS estimator.

\subsection{The general spectral certificate}

Let $N\in\R^{m\times(m-d)}$ have orthonormal columns spanning
$\ker(A^\top)$, let $D_\ell=\operatorname{diag}(\ell)$, and set
$P_\perp=I_m-AA^\top=NN^\top$.  Define
\begin{equation}
  R_0=\sum_{i=1}^m(1-\ell_i)a_i a_i^\top,
  \quad
  \tau_X=\lambda_{\max}\!\left(N^\top D_\ell N\right),
  \quad
  c_X=\bigl[\lambda_{\min}(R_0)-\tau_X\bigr]_+.
  \label{eq:pre-response-certificate}
\end{equation}
For a unit
$z\in\ker(A^\top)$, write
\begin{equation}
  T(z)=\sum_i z_i^2a_i a_i^\top,
  \qquad R_A(z)=R_0-T(z).
  \label{eq:pre-response-app-decomposition}
\end{equation}
Because $[A\ z]$ has orthonormal columns, its row leverages satisfy
$\ell_i+z_i^2\le1$, and hence
\begin{equation}
  R_A(z)=\sum_i(1-\ell_i-z_i^2)a_i a_i^\top\succeq0.
  \label{eq:pre-response-app-psd}
\end{equation}
For every $v\in\R^d$,
\begin{align}
  v^\top T(z)v
  &=\sum_i z_i^2(a_i^\top v)^2
    \le \lVert v\rVert_2^2\sum_i\ell_i z_i^2 \notag\\
  &=\lVert v\rVert_2^2z^\top P_\perp D_\ell P_\perp z
    \le\tau_X\lVert v\rVert_2^2.
  \label{eq:pre-response-app-compression}
\end{align}
Thus $T(z)\preceq\tau_XI_d$ and
\begin{equation}
  R_A(z)\succeq
  [\lambda_{\min}(R_0)-\tau_X]I_d.
  \label{eq:pre-response-app-floor-raw}
\end{equation}
Combining this with~\eqref{eq:pre-response-app-psd} gives
$R_A(z)\succeq c_XI_d$ for every admissible $z$.  Minimizing over $z$ proves
\begin{equation}
  0\le c_X\le\nu_A.
  \label{eq:pre-response-cx-lower}
\end{equation}

The computation can use the residual-space compression
\begin{equation}
  \tau_X=\lambda_{\max}(N^\top D_\ell N),
  \label{eq:pre-response-app-residual-computation}
\end{equation}
without forming a dense $m\times m$ projector.  The value is invariant under
an invertible feature-coordinate change $X\mapsto XH$: the corresponding
whitened matrices differ by a right orthogonal factor, so $R_0$ changes by
orthogonal congruence while $P_\perp,D_\ell,\tau_X$, and $c_X$ are unchanged.

\subsection{Capped, top-K, coherence, and fixed-threshold routes}

The following hierarchy records the other sufficient routes used in the main
text.  Set $c_i=1-\ell_i$ and
\begin{equation}
  \mathcal P_A=\left\{p\in\R^m:
  0\le p_i\le c_i,\ \sum_i p_i=1\right\},
  \qquad
  \underline\nu_{\rm cap}(A)=
  \min_{p\in\mathcal P_A}
  \lambda_{\min}\!\left(A^\top\operatorname{diag}(c-p)A\right).
  \label{eq:pre-response-app-capped}
\end{equation}
For every residual direction, $p=z^{\odot2}$ belongs to $\mathcal P_A$:
the augmented projection gives $0\le z_i^2\le1-\ell_i$ and unit norm gives
$\sum_i z_i^2=1$.  Enlarging the realizable residual-square set therefore
gives
\begin{equation}
  0\le\underline\nu_{\rm cap}(A)\le\nu_A.
  \label{eq:pre-response-app-capped-lower}
\end{equation}

Let $c_\star=1-\ell_-$, $K=\lceil1/c_\star\rceil$, and
\begin{equation}
  \Lambda_{A,K}=
  \max_{|J|=K}\lambda_{\max}\!\left(\sum_{i\in J}a_i a_i^\top\right).
  \label{eq:pre-response-app-topk}
\end{equation}
For a unit $v$, uniform-cap fractional knapsack bounds
$\sum_i p_i(a_i^\top v)^2$ by $c_\star$ times the sum of the $K$ largest
directional row energies.  Consequently
\begin{equation}
  t_{\rm top}(A):=
  [\lambda_{\min}(R_0)-c_\star\Lambda_{A,K}]_+
  \le\underline\nu_{\rm cap}(A).
  \label{eq:pre-response-app-topk-lower}
\end{equation}
If every row is nonzero, define the normalized-row coherence
\begin{equation}
  \rho_A^{\rm row}=\max_{i\ne j}
  \frac{|a_i^\top a_j|}{\sqrt{\ell_i\ell_j}}.
  \label{eq:pre-response-app-row-coherence}
\end{equation}
Gershgorin's theorem applied to each normalized-row $K$-Gram matrix gives
\begin{equation}
  \Lambda_{A,K}\le
  \ell_+\{1+(K-1)\rho_A^{\rm row}\}.
  \label{eq:pre-response-app-gershgorin}
\end{equation}
It follows that
\begin{equation}
  t_{\rm coh}(A):=
  [\lambda_{\min}(R_0)-
  c_\star\ell_+\{1+(K-1)\rho_A^{\rm row}\}]_+
  \le\underline\nu_{\rm cap}(A)\le\nu_A.
  \label{eq:pre-response-app-coherence-lower}
\end{equation}
At $m=2d$ with equal leverage, the sharper specialized statement
$t_\rho=(1-\rho_A)/4\le\nu_A$ is precisely the lower side of
Theorem~\ref{thm:critical-geometry}; its $\rho_A$ is the critical normalized
coherence in~\eqref{eq:critical-coherence}.
\begin{equation}
  t_\rho(A)=\frac{1-\rho_A}{4}.
  \label{eq:pre-response-coherence-certificate}
\end{equation}
By~\eqref{eq:critical-global-reduction}, this is exactly the global
complement certificate $t_\perp$ on the critical equal-leverage class.  Thus
its zero value is an exact phase test there; the generic sufficient routes
above remain one-sided at zero.

For the fixed-threshold predicate, congruence of
$\Phi_{13}(X)\succeq0$ by $G^{-1/2}$ gives
\begin{equation}
  \lambda_{\min}(R_0)\ge
  \frac{13}{60}+c_\star\ell_+K.
  \label{eq:pre-response-app-fixed-congruence}
\end{equation}
The strict lower-leverage guard makes $\rho_A^{\rm row}$ well defined, and
Cauchy--Schwarz gives $\rho_A^{\rm row}\le1$.  Hence
\begin{align}
  &\lambda_{\min}(R_0)-
  c_\star\ell_+\{1+(K-1)\rho_A^{\rm row}\}\notag\\
  &\qquad\ge\frac{13}{60}
  +c_\star\ell_+(K-1)(1-\rho_A^{\rm row})
  \ge\frac{13}{60}.
  \label{eq:pre-response-app-fixed-pass}
\end{align}
Together with~\eqref{eq:pre-response-app-coherence-lower}, this proves that a
completed fixed-threshold pass certifies $13/60\le\nu_A$.  A zero row cannot
be silently admitted: normalized-row coherence would be undefined even when
the matrix predicate happens to be positive.  A leverage-one row is likewise
outside the no-coloop route.  These are eligibility failures, not evidence
that $\nu_A=0$.

\subsection{Propagation to robust slack}

Let $t(A)$ be any proved lower certificate in
\eqref{eq:pre-response-certificate-interface}.  On the domain of
Theorem~\ref{thm:robust-pre-response-phase}, its quantitative statement gives
\begin{equation}
  \alpha_s-\mathsf T_{\mathrm{spec}}^{\mathrm{RU}}(A,s)
  \ge\beta_s\gamma_s\nu_A
  \ge\beta_s\gamma_s t(A),
  \label{eq:pre-response-app-phase-propagation}
\end{equation}
which proves~\eqref{eq:pre-response-certified-slack}.  Only the first
inequality is tied to the exact phase theorem.  Replacing $\nu_A$ by a
lower certificate is a sufficient implication and supplies no converse.

For a fixed response with $\lossstar>0$, put
$z=e/\sqrt{\lossstar}$, $n=m-d$, and $q=m-s$.  The boundary-valid one-sided
residual resolvent gives
\begin{equation}
  \frac{\overline M_s}{\lossstar}
  \preceq I_d-\frac{n-q}{n}
  \left(I_d-\frac q{n-1}R_A(z)\right)^{-1}.
  \label{eq:pre-response-app-resolvent}
\end{equation}
If $R_A(z)\succeq tI_d$, inversion reverses Loewner order and yields
\begin{align}
  \frac{\overline M_s}{\lossstar}
  &\preceq
  \left(1-\frac{(n-q)/n}{1-qt/(n-1)}\right)I_d\notag\\
  &=\frac{q(n-1-nt)}{n(n-1-qt)}I_d
  =b_{n,q}(t)I_d.
  \label{eq:pre-response-app-scalarization}
\end{align}
\begingroup
\makeatletter
\edef\@currentlabel{\theequation}
\edef\@currentHref{equation.\theHequation}
\label{eq:pre-response-cardinality-ceiling}
\makeatother
\endgroup
This proves~\eqref{eq:pre-response-cardinality-ceiling}.  It uses only the
one-sided resolvent at arbitrary full-rank support boundaries, not the exact
row-general-position inverse-moment covariance identity.  The expectation in
$\overline M_s$ is conditional on one fixed indexed pool and one fixed
response; only the ordinary subset draw is random.

\subsection{Fixed-profile cardinality arithmetic}

At $t=13/60$ and target tolerance $\xi=1/3$, the denominator in
\eqref{eq:pre-response-app-scalarization} is positive for
$1\le q\le n-1$, because
\begin{equation}
  n-1-\frac{13}{60}q
  \ge\frac{47}{60}(n-1)>0.
  \label{eq:pre-response-app-positive-denominator}
\end{equation}
Exact cross multiplication gives
\begin{equation}
  b_{n,q}(13/60)\le\frac13
  \quad\Longleftrightarrow\quad
  q(77n-90)\le30n(n-1).
  \label{eq:pre-response-app-threshold-inversion}
\end{equation}
Thus the largest integer admitted by this particular scalar certificate is
\begin{equation}
  q_{13}(n)=\left\lfloor
  \frac{30n(n-1)}{77n-90}\right\rfloor.
  \label{eq:pre-response-app-q13}
\end{equation}
The word ``largest'' here concerns only inequality
\eqref{eq:pre-response-app-threshold-inversion}; it is not a minimal-data or
optimal-cardinality claim.  The universal-envelope certificate is the
$t=0$ case, for which $b_{n,q}(0)=q/n$ and hence
\begin{equation}
  q_H(n)=\left\lfloor\frac n3\right\rfloor.
  \label{eq:pre-response-app-qh}
\end{equation}
For every $n\ge2$,
\begin{equation}
  \frac{30n(n-1)}{77n-90}\ge\frac n3,
  \qquad
  \frac{30n(n-1)}{77n-90}<n-1,
  \label{eq:pre-response-app-count-comparison}
\end{equation}
so $q_{13}\ge q_H$.  At $n=2$, both counts are zero and route to the
full-pool endpoint without evaluating the strict-interior formula.  For
$n\ge3$, $1\le q_H\le q_{13}\le n-2$.

For the frozen paper profile, $n=512-64=448$, and exact integer evaluation
gives
\begin{equation}
  q_{13}=\left\lfloor\frac{3003840}{17203}\right\rfloor=174,
  \qquad q_H=\left\lfloor\frac{448}{3}\right\rfloor=149.
  \label{eq:pre-response-app-profile-arithmetic}
\end{equation}
Therefore $s_{13}=512-174=338$ and $s_H=512-149=363$.  Substitution in
\eqref{eq:pre-response-app-scalarization} yields the fixed-profile
instantiation in Appendix~\ref{app:fixed-profile-instantiation}.  The fixed-threshold
branch and fallback change only $s$; neither changes subset weights at that
$s$, the selected fit, or the covariance target.

\subsection{Structural calibration and abstention boundaries}
\label{subsec:pre-response-structural-calibration}

The inexpensive $c_X$ route is intentionally conservative.  Since
\begin{equation}
  R_0=I_d-A^\top D_\ell A,\qquad
  \operatorname{tr}(P_\perp D_\ell P_\perp)
  =\sum_i\ell_i(1-\ell_i),
  \label{eq:pre-response-app-trace-identities}
\end{equation}
we have
\begin{equation}
  \lambda_{\min}(R_0)
  \le\frac{\sum_i\ell_i(1-\ell_i)}d,\qquad
  \tau_X\ge
  \frac{\sum_i\ell_i(1-\ell_i)}{m-d}.
  \label{eq:pre-response-app-trace-bounds}
\end{equation}
Hence $c_X=0$ whenever $d<m\le2d$.  For an equal-leverage Parseval design,
\begin{equation}
  c_X=\left[1-\frac{2d}{m}\right]_+,
  \label{eq:pre-response-app-equal-leverage}
\end{equation}
which is positive when $m>2d$.  That redundancy condition alone is not
sufficient: the full-rank Parseval core-plus-zero design
$A=[I_d;0_{(m-d)\times d}]$ has $c_X=0$ for arbitrarily large $m$.
At the critical boundary $m=2d$, $c_X$ is therefore silent even though the
coherence certificate in~\eqref{eq:pre-response-coherence-certificate} can be
positive.

These zeros demonstrate abstention, not impossibility.  In particular,
$c_X=0$, $t_{\rm top}=0$, $t_{\rm coh}=0$, a failed fixed-threshold
predicate, or an ineligible guard does not show $\nu_A=0$ and does not invoke
the zero-margin branch of Theorem~\ref{thm:robust-pre-response-phase}.  The
positive-loss and strict-interior formulas are not evaluated at
$\lossstar=0$, $s=d$, $s=m$, or $m=d$; the endpoint statements in the
universal-envelope and residual-mechanism sections remain controlling.

\subsection*{Supplementary fixed-profile instantiation}
\phantomsection
\label{app:fixed-profile-instantiation}
For the fixed-threshold route, define in the original feature coordinates
\begin{equation}
\begin{split}
  H_0&=X^\top\operatorname{diag}(1-\ell)X,
  \qquad \ell_- =\min_i\ell_i,\quad \ell_+=\max_i\ell_i,\\
  c_\star&=1-\ell_-,\qquad
  K=\left\lceil\frac1{c_\star}\right\rceil,\\
  \Phi_{13}(X)&=H_0-
  \left[\frac{13}{60}+c_\star\ell_+K\right]G.
  \label{eq:pre-response-fixed-threshold}
\end{split}
\end{equation}
The strict guards $0<\ell_i<1$ for every $i$, together with
$\Phi_{13}(X)\succeq0$, certify $13/60\le\nu_A$ by the derivation in
Appendix~\ref{app:pre-response-certificate}.

\paragraph{Certified cardinality.}
Fix the paper profile
\begin{equation}
  (m,d,\xi)=(512,64,1/3),\qquad n=m-d=448.
  \label{eq:pre-response-profile}
\end{equation}
If the strict guards and fixed-threshold predicate
\eqref{eq:pre-response-fixed-threshold} pass, then the certified count
\begin{equation}
  q_{13}=\left\lfloor
  \frac{30n(n-1)}{77n-90}\right\rfloor=174
  \label{eq:pre-response-q13}
\end{equation}
gives $s_{13}=m-q_{13}=338$ and, for every fixed positive-loss response,
\begin{equation}
  \boxed{\overline M_{338}\preceq\frac13\lossstar I_d.}
  \label{eq:pre-response-338-certificate}
\end{equation}
If that sufficient route does not pass or abstains, the universal-envelope
fallback $q_H=\lfloor n/3\rfloor=149$ gives $s_H=363$ and the same
one-sided tolerance certificate at $s_H$.  Both branches retain the ordinary
indexed fixed-size volume sampler and selected unweighted OLS; only the
cardinality differs.  This sufficient rule makes no minimality, optimality,
label-saving, or predictive-utility claim.

\section{Task-conditioned frozen-readout implications}
\subsection{Task-conditioned readout risk and its scope}
\label{app:task-conditioned-risk}

This section proves the direct fixed-query consequences used in
Section~\ref{sec:task-conditioned-risk}. Throughout, all feature matrices,
responses, query features, and query targets are fixed before the subset
draw. The sampler is exactly the ordinary indexed fixed-size volume law and
the estimator is selected unweighted OLS on positive-volume subsets.

For pipeline $r$ at a valid budget $s_r$, let $S_r$ denote its subset draw and
write its same-pool selected-OLS risk as
\[
  \mathcal R_r^{\rm pool}(y;s_r)
  :=\E_r\lVert X_rw_{r,S_r}(y)-y\rVert_2^2.
\]
The following projection fact is explicitly about training-pool squared loss;
it does not itself compare external-query risks.

\begin{proposition}[Task-free strict orders fail on incomparable spaces]
\label{prop:incomparable-task-reversal}
Let $X_a\in\R^{m\times d_a}$ and $X_b\in\R^{m\times d_b}$ have full column
rank, with $\mathcal U_a=\operatorname{col}(X_a)$ and
$\mathcal U_b=\operatorname{col}(X_b)$. If neither space contains the other,
then, for every pair of valid ordinary-volume budgets, no strict
feature-only order of the two selected-OLS pipelines is sound for every
scalar response. In fact, there are responses $y^{(a)}$ and $y^{(b)}$ for
which the respective same-pool subset-refit risks satisfy
\[
  \mathcal R_a^{\rm pool}(y^{(a)};s_a)=0
  <\mathcal R_b^{\rm pool}(y^{(a)};s_b),
  \qquad
  \mathcal R_b^{\rm pool}(y^{(b)};s_b)=0
  <\mathcal R_a^{\rm pool}(y^{(b)};s_a).
\]
For full-head squared approximation loss, the universal weak order is exactly
reverse column-space inclusion:
\[
  \lVert(I-P_{\mathcal U_a})Y\rVert_F^2
  \le \lVert(I-P_{\mathcal U_b})Y\rVert_F^2\quad\text{for every }Y
  \quad\Longleftrightarrow\quad
  \mathcal U_b\subseteq\mathcal U_a.
\]
The inclusion statement does not order subset-trained pipeline risk in
general, because the subset-refit variance and budget can differ.
\end{proposition}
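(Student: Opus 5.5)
The plan is to reduce every risk statement to the fixed-pool Pythagorean identity \eqref{eq:pythagoras-main}, applied separately to each pipeline, and to handle the approximation-loss equivalence as a Loewner comparison of projections. Write $P_a,P_b$ for the orthogonal projections onto $\mathcal U_a,\mathcal U_b$, and write $L^*_r(y)=\lVert(I-P_r)y\rVert_2^2$ for the full-pool loss of pipeline $r$. For every supported subset $S_r$, \eqref{eq:pythagoras-main} gives $\lVert X_rw_{r,S_r}-y\rVert_2^2\ge L^*_r(y)$. Averaging over the ordinary law, whose support is nonempty for every valid budget $s_r\ge d_r$ by Proposition~\ref{prop:ue-normalizer}, yields $\mathcal R_r^{\rm pool}(y;s_r)\ge L^*_r(y)$.

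\textbf{Step 1 (zero-risk side).} Take $y\in\mathcal U_a$. Then $L^*_a(y)=0$, and by the endpoint convention of Section~\ref{sec:endpoints} every supported selected fit equals $w_a^*$ and interpolates $y$. Hence $\mathcal R_a^{\rm pool}(y;s_a)=0$ for every valid $s_a$, including $s_a=d_a$ and $s_a=m$.

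\textbf{Step 2 (strict side).} Because $\mathcal U_a\not\subseteq\mathcal U_b$, there is some $y^{(a)}\in\mathcal U_a\setminus\mathcal U_b$. For this response, $L^*_b(y^{(a)})>0$, so the lower bound above gives $\mathcal R_b^{\rm pool}(y^{(a)};s_b)>0$. Swapping the roles of the two pipelines, using $\mathcal U_b\not\subseteq\mathcal U_a$, gives $y^{(b)}$. The two displayed strict inequalities go in opposite directions, so any strict feature-only order must fail on one of $y^{(a)}$ or $y^{(b)}$. Neither inequality depends on the budgets.

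\textbf{Step 3 (approximation-loss order).} Write
\[
\lVert(I-P_r)Y\rVert_F^2=\tr\bigl(Y^\top(I-P_r)Y\bigr).
\]
The inequality for every $Y$ therefore holds exactly when it holds for every single column $y$, which is the Loewner statement $I-P_a\preceq I-P_b$, equivalently $P_b\preceq P_a$.
\begin{itemize}
\item If $\mathcal U_b\subseteq\mathcal U_a$, then $P_aP_b=P_b$, so $P_a-P_b$ is an orthogonal projection and hence positive semidefinite.
\item Conversely, suppose $P_b\preceq P_a$ and take $x\in\mathcal U_a^\perp$. Then $\lVert P_bx\rVert^2=x^\top P_bx\le x^\top P_ax=0$, so $\mathcal U_a^\perp\subseteq\mathcal U_b^\perp$, which is $\mathcal U_b\subseteq\mathcal U_a$.
\end{itemize}

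The final caveat, that inclusion does not order subset-trained risk, follows from the decomposition $\mathcal R^{\rm pool}=L^*+\tr(\overline M_{s})$. This adds a budget-dependent variance term that inclusion does not control. One could exhibit this with the attaining family of Proposition~\ref{prop:ue-attainment}, but the statement only asserts the absence of a general order, so no example is strictly required.

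The main obstacle is only bookkeeping in Step 1: making sure that the zero-loss and endpoint conventions are invoked rather than a normalized ratio. Nothing deeper is expected.
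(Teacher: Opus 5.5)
Your proposal is correct and follows essentially the same argument as the paper. Both use exact pointwise recovery on every supported subset to get zero risk for $y\in\mathcal U_a$, the same-pool Pythagorean identity to make pipeline $b$'s risk strictly positive, and a column-by-column reduction for the approximation-loss equivalence. The only cosmetic difference is in the converse of that equivalence: you test the Loewner relation $P_b\preceq P_a$ on vectors in $\mathcal U_a^\perp$, whereas the paper plugs single columns $y\in\mathcal U_b$ directly into the inequality to get $\lVert(I-P_{\mathcal U_a})y\rVert_2^2\le0$.
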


\begin{proof}
Choose $y^{(a)}\in\mathcal U_a\setminus\mathcal U_b$. Every subset in the
support of pipeline $a$ has full column rank, and selected OLS therefore
recovers the coefficient that realizes $y^{(a)}$ pointwise. Its risk is zero.
Pipeline $b$ has strictly positive full-head approximation loss; the
same-pool Pythagorean identity makes every selected-fit loss strictly positive.
Interchange $a$ and $b$ for the second response. For the full-head statement,
if $\mathcal U_b\subseteq\mathcal U_a$, orthogonal projection onto
$\mathcal U_a$ cannot have larger residual. Conversely, apply the displayed
inequality to every column vector in $\mathcal U_b$ to obtain containment.
\end{proof}

\begin{proof}[Proof of Corollary~\ref{cor:task-conditioned-query-risk}]
Put $\delta_S=w_S-w^*$ and $a=Zw^*-y_Q$. Selected-OLS unbiasedness gives
$\E_X\delta_S=0$. Hence
\begin{align*}
 \E_X\lVert Zw_S-y_Q\rVert_2^2
 &=\E_X\lVert a+Z\delta_S\rVert_2^2\\
 &=\lVert a\rVert_2^2+
   \tr\!\left(Z\E_X[\delta_S\delta_S^\top]Z^\top\right)\\
 &=F^Q+V_s^Q.
\end{align*}
For the realized residual, Equation~\eqref{eq:robust-pre-response-linear-bound}
and $R_A(z)\succeq\nu_A I_d\succeq t(A)I_d$ give, after returning from
whitened coordinates,
\[
  M_s\preceq
  \bigl(\alpha_s-\beta_s\gamma_s t(A)\bigr)\lossstar G^{-1}.
\]
Congruence by $Z$, followed by trace monotonicity, proves the stated bound on
$V_s^Q$.  The universal certificate is recovered by setting $t(A)=0$; the
strict-interior phase-aware corollary does not assert an endpoint formula.
\end{proof}

\paragraph{Multi-output form.}
Applying the scalar argument to every output column gives, for
$W^*=G^{-1}X^\top Y$ and output-wise coefficient covariances $M_{s,j}$,
\[
\begin{aligned}
 \E_X\lVert ZW_S-Y_Q\rVert_F^2
 &=\lVert ZW^*-Y_Q\rVert_F^2+\sum_j\tr(ZM_{s,j}Z^\top),\\
 \sum_j\tr(ZM_{s,j}Z^\top)
 &\le \alpha\lVert Y-XW^*\rVert_F^2\tr(ZG^{-1}Z^\top).
\end{aligned}
\]
This columnwise consequence does not assert a joint-covariance theorem.

\paragraph{Fixed-query notation.}
For pipeline $r$, define
\[
\begin{aligned}
F_r^Q&=\lVert Z_rw_r^*-y_Q\rVert_2^2,\\
V_r^Q(s)&=\tr(Z_rM_{r,s}Z_r^\top),\\
R_r^Q(s)&=F_r^Q+V_r^Q(s),\\
C_r^Q(s)&=\alpha_r(s)L_r^*\tr(Z_rG_r^{-1}Z_r^\top),\\
U_r^Q(s)&=F_r^Q+C_r^Q(s).
\end{aligned}
\]
The universal envelope gives
$R_r^Q(s)=F_r^Q+V_r^Q(s)$, $0\le V_r^Q(s)\le C_r^Q(s)$, and
$U_r^Q(s)=F_r^Q+C_r^Q(s)$.

\begin{proposition}[One-sided fixed-query pipeline comparison]
\label{prop:query-certified-dominance}
For two pipelines $a,b$ on the same fixed raw training/query task and loss
normalization, let the preceding quantities be evaluated at valid budgets
$s,t$. Then,
\[
  U_a^Q(s)<F_b^Q
  \quad\Longrightarrow\quad
  R_a^Q(s)<R_b^Q(t).
\]
\end{proposition}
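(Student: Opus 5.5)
The approach is to compare through the feature-only upper bound $U_a^Q(s)$ and the lower bound $F_b^Q$ on pipeline $b$'s risk. Both bounds come from the fixed-query decomposition and the universal envelope, applied separately to each pipeline. Concretely, I would establish the chain
\[
  R_a^Q(s)\le U_a^Q(s)<F_b^Q\le R_b^Q(t).
\]

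For the left inequality, I would use the decomposition already proved in Corollary~\ref{cor:task-conditioned-query-risk}, which writes $R_a^Q(s)=F_a^Q+V_a^Q(s)$. Its proof uses only selected-OLS unbiasedness, and Proposition~\ref{prop:ue-boundary-mean} supplies that for every full-column-rank design. So the decomposition holds at every valid budget, including the endpoints $s=d$ and $s=m$, where $M_{a,s}$ is either the interpolation covariance or zero. The envelope of Theorem~\ref{thm:main-envelope} then gives $M_{a,s}\preceq\alpha_a(s)L_a^*G_a^{-1}$. Congruence by $Z_a$ and trace monotonicity yield $V_a^Q(s)\le C_a^Q(s)$, and therefore $R_a^Q(s)\le U_a^Q(s)$.

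For the right inequality, I would use that $M_{b,t}$ is a covariance matrix, hence positive semidefinite. It follows that $V_b^Q(t)=\operatorname{tr}(Z_bM_{b,t}Z_b^\top)\ge0$, and so $R_b^Q(t)\ge F_b^Q$. Chaining this with the hypothesis $U_a^Q(s)<F_b^Q$ closes the argument.

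The only delicate point is checking that the endpoint conventions of Section~\ref{sec:endpoints} are consistent at every valid budget. At $L^*=0$ or $s=m$ we have $C^Q=0$ and $V^Q=0$, so no normalized ratio is ever formed. Since the statement places both pipelines on the same fixed raw query targets $y_Q$ and loss normalization, the quantities $F_a^Q$ and $F_b^Q$ are directly comparable, and no further obstacle is expected.
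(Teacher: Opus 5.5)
Your proposal is correct and matches the paper's proof: both use the chain $R_a^Q(s)=F_a^Q+V_a^Q(s)\le U_a^Q(s)<F_b^Q\le F_b^Q+V_b^Q(t)=R_b^Q(t)$. The ingredients are the same as the paper's: the exact bias--variance decomposition from unbiasedness, the universal-envelope bound $0\le V^Q\le C^Q$, and nonnegativity of $V_b^Q(t)$.
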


\begin{proof}
The two exact decompositions and the variance sandwich give
\[
 R_a^Q(s)=F_a^Q+V_a^Q(s)
 \le U_a^Q(s)<F_b^Q
 \le F_b^Q+V_b^Q(t)=R_b^Q(t).
\]
Each representation induces its own ordinary-volume law, so this compares
complete representation, sampler, and OLS pipelines.
\end{proof}

\section{Matched instantiation of the two phase branches}
\subsection{Exact matched phase-pair record}
\label{app:t3-matched-phase-pair}

We instantiate the two branches of Theorem~\ref{thm:robust-pre-response-phase}
on a matched critical pair. The exact calculation verifies ordinary-volume
support, selected-map semantics, and the common-witness covariance exactly.
For $t\in\{0,1/2\}$, set
\begin{equation}
\begin{aligned}
 G(t)&=\frac{1}{1+t^2}
 \begin{bmatrix}1-t^2&-2t\\2t&1-t^2\end{bmatrix},\\
 Q_t&=\operatorname{diag}(G(t),G(t)),\\
 A_t&=2^{-1/2}\begin{bmatrix}I_4\\Q_t\end{bmatrix},\qquad
 J_t=2^{-1/2}\begin{bmatrix}I_4\\-Q_t\end{bmatrix}.
\end{aligned}
 \label{eq:app-t3-phase-pair-design}
\end{equation}
Thus $A_t^\top A_t=J_t^\top J_t=I_4$, $A_t^\top J_t=0$, and every row
leverage is $1/2$. Both designs use ordinary size-$s$ volume sampling and
selected unweighted OLS at $s\in\{5,6,7\}$.

There are $92$ candidate indexed subsets across these budgets for each design.
The positive/zero-volume support counts are
\begin{center}
\small
\begin{tabular}{@{}lccc@{}}
\toprule
design & $s=5$ & $s=6$ & $s=7$\\
\midrule
$t=0$ & $32/24$ & $24/4$ & $8/0$\\
$t=1/2$ & $48/8$ & $28/0$ & $8/0$\\
\bottomrule
\end{tabular}
\end{center}
The ordinary determinant normalizers are $4,6,4$, respectively. Exact
support-only calculation gives probability mass one and centered selected-OLS
unbiasedness on each row. Zero-volume sets receive probability zero and no
inverse or selected fit.

For $t=0$, take $z_*=J_0e_1$. With $L^*=1$, exact enumeration gives
\begin{equation}
 \overline M_s^{A_0,z_*}=\alpha_s e_1e_1^\top,
 \qquad \alpha_s=(8-s)/4,\qquad s=5,6,7.
 \label{eq:app-t3-phase-pair-zero}
\end{equation}
This common residual makes directional spectral contact with the envelope at
all three budgets (in direction $e_1$); it does not assert full-matrix equality
with $\alpha_s I_4$. For
$t=1/2$, direct coherence calculation gives $\rho_{A_{1/2}}=4/5$; hence
Theorem~\ref{thm:critical-geometry} gives $\nu_{A_{1/2}}\ge1/20$. Combining
this with Theorem~\ref{thm:robust-pre-response-phase} yields
\begin{equation}
 \frac{\mathsf T_{\rm spec}^{\rm RU}(A_{1/2},s)}{\alpha_s}
 \le
 \begin{cases}59/60,&s=5,\\29/30,&s=6,\\19/20,&s=7.\end{cases}
 \label{eq:app-t3-phase-pair-positive}
\end{equation}
These are analytic response-uniform upper certificates, not numerical
maximizations over compatible residuals. The displayed support and
common-witness calculations use exact rational arithmetic.
The construction makes the theorem's existential zero branch and
response-uniform positive branch concrete under matched dimensions, leverage,
sampler, estimator, and budgets; it does not determine the exact positive-
branch value of $\mathsf T_{\rm spec}^{\rm RU}$.

\subsection{Exact five-point boundary-path diagnostic}
\label{app:boundary-path-diagnostic}

This supplement records the finite diagnostic summarized in
Figure~\ref{fig:boundary-continuation}.  It is an exact constructed-family
calculation, rather than an optimization over the compatible-residual sphere.
In particular, it does not compute
$\mathsf T_{\rm spec}^{\rm RU}(A,s)$ or the margin $\nu_A$.

To distinguish this path from the endpoint parameter in
Appendix~\ref{app:t3-matched-phase-pair}, write
\begin{equation}
\begin{aligned}
 G_\tau&=\frac{1}{1+\tau^2}
 \begin{bmatrix}1-\tau^2&-2\tau\\2\tau&1-\tau^2\end{bmatrix},
 &Q_\tau&=\operatorname{diag}(G_\tau,G_\tau),\\
 A_\tau&=2^{-1/2}\begin{bmatrix}I_4\\Q_\tau\end{bmatrix},
 &J_\tau&=2^{-1/2}\begin{bmatrix}I_4\\-Q_\tau\end{bmatrix}.
 \label{eq:app-boundary-path-design}
\end{aligned}
\end{equation}
We use the prespecified rational grid
\(
\mathcal T=\{0,1/16,1/8,1/4,1/3\}
\), the prescribed compatible residual rule $z_\tau=J_\tau e_1$ at every
point, and the three
strict-interior budgets $s\in\{5,6,7\}$.  Thus dimensions, row leverage,
ordinary indexed volume law, selected unweighted OLS, and the budget set are
held fixed.  We take $\theta=0$ and $L^*=1$, so the compatible response is
$y_\tau=z_\tau$ and all covariance quantities below are already normalized
by $L^*$.

For this family, exact complement and deletion arithmetic gives
\begin{equation}
\begin{aligned}
 \rho_\perp(A_\tau)&=\frac{1-\tau^2}{1+\tau^2},
 &\mu_{\rm pair}(A_\tau)&=\frac{\tau^2}{(1+\tau^2)^2},\\
 \delta_{\rm del}(A_\tau)&=\frac{\tau^2}{1+\tau^2},
 &t_\perp(A_\tau)&=\frac{\tau^2}{2(1+\tau^2)}.
\end{aligned}
 \label{eq:app-boundary-path-geometry}
\end{equation}
Here $\mu_{\rm pair}$ is the minimum principal pair minor of $P_\perp$ and
$\delta_{\rm del}=\min_{|J|=2}\lambda_{\min}(A_{-J}^\top A_{-J})$ is the
minimum deletion-Gram eigenvalue over indexed row pairs.  Consequently, the
four paired deletions at $\tau=0$ have rank three,
whereas every pair deletion has rank four at the four declared positive grid
points.  Table~\ref{tab:boundary-path-geometry} records the exact values.

\begin{table}[t]
\centering
\small
\caption{\textbf{Exact geometry and volume-sampling support on the five-point
path.} The final column gives positive-volume support counts for
$s=5/6/7$.  The corresponding zero-volume counts are $24/4/0$ at
$\tau=0$ and $8/0/0$ at each positive point.  In all rows the ordinary
normalizers are $4,6,4$, respectively.}
\label{tab:boundary-path-geometry}
\begin{tabular}{@{}c c c c c c@{}}
\toprule
$\tau$ & $\rho_\perp$ & $\mu_{\rm pair}$ & $\delta_{\rm del}$ & $t_\perp$
& support \\
\midrule
$0$ & $1$ & $0$ & $0$ & $0$ & $32/24/8$ \\
$1/16$ & $255/257$ & $256/66049$ & $1/257$ & $1/514$ & $48/28/8$ \\
$1/8$ & $63/65$ & $64/4225$ & $1/65$ & $1/130$ & $48/28/8$ \\
$1/4$ & $15/17$ & $16/289$ & $1/17$ & $1/34$ & $48/28/8$ \\
$1/3$ & $4/5$ & $9/100$ & $1/10$ & $1/20$ & $48/28/8$ \\
\bottomrule
\end{tabular}
\end{table}

For a fixed residual, let
\[
 q_s(\tau)=\lambda_{\max}(\overline M_s^{A_\tau,z_\tau}),
 \qquad
 \Delta_s(\tau)=
 \frac{\alpha_s-\mathsf T_{\rm spec}^{\rm RU}(A_\tau,s)}{\alpha_s}.
\]
The exact enumeration evaluates the first quantity, not the second.  Combining
the fixed-residual covariance witness with Theorem~\ref{thm:robust-pre-response-phase}
and Proposition~\ref{prop:global-complement-certificate} gives the
theorem-aligned bracket
\begin{equation}
 \frac{\beta_s\gamma_s t_\perp(A_\tau)}{\alpha_s}
 \ \le\ \Delta_s(\tau)\ \le\
1-\frac{q_s(\tau)}{\alpha_s}.
 \label{eq:app-boundary-path-gap-bracket}
\end{equation}
For $s=7$, write the two displayed bounds as
$L_7(\tau)=t_\perp(A_\tau)$ and
$H_7(\tau)=1-q_7(\tau)/\alpha_7$.
At $\tau=0$, the prescribed residual makes both bounds zero for every legal
budget.  At each declared positive point, both bounds are strictly positive.
For the predeclared primary $s=7$ cell, the exact characteristic-polynomial
calculation further yields, with
$c_\tau=(1-\tau^2)/(1+\tau^2)$ and
$r_\tau=2\tau/(1+\tau^2)$,
\begin{equation}
 q_7(\tau)=\frac{1+\sqrt{1-3c_\tau^2r_\tau^2}}{8}.
 \label{eq:app-boundary-path-q7}
\end{equation}
This is a spectral value for the fixed residual; its maximizing coefficient
direction need not remain $e_1$ away from $\tau=0$.

Table~\ref{tab:boundary-path-covariance} is the compact covariance receipt.
Each displayed spectral entry is a decimal summary of a certified rational
root-isolation interval (width below $6\times10^{-14}$ after normalization);
the geometry, support, covariance, and interval calculations use exact
rational arithmetic.  The rightmost column is the lower--upper bracket in
\eqref{eq:app-boundary-path-gap-bracket}, not an interval estimate of
$\mathsf T_{\rm spec}^{\rm RU}$.

\begin{table}[t]
\centering
\scriptsize
\caption{\textbf{All exact-enumeration covariance cells.} Entries in the
middle column summarize certified intervals for the fixed-residual spectral
ratio $q_s(\tau)/\alpha_s$.  The last column gives the analytic bracket for
the normalized response-uniform gap $\Delta_s(\tau)$.}
\label{tab:boundary-path-covariance}
\begin{tabular}{@{}c c c c@{}}
\toprule
$\tau$ & $s$ & $q_s(\tau)/\alpha_s$ & bracket for $\Delta_s(\tau)$ \\
\midrule
$0$ & $5$ & $1.000000000$ & $[0.000000000,\ 0.000000000]$ \\
    & $6$ & $1.000000000$ & $[0.000000000,\ 0.000000000]$ \\
    & $7$ & $1.000000000$ & $[0.000000000,\ 0.000000000]$ \\
\addlinespace
$1/16$ & $5$ & $0.996139469$ & $[0.000648508,\ 0.003860531]$ \\
       & $6$ & $0.992278939$ & $[0.001297017,\ 0.007721061]$ \\
       & $7$ & $0.988418408$ & $[0.001945525,\ 0.011581592]$ \\
\addlinespace
$1/8$ & $5$ & $0.985104261$ & $[0.002564103,\ 0.014895739]$ \\
      & $6$ & $0.970208522$ & $[0.005128205,\ 0.029791478]$ \\
      & $7$ & $0.955312783$ & $[0.007692308,\ 0.044687217]$ \\
\addlinespace
$1/4$ & $5$ & $0.949135465$ & $[0.009803922,\ 0.050864535]$ \\
      & $6$ & $0.898270930$ & $[0.019607843,\ 0.101729070]$ \\
      & $7$ & $0.847406395$ & $[0.029411765,\ 0.152593605]$ \\
\addlinespace
$1/3$ & $5$ & $0.925949627$ & $[0.016666667,\ 0.074050373]$ \\
      & $6$ & $0.851899253$ & $[0.033333333,\ 0.148100747]$ \\
      & $7$ & $0.777848880$ & $[0.050000000,\ 0.222151120]$ \\
\bottomrule
\end{tabular}
\end{table}

\section{Critical-class rate calibration}
\subsection{Separated critical-class calibration}
\label{app:critical-class-calibration}

Set $\sigma_0=\sqrt{39}/8$ and let $\mathcal C_d$ contain the matrices
$A\in\mathbb R^{2d\times d}$ with $A^\top A=I_d$, row leverage $1/2$, and
projective separation $\min_{i\ne j}\sin\angle_{\rm proj}(a_i,a_j)\ge\sigma_0$.
For this class, $\rho_A=\max_{i\ne j}2|a_i^\top a_j|\le5/8$ exactly. For a
positive-loss response and $s=d+r$, define
\[
 \Delta(A,y;r)=1-
 \frac{\lambda_{\max}(\overline M_{d+r})}{(1-r/d)L^*}.
\]
The class is empty at $d=2$; the following pointwise statement is therefore
conditional on membership there.

\begin{proposition}[Critical separated-class covariance gap]
\label{prop:critical-class-rate-calibration}
For every $d\ge2$, $A\in\mathcal C_d$, positive-loss fixed response $y$, and
$1\le r\le d-1$,
\begin{equation}
 \nu_A\ge\frac3{32},\qquad
 \Delta(A,y;r)\ge\frac{3r}{32(d-1)}.
 \label{eq:critical-class-derived-lower}
\end{equation}
\end{proposition}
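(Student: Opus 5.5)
The plan is to chain the critical lower bound of Theorem~\ref{thm:critical-geometry} into the uniform slack inequality~\eqref{eq:robust-pre-response-slack} of Theorem~\ref{thm:robust-pre-response-phase}. The only work beyond that is translating the projective-separation hypothesis into a coherence bound and simplifying the budget constants at $m=2d$.

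\emph{Step 1 (coherence).} Put $u_i=\sqrt2\,a_i$, which are unit vectors because $\ell_i=1/2$. The projective angle satisfies $|\cos\angle_{\rm proj}(a_i,a_j)|=|u_i^\top u_j|=2|a_i^\top a_j|$. The separation hypothesis then gives
\[
\cos^2\angle_{\rm proj}\le 1-\tfrac{39}{64}=\tfrac{25}{64}
\]
for every pair $i\ne j$, so $\rho_A\le 5/8$.

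\emph{Step 2 (margin).} Every $A\in\mathcal C_d$ has $d\ge2$, $m=2d$, orthonormal columns and leverage $1/2$, so it meets the hypotheses of Theorem~\ref{thm:critical-geometry}. Its lower bound~\eqref{eq:critical-coherence-sandwich} gives
\[
\nu_A\ge\frac{1-\rho_A}{4}\ge\frac{3/8}{4}=\frac3{32},
\]
which is the first claim of the proposition.

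\emph{Step 3 (transfer to a fixed response).} First check that Theorem~\ref{thm:robust-pre-response-phase} applies:
\begin{itemize}[nosep]
\item $m=2d\ge d+2$ because $d\ge2$;
\item the no-coloop condition holds since every $\ell_i=1/2<1$;
\item the budget $s=d+r$ with $1\le r\le d-1$ lies strictly between $d$ and $m$.
\end{itemize}
A positive-loss fixed $y$ is treated as a design with $X=A$, so $G=I_d$ and no whitening is needed. Its normalized residual $z=e/\sqrt{L^*}$ lies in $\mathcal Z_A$, and $y=A\theta+\sqrt{L^*}z$ with $\theta=w^*$. By the $\theta$- and scale-invariance noted before~\eqref{eq:robust-pre-response-target},
\[
\frac{\lambda_{\max}(\overline M_{d+r})}{L^*}\le \mathsf T^{\rm RU}_{\rm spec}(A,d+r)\le \alpha_s-\beta_s\gamma_s\nu_A .
\]
Dividing by $\alpha_s=(d-r)/d>0$ gives $\Delta(A,y;r)\ge \beta_s\gamma_s\nu_A/\alpha_s$.

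\emph{Step 4 (constants).} At $m=2d$ and $s=d+r$ the constants are $\beta_s=r/d$ and $\gamma_s=(d-r)/(d-1)$. Hence
\[
\frac{\beta_s\gamma_s}{\alpha_s}=\frac{r}{d}\cdot\frac{d-r}{d-1}\cdot\frac{d}{d-r}=\frac{r}{d-1}.
\]
Combining with Step~2 yields $\Delta\ge 3r/(32(d-1))$.

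There is no real obstacle here. The only points needing care are these:
\begin{itemize}[nosep]
\item the identification of the projective sine with $\sqrt{1-(u_i^\top u_j)^2}$, including the absolute value for antiparallel rows;
\item the remark that the statement is vacuous at $d=2$, where the class is empty.
\end{itemize}
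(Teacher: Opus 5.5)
Your proposal is correct and follows the paper's proof. The paper likewise combines $\nu_A\ge(1-\rho_A)/4\ge 3/32$ from Theorem~\ref{thm:critical-geometry} with the slack inequality of Theorem~\ref{thm:robust-pre-response-phase} (applicable because every leverage is $1/2<1$) and the simplification $\beta_s\gamma_s/\alpha_s=r/(d-1)$; your derivation of $\rho_A\le 5/8$ from the separation hypothesis and your identification of $y$ with the compatible residual $z=e/\sqrt{L^*}$ spell out steps the paper leaves implicit.
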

\begin{proof}
Theorem~\ref{thm:critical-geometry} gives
$\nu_A\ge(1-\rho_A)/4\ge3/32$. Since ordinary leverage is $1/2<1$,
Theorem~\ref{thm:robust-pre-response-phase} applies. Here
$\beta_s\gamma_s/\alpha_s=r/(d-1)$, which yields the stated
response-uniform fixed-response covariance gap.
\end{proof}

\begin{lemma}[Totally nonsingular orthogonal perturbation]
\label{lem:critical-class-orthogonal-block}
For every $d=4^n$, $n\ge1$, there exists a rational $Q_d\in O(d)$ whose
square minors are all nonzero and for which
$\lVert Q_d-H_d\rVert_{\max}<1/8$, where $H_d$ is the normalized Sylvester
matrix.
\end{lemma}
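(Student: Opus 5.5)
The plan is to start from the Sylvester matrix itself, which is already rational and orthogonal because $d=4^n$ gives $\sqrt d=2^n$, and perturb it inside its connected component of $O(d)$ along a rational Cayley chart. Concretely, for a rational skew-symmetric $S\in\R^{d\times d}$ put
\[
  C(S)=(I_d-S)(I_d+S)^{-1},\qquad Q(S)=H_d\,C(S).
\]
$I_d+S$ is always invertible, since the eigenvalues of $S$ are imaginary. Hence $C(S)\in SO(d)$ has rational entries, and so does $Q(S)\in O(d)$. Moreover $C(0)=I_d$ and $C$ is continuous. From $\lVert Q(S)-H_d\rVert_{\max}\le\lVert H_d\rVert_{\mathrm{op}}\lVert C(S)-I_d\rVert_{\mathrm{op}}=\lVert C(S)-I_d\rVert_{\mathrm{op}}$, every sufficiently small $S$ meets the distance requirement $<1/8$. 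It remains to choose such an $S$ that is rational and makes every square minor of $Q(S)$ nonzero.

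For a fixed pair of index sets $I,J\subseteq[d]$ with $|I|=|J|=k$, the map
\[
  S\mapsto\det(I_d+S)^{k}\det\bigl(Q(S)_{I,J}\bigr)
\]
is a polynomial $p_{I,J}$ in the free entries of $S$. Its zero set is a proper algebraic subset, which is closed and nowhere dense in the vector space of skew matrices, as soon as $p_{I,J}$ is not identically zero. There are finitely many pairs $(I,J)$, so the union of these zero sets is closed and nowhere dense. Its complement therefore meets every ball around $0$ in an open set, and that open set contains a rational skew $S$ by density of rationals. This gives the lemma, provided the nonvanishing claim holds.

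The main obstacle is the nonvanishing claim: that no minor vanishes identically along the Cayley chart. I would handle it through Zariski density in the irreducible component. The Cayley image is exactly the set of $U\in SO(d)$ with $-1\notin\operatorname{spec}(U)$, which is Zariski-open and nonempty in the irreducible variety $SO(d)$. So it suffices to show that $U\mapsto\det\bigl((H_dU)_{I,J}\bigr)$ is not identically zero on $SO(d)$, that is, to find one $V\in H_d\,SO(d)$ with $\det(V_{I,J})\ne0$.

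To produce such a $V$, take a signed permutation matrix sending the coordinates in $J$ onto those in $I$. Its $(I,J)$ minor is $\pm1$. Flipping the sign of one row adjusts its determinant to equal $\det H_d$, which places it in the coset $H_d\,SO(d)$ while changing the minor only by a sign. Irreducibility then makes $\{U:\det((H_dU)_{I,J})\ne0\}$ a nonempty Zariski-open, hence dense, subset of $SO(d)$. Its intersection with the Cayley image is nonempty, so $p_{I,J}\not\equiv0$.

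If any subtlety arises in transferring Zariski density back through the chart, I would fall back on analyticity. Each minor is a real-analytic function on the connected set $SO(d)$, so if it vanished on the open dense Cayley image it would vanish on all of $SO(d)$, contradicting the signed-permutation witness. Either route closes the argument.
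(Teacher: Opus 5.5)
Your proof is correct. It follows the same genericity-plus-rational-density template as the paper, but uses a different mechanism at both steps.

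The paper's argument runs as follows.
\begin{itemize}
\item It gets rational orthogonal matrices near $H_d$ from the density of products of rational Givens rotations in $SO(d)$.
\item It asserts, without argument, that each proper-minor zero set has empty interior on each component of $O(d)$.
\item It then takes a rational point in the open dense intersection of the complements inside the entrywise ball around $H_d$.
\end{itemize}

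Your argument differs in two ways.
\begin{itemize}
\item You use the Cayley chart $S\mapsto H_d(I_d-S)(I_d+S)^{-1}$, a rational parametrization of a neighborhood of $H_d$ in its coset $H_d\,SO(d)$. Rationality and density then come directly from rational skew matrices, and the genericity question reduces to nonvanishing of finitely many polynomials $p_{I,J}$ on a vector space.
\item You actually prove the nonvanishing that the paper leaves implicit. You do this through the signed-permutation witness in $H_d\,SO(d)$ together with the identity theorem on the connected manifold $SO(d)$.
\end{itemize}
Of your two routes to nonvanishing, the analytic one is the cleaner. The Zariski route additionally needs the standard but unstated fact that $SO(d)$ is irreducible as a real algebraic set.

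Your version is more self-contained. It avoids the density-of-rational-Givens-words fact, which rests on rational points being dense on the unit circle and on Givens rotations generating $SO(d)$. It also supplies the missing justification for the empty-interior claim. The paper's version buys brevity, and its wording covers both components of $O(d)$. Only the component containing $H_d$ is needed here, and in fact $\det H_d=1$ for $d=4^n$.
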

\begin{proof}
For $d=4^n$, $H_d$ is rational. Rational Givens words are dense in $SO(d)$,
and each proper-minor zero set has empty interior on either component of
$O(d)$. The finite intersection of their open dense complements meets the
open entrywise ball around $H_d$.
\end{proof}

\begin{proposition}[Restricted near-attainment in the positive phase]
\label{prop:critical-class-near-attainment}
For $d=4^n$, $n\ge1$, choose $Q_d$ from Lemma~\ref{lem:critical-class-orthogonal-block}
and set
\begin{equation}
 A_d=2^{-1/2}\begin{bmatrix}I_d\\Q_d\end{bmatrix},\qquad
 z_d=2^{-1/2}\begin{bmatrix}e_1\\-Q_de_1\end{bmatrix},\qquad y_d=z_d.
 \label{eq:critical-class-witness}
\end{equation}
For every $1\le r<d/2$, this one design-response pair has $L^*=1$, ordinary
leverage $1/2$, $\rho_{A_d}<5/8$, and
\begin{equation}
 \frac{3r}{32(d-1)}\le\Delta(A_d,y_d;r)<\frac{r}{d-r}.
 \label{eq:critical-class-squeeze}
\end{equation}
\end{proposition}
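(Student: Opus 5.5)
The plan is to verify the structural claims directly from~\eqref{eq:critical-class-witness}. After that, the lower side of~\eqref{eq:critical-class-squeeze} follows from the phase theorem, and the upper side from the exact row-general-position covariance transform evaluated in the direction $e_1$.

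\emph{Structure.} Since $Q_d$ is orthogonal, $A_d^\top A_d=\tfrac12(I_d+Q_d^\top Q_d)=I_d$ and every row of $A_d$ has squared norm $1/2$. Also $A_d^\top z_d=\tfrac12(e_1-Q_d^\top Q_de_1)=0$ and $\lVert z_d\rVert_2=1$. Because $y_d=z_d$ is orthogonal to $\operatorname{col}(A_d)$, we get $w^*=0$, $e=z_d$ and $L^*=1$.

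For the coherence:
\begin{itemize}[nosep]
\item two top rows $e_i/\sqrt2$ are mutually orthogonal;
\item two bottom rows are orthogonal, since $Q_d$ has orthonormal rows;
\item a top row $i$ and a bottom row $k$ have inner product $(Q_d)_{ki}/2$.
\end{itemize}
Hence $\rho_{A_d}=\max_{k,i}\lvert(Q_d)_{ki}\rvert<1/\sqrt d+1/8\le 5/8$, using $d\ge4$ and Lemma~\ref{lem:critical-class-orthogonal-block}.

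\emph{Lower side.} Theorem~\ref{thm:critical-geometry} gives $\nu_{A_d}\ge(1-\rho_{A_d})/4>3/32$. Every row leverage is $1/2<1$, so Theorem~\ref{thm:robust-pre-response-phase} applies. Since $\beta_s\gamma_s/\alpha_s=r/(d-1)$, the argument of Proposition~\ref{prop:critical-class-rate-calibration} gives $\Delta\ge 3r/(32(d-1))$. The bound $\rho_{A_d}<5/8$ is all that proposition actually uses.

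\emph{Upper side.} Every $d$-row submatrix of $[I_d;Q_d]$ has determinant equal, up to sign, to a square minor of $Q_d$. All such minors are nonzero by Lemma~\ref{lem:critical-class-orthogonal-block}, so $A_d$ is in row general position. The exact transform of Theorem~\ref{thm:residual-mechanism} therefore holds:
\[
\overline M_s/L^*=I_d-\beta\,\E_B[K_S^{-1}],\qquad \beta=\frac{r}{d},\quad s=d+r.
\]
It suffices to prove $\E_B[e_1^\top K_S^{-1}e_1]<2$. Then
\[
\lambda_{\max}(\overline M_s)\ge e_1^\top\overline M_se_1>1-\frac{2r}{d}=\frac{d-2r}{d},
\]
and dividing by $\alpha_s=(d-r)/d$ gives $\Delta<1-(d-2r)/(d-r)=r/(d-r)$. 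The hypothesis $r<d/2$ makes this bound nontrivial.

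To bound the inverse moment, note that the first top row is saturated: $h_1=\tfrac12+z_1^2=1$. The exclusion marginal~\eqref{eq:residual-augmented-deletion-probability} then shows that row $1$ lies in every augmented-supported $S$. Consequently $K_S=\tfrac12e_1e_1^\top+C_S$, where $C_S\succeq0$ collects the remaining selected rows. Operator monotonicity of inversion, together with a Schur-complement computation, gives $e_1^\top K_S^{-1}e_1\le 2$.

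\emph{Main obstacle: strictness.} Equality in the bound above requires $e_1$ to be orthogonal to the range of $C_S$ after the Schur reduction, roughly $C_Se_1=0$. Any supported $S$ contains at least $r\ge1$ bottom rows, because it has $d+r$ rows and only $d$ are top rows. Each bottom row $k$ has first coordinate $(Q_d)_{k1}/\sqrt2\neq0$, since the $1\times1$ minors of $Q_d$ are nonzero. I therefore expect strict inequality on every supported set, hence in expectation.

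The delicate point is the precise equality condition. For $K=\tfrac12 e_1e_1^\top+C$, the identity is $e_1^\top K^{-1}e_1=2-(\text{nonnegative term})$, and that term vanishes only when the $e_1$-column of $C$, taken off the $e_1^\perp$ block, is zero. I would verify this by block inversion in the basis $\{e_1,e_1^\perp\}$, using that a bottom row with nonzero first coordinate produces a nonzero contribution. If that block argument fails, the fallback is to show strictness on a single explicit augmented-supported subset of positive $P_B$-probability.
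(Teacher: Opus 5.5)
Your outline follows the paper's proof step for step: the orthogonality and coherence checks, the lower bound from Theorem~\ref{thm:critical-geometry} and Theorem~\ref{thm:robust-pre-response-phase} with $\beta_s\gamma_s/\alpha_s=r/(d-1)$, row general position from total nonsingularity, the exact transform with $\beta=r/d$, and $h_1=1$ forcing row $1$ into every augmented-supported set. The genuine gap is the strictness $e_1^\top K_S^{-1}e_1<2$, because the equality criterion you propose is wrong. Put $C_S=\sum_{i\in S\setminus\{1\}}a_ia_i^\top$. Then
\[
  \frac{1}{e_1^\top K_S^{-1}e_1}=\min_{x_1=1}x^\top K_Sx=\frac12+\min_{x_1=1}x^\top C_Sx .
\]
This identity gives the bound $\le2$ directly; operator monotonicity does not apply as stated, since $\tfrac12e_1e_1^\top$ is singular. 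It also shows that equality holds exactly when some $x\in\ker C_S$ has $x_1\neq0$, i.e.\ when $e_1\notin\operatorname{range}(C_S)$, not when $C_Se_1=0$. A selected bottom row with nonzero first coordinate only gives $C_Se_1\neq0$, and that does not rule out equality. For example, take $d=2$ and $C=uu^\top$ with $u=(1,1)^\top/\sqrt2$. Then $Ce_1\neq0$, yet $x=(1,-1)^\top$ lies in $\ker C$, so $e_1^\top(\tfrac12e_1e_1^\top+C)^{-1}e_1=2$. The block-inversion check you plan therefore would not close with the fact you intend to use.

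What you need is $C_S\succ0$. The paper gets this from augmented support: removing the forced row leaves a positive-definite Gram block, and its Schur complement $\delta>0$ gives $e_1^\top K_S^{-1}e_1=2/(1+\delta)<2$. With what you have already proved, the argument is even shorter. Since $|S\setminus\{1\}|=d+r-1\ge d$ and $A_d$ is in row general position, the rows indexed by $S\setminus\{1\}$ span $\R^d$. Hence $\min_{x_1=1}x^\top C_Sx\ge\lambda_{\min}(C_S)>0$ on every augmented-supported $S$, using $\lVert x\rVert_2\ge1$. Your fallback of proving strictness on a single positive-probability set would also work, but you have not carried it out. With this repair, the rest of your argument goes through as written.
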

\begin{proof}
Orthogonality gives $A_d^\top A_d=I_d$, $A_d^\top z_d=0$, and $\|z_d\|=1$.
Total nonsingularity gives ordinary row general position, while
$|(Q_d)_{ji}|<d^{-1/2}+1/8\le5/8$ gives the stated class membership. Ordinary
Cauchy--Binet gives $Z_A=\binom dr$. For $B=[A_d\ z_d]$, the augmented
normalizer is $Z_B=\binom{d-1}{r-1}$, so $Z_B/Z_A=r/d$. The first augmented
leverage is $h_1=1$, which forces row $1$ into augmented support; this is not
an ordinary coloop, since every ordinary leverage is $1/2$. After removing
that row and the forced column, every augmented-supported set has a positive
definite Gram block $D$. Its Schur complement $\delta>0$ gives
\[
 e_1^\top K_S^{-1}e_1=\frac{2}{1+\delta}<2.
\]
The exact residual-augmentation transform therefore gives
$e_1^\top\overline M_{d+r}e_1>1-2r/d$, yielding the upper inequality after
normalization. The lower inequality is
Proposition~\ref{prop:critical-class-rate-calibration}.
\end{proof}

Thus the lower side is response-uniform over the stated separated class, while
the upper side is one fixed-response witness. Along $d=4^n$ and $r=o(d)$,
the normalized witness ratio approaches one although $\nu_{A_d}\ge3/32$;
this is a class-restricted positive-phase rate calibration, not a new
universal envelope or an equality statement.

\section{Curated finite diagnostics for the residual mechanism}
\subsection{Controlled residual localization}
\label{app:t2-residual-localization}

This diagnostic isolates the realized-response mechanism of
Theorem~\ref{thm:residual-mechanism}. For the Walsh--Hadamard construction
$A_d=2^{-1/2}[I_d;H_d]$, the whitened design, ordinary size-$s$ volume law,
selected unweighted OLS, and budget are fixed within a trajectory; only the
prespecified compatible residual changes over $\lambda\in\{0,1/4,1/2,3/4,1\}$.
Writing $b_s$ for the response-aware ceiling normalized by the universal
ceiling and $q_s$ for the corresponding normalized covariance quantity,
Theorem~\ref{thm:residual-mechanism} gives $q_s\le b_s$. Hence the normalized
covariance shortfall $\Delta=1-q_s$ is at least the ceiling contraction
$\tau_{\rm res}=1-b_s$.

\begin{figure}[H]
\centering
\includegraphics[width=\linewidth,trim=0pt 8pt 0pt 0pt,clip]{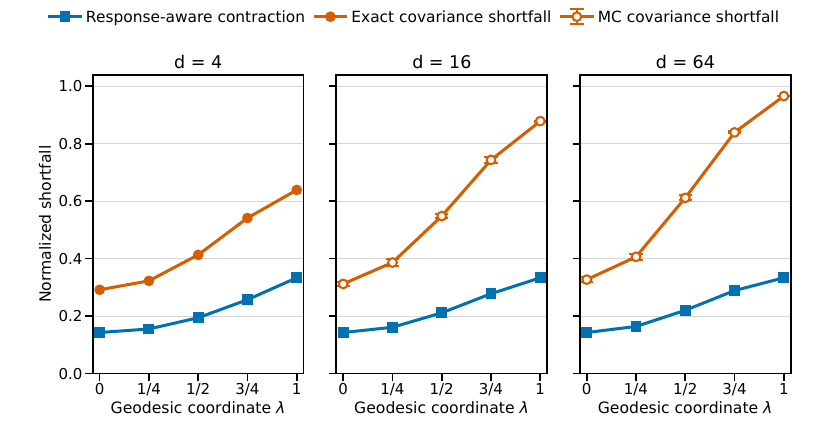}
\caption{Controlled residual localization for Theorem~\ref{thm:residual-mechanism}
at $r/d=1/2$. Within each panel, the whitened design, ordinary size-$s$ volume
law, selected unweighted OLS, and budget are fixed; only the prespecified
compatible residual varies. Squares show the normalized contraction of the
boundary-valid response-aware ceiling; circles show the normalized exact
($d=4$) or seeded Monte Carlo plug-in ($d\in\{16,64\}$) covariance quantity.
The displayed ordering is a finite mechanism illustration, not phase evidence,
a generic-effect claim, or a tightness/performance result. Batch-jackknife
bars diagnose Monte Carlo variation only.}
\label{fig:t2-residual-localization}
\end{figure}

\begin{table}[H]
\centering\scriptsize
\caption{Trajectory-level controlled residual-localization summary. Ranges
cover the five prespecified grid points; exact rows have no Monte Carlo bar.}
\label{tab:t2-residual-trajectories}
\begin{tabular}{@{}rrlrlrrl@{}}
\toprule
$d$ & $r/d$ & $\tau_{\rm res}(0)\to\tau_{\rm res}(1)$ & increases & $\Delta$ range & min. gap & max. MC half-width & evaluation\\
\midrule
4&$1/4$&$0.07692\to0.20000$&4/4&$0.15625$--$0.43750$&0.07933&---&exact\\
4&$2/4$&$0.14286\to0.33333$&4/4&$0.29167$--$0.63889$&0.14881&---&exact\\
4&$3/4$&$0.20000\to0.42857$&4/4&$0.37500$--$0.75000$&0.17500&---&exact\\
16&$1/16$&$0.02041\to0.05882$&4/4&$0.05131$--$0.56602$&0.03090&0.01837&seeded MC\\
16&$4/16$&$0.07692\to0.20000$&4/4&$0.18130$--$0.78664$&0.10438&0.01191&seeded MC\\
16&$8/16$&$0.14286\to0.33333$&4/4&$0.31199$--$0.87854$&0.16914&0.01212&seeded MC\\
16&$12/16$&$0.20000\to0.42857$&4/4&$0.41135$--$0.91706$&0.21135&0.03223&seeded MC\\
64&$1/64$&$0.00518\to0.01538$&4/4&$0.01206$--$0.47989$&0.00688&0.02784&seeded MC\\
64&$16/64$&$0.07692\to0.20000$&4/4&$0.19495$--$0.93381$&0.11802&0.00707&seeded MC\\
64&$32/64$&$0.14286\to0.33333$&4/4&$0.32698$--$0.96565$&0.18413&0.01053&seeded MC\\
64&$48/64$&$0.20000\to0.42857$&4/4&$0.43120$--$0.97697$&0.23120&0.02469&seeded MC\\
\bottomrule
\end{tabular}
\end{table}

The $d=4$ rows are exact determinant enumerations. Larger rows use $16{,}384$
seeded draws per cell; delete-one batch bars diagnose Monte Carlo variation,
not confidence intervals. The table counts only adjacent increases at the five
listed values and makes no claim between them.

\subsubsection{Companion generic-direction screen}
\label{app:t2-generic-direction-screen}

The selected path above is accompanied by a frozen generic-direction screen.
At each of the three reported subset sizes, the entry is the range of
$\tau_s$ across four prespecified generic residual directions. A budget
qualifies when this range is at least $0.05$; no reported budget qualified.
Four targeted stress directions are excluded from this count.
\begin{table}[H]
\centering\footnotesize
\caption{Frozen generic-direction screen. A qualifying budget requires a
range of at least $0.05$ across the four generic residual directions.
No reported dimension--budget block qualified; the targeted
residual path above is therefore not a typicality or prevalence claim. This
negative finite diagnostic neither falsifies Theorem~\ref{thm:residual-mechanism}
or Theorem~\ref{thm:robust-pre-response-phase} nor supplies an inferential test.}
\label{tab:t2-generic-direction-screen}
\begin{tabular}{@{}rrr@{}}
\toprule
dimension $d$ & generic-direction $\tau_s$ ranges at three budgets & qualifying budgets\\
\midrule
8 & [0.01178, 0.01975, 0.02519] & 0/3\\
16 & [0.00558, 0.00882, 0.01071] & 0/3\\
32 & [0.00470, 0.00731, 0.00875] & 0/3\\
\bottomrule
\end{tabular}
\end{table}

\subsection{Public fixed-pool calibration}
\label{app:public-fixed-pool-calibration}

For three complete public fixed pools, the boundary-valid response-aware
ceiling contracts the universal ceiling but the retained plug-in occupies only
$11.33\%$--$15.34\%$ of that ceiling. This is a descriptive same-pool
hierarchy. It does not estimate the response-uniform target or establish
tightness, prediction performance, or population behavior.

\begin{figure}[H]
\centering
\includegraphics[width=\linewidth,trim=0pt 7pt 0pt 0pt,clip]{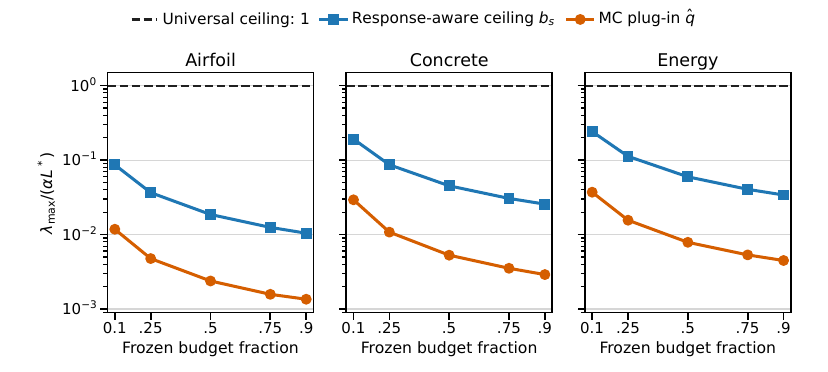}
\caption{Response-aware diagnostic for all 15 frozen cells. Each panel shows
five budgets on a common normalized log scale: universal ceiling $1$,
deterministic $b_s$, and Monte Carlo plug-in $\widehat q_s$. This is a
descriptive same-pool hierarchy, not exact covariance or a predictive or
population comparison.}
\label{fig:public-fixed-pool-calibration}
\end{figure}

\begin{table}[H]
\centering\small
\caption{Public fixed-pool response-aware summary. Here $\tau_s=1-b_s$ is
the deterministic universal-to-response-aware ceiling contraction, while
$\widehat q_s/b_s$ is descriptive plug-in occupancy. Across all 15 cells,
$\widehat q_s/b_s=0.1133$--$0.1534$, equivalently
$b_s/\widehat q_s=6.52$--$8.82$.}
\label{tab:public-fixed-pool-summary}
\begin{tabular}{@{}lrrr@{}}
\toprule
pool, $(m,d)$; $s/m$ range & median $\tau_s$ & range of $b_s$ & range of $\widehat q_s/b_s$\\
\midrule
Airfoil, $(1503,6)$; 0.104--0.900 & 0.98138 & 0.01044--0.08654 & 0.1263--0.1363\\
Concrete, $(1030,9)$; 0.108--0.901 & 0.95482 & 0.02564--0.19163 & 0.1133--0.1530\\
Energy, $(768,14)$; 0.116--0.902 & 0.94033 & 0.03403--0.24183 & 0.1316--0.1534\\
\bottomrule
\end{tabular}
\end{table}

Each cell uses $32{,}768$ subset draws and $4{,}096$ whole-batch resamples;
the retained endpoints are computational diagnostics rather than inferential
intervals. Public row general position was not exhaustively certified, so the
display uses only the boundary-valid one-sided ceiling.

\clearpage
\section*{Scope summary}
\phantomsection
\label{app:scope-ledger}
\footnotesize
\begin{center}
\begin{tabular}{@{}L{0.18\linewidth}L{0.47\linewidth}L{0.29\linewidth}@{}}
\toprule
Result & Domain and quantifier & Scope condition\\
\midrule
Universal envelope & all full-rank $X$, all fixed $y$; $m>d$, $d\le s\le m$; stated endpoint branches & zero-volume sets get no fit; coefficient attainment is a class statement; for $d\ge2$, the positive-loss row-general-position interior has strict slack\\
Residual augmentation & exact identity: $\lossstar>0$, $d<s<m$, row-general-position $X$; boundary inequality: arbitrary full-rank $X$ & $P_B$ is analysis-only and may have smaller support; exact covariance identity is interior-only\\
Robust phase boundary & no-coloop whitened $A$; $m\ge d+2$, $d<s<m$, $\lossstar>0$; maximum over all unit compatible residuals & exact zero/positive phase classification; the displayed positive-slack magnitude is one-sided\\
Global complement certificate & real Parseval no-coloop $A$; pairwise complement geometry and the same Theorem~\ref{thm:robust-pre-response-phase} strict-interior domain for its covariance consequence & exact sign test and lower certificate only; no positive-margin formula, approximation factor, or floating-point phase guarantee\\
Critical geometry & $m=2d$, equal leverage, $d\ge2$; one fixed-complement residual sphere & exact zero locus and two-sided coherence bounds; the sharp covariance witness is existential in the residual\\
Response-uniform certificate & feature-only floor for $m>d$; covariance consequence for $\lossstar>0$, $d<s<m$, arbitrary full-rank $X$ & $c_X$ is a sufficient feature-only lower certificate; it is silent for $d<m\le2d$, and $m>2d$ does not ensure positivity\\
Task-conditioned query risk & fixed $(X,y,Z,y_Q)$; no-coloop positive-loss strict-interior phase domain; query fixed before the subset draw & exact $R^Q(s)=F^Q+V_s^Q$ identity and phase-aware bound $V_s^Q\le(\alpha_s-\beta_s\gamma_s t(A))L^*\kappa^Q$; no population or representation-only claim\\
\bottomrule
\end{tabular}
\end{center}
\normalsize

\section{Detailed claim and assumption summary}
\label{app:claim-ledger}
\begingroup
\setlength{\tabcolsep}{2.4pt}
\renewcommand{\arraystretch}{1.08}
\footnotesize
\begin{longtable}{@{}L{.10\linewidth}L{.14\linewidth}L{.125\linewidth}L{.195\linewidth}L{.22\linewidth}L{.14\linewidth}@{}}
\caption{Detailed claim, quantifier, and assumption ledger for the theorem
and consequence chain. ``Exact phase'' in
Theorem~\ref{thm:robust-pre-response-phase} refers to the
strict-versus-spectral-contact equivalence; the slack magnitude has only a
one-sided lower bound. Generic certificate passes are sufficient, while their
failure or abstention is inconclusive;
Proposition~\ref{prop:global-complement-certificate} separately gives an
exact phase-sign test on its stated domain, with $t_\perp=0$ equivalent to the
zero phase there.}
\label{tab:claim-assumption-matrix}\\
\toprule
Result & Target / object & Response quantifier & Conclusion & Domain and assumptions & Status \\
\midrule
\endfirsthead
\multicolumn{6}{@{}l@{}}{\footnotesize\textit{Table~\ref{tab:claim-assumption-matrix} continued.}}\\
\toprule
Result & Target / object & Response quantifier & Conclusion & Domain and assumptions & Status \\
\midrule
\endhead
\bottomrule
\endfoot
\bottomrule
\endlastfoot
Theorem~\ref{thm:main-envelope}
& Centered, full-Gram-whitened coefficient covariance under ordinary indexed
volume sampling and selected unweighted OLS
& Every fixed response
& $\overline M_s\preceq\alpha L^*I_d$; the coefficient $\alpha$ is globally
sharp over the full-rank class
& Full-column-rank $X$; $m>d$ and $d\le s\le m$; positive-volume subsets only;
endpoint conventions apply
& One-sided envelope; global sharpness is not fixed-pool equality \\
\addlinespace
Theorem~\ref{thm:residual-mechanism}
& Residual-augmented response-aware mechanism and covariance resolvent;
augmentation is analysis only
& One fixed positive-loss response
& Exact change of measure and inverse-moment representation in the interior;
response-aware resolvent on the boundary
& $L^*>0$, $d<s<m$; row general position for identities, arbitrary full column
rank for the boundary inequality; sampler and OLS remain unchanged
& Exact interior representation; one-sided boundary conclusion \\
\addlinespace
Theorem~\ref{thm:robust-pre-response-phase}
& $\nu_A$ and the worst spectral covariance
& Maximum over unit residuals $z\in\ker(A^\top)$
& $\nu_A>0$ iff every compatible residual is strict; $\nu_A=0$ iff some
compatible residual makes spectral contact; one zero witness works at every strict-interior
budget
& Ordinary indexed fixed-size volume sampling; selected unweighted OLS;
$L^*>0$, $m\ge d+2$, strict interior $d<s<m$; $A^\top A=I_d$; no coloops
($\ell_i<1$)
& \textbf{Exact phase}; \textbf{one-sided magnitude}
$\alpha_s-\mathsf T_{\rm spec}^{\rm RU}\ge\beta_s\gamma_s\nu_A$ \\
\addlinespace
Prop.~\ref{prop:global-complement-certificate}
& Pairwise complement geometry and $t_\perp(A)$
& Feature-only; no response observed
& A pair-minor vanishes iff $\nu_A=0$; $0\le t_\perp(A)\le\nu_A$
& Real Parseval no-coloop $A$; the Theorem~\ref{thm:robust-pre-response-phase}
strict-interior domain for the covariance consequence
& Exact zero/positive phase sign; one-sided magnitude certificate; no
finite-precision equality guarantee \\
\addlinespace
Theorem~\ref{thm:critical-geometry}
& Critical equal-leverage geometric interpretation of the residual margin
$\nu_A$
& Minimum over unit residuals in $\ker(A^\top)$
& $\nu_A=0$ iff a repeated projective pair has an orthogonal remainder;
coherence sandwiches $\nu_A$
& $m=2d$, $d\ge2$; $A^\top A=I_d$; $\ell_i=1/2$ for every row
& Exact zero locus; lower and upper magnitude bounds \\
\addlinespace
Corollary~\ref{cor:pre-response-certificate}
& Sound feature-only lower-certificate consequence $t(A)\le\nu_A$
& Before responses; a pass is uniform over every compatible residual
& A pass with $t(A)>0$ certifies strictness and a slack lower bound; failure or
abstention is inconclusive
& Route-specific guards plus the positive-loss, no-coloop, strict-interior
domain of Theorem~\ref{thm:robust-pre-response-phase}
& One-sided sufficient certificate, not an equivalence \\
\addlinespace
Corollary~\ref{cor:task-conditioned-query-risk}
& Fixed-query squared loss of a selected-OLS readout
& One fixed training response and one fixed query pool, both held fixed before
the subset draw
& $R^Q=F^Q+V_s^Q$ and
$V_s^Q\le C_s(A)L^*\kappa^Q$, where
$C_s(A):=\alpha_s-\beta_s\gamma_s t(A)$ and
$\kappa^Q:=\tr(ZG^{-1}Z^\top)$
& Exact ordinary indexed volume sampling and selected unweighted OLS;
$L^*>0$, no coloops, $d<s<m$; verified $0\le t(A)\le\nu_A$; query
features/targets fixed before the subset draw
& Direct phase-aware corollary; $t(A)=0$ recovers the universal fallback; no
population risk or representation-only ranking claim \\
\end{longtable}
\endgroup

\section{Nearest theorem-level comparison}
\label{app:prior-work-map}
\begin{table}[H]
\centering
\setlength{\abovecaptionskip}{0pt}
\caption{Nearest theorem-level comparison. Relations are to the present
fixed-pool covariance envelope and strict/spectral-contact phase.
Primary-source theorem locators are given in Section~\ref{sec:related}.}
\label{tab:prior-work-comparison}
\begingroup
\footnotesize
\setlength{\tabcolsep}{3pt}
\renewcommand{\arraystretch}{1.08}
\begin{tabular}{@{}L{.13\linewidth}L{.165\linewidth}L{.135\linewidth}L{.13\linewidth}L{.19\linewidth}L{.18\linewidth}@{}}
\toprule
Work & Law / estimator & Budget & Response / randomness & Target & Equality / relation \\
\midrule
D--W 2017/18
\citep{derezinski2017unbiased,derezinski2018reverse}
& Ordinary fixed-size volume sampling; selected unweighted OLS
& All legal $s$: unbiasedness and inverse moments; $s=d$: fixed-response loss and operator identity
& Fixed $X,y$; subset draw for fixed-response results
& Inverse-Gram moments; rank-size loss and prediction-operator second moment
& Rank-size equality under general position; no all-budget every/some phase.
\textbf{Partial overlap} \\
\addlinespace
D--W--H 2018
\citep{derezinski2018leveraged}
& Ordinary subset law for the lower construction; rescaled sequence law and weighted fit for the positive method
& All-budget factor in the ordinary lower construction
& Fixed response; sampling
& Scalar full-pool loss and tail guarantee
& Limiting lower obstruction; no fixed-design equality classification.
\textbf{Partial overlap} \\
\addlinespace
D--W 2018
\citep{derezinski2018regularized}
& Regularized volume law; ridge fit
& Regularized inverse moments and noise-averaged ridge risk
& Random-noise response
& Regularized inverse-Gram and ridge MSE/MSPE
& No ordinary-volume phase.
\textbf{Partial overlap}: changed law, estimator, and response model \\
\addlinespace
D--W--H 2019/22
\citep{derezinski2019bias,derezinski2022random}
& Volume-rescaled random-design samples; leverage/i.i.d. compositions
& Distributional sample augmentation
& Random design; stated response models for covariance results
& Expected loss and unbiasedness; sampling differs even for a finite-pool application
& No ordinary-subset fixed-pool phase.
\textbf{Partial overlap}: changed design and randomness \\
\addlinespace
D--C--M--W 2019
\citep{derezinski2019minimax}
& Rank-size volume component plus importance/rescaled draws; altered fit
& Sample-complexity budget
& Arbitrary random responses; fixed $y$ allowed
& Scalar MSE/MSPE and PSD variance comparators
& Minimax rates; no same-primitive attainment phase.
\textbf{Partial overlap}: changed design \\
\addlinespace
Epperly 2026 v2
\citep{epperly2026adaptive}
& Rank-size $k$-DPP/volume law; selected least squares
& $s=d$ for the cited regression result
& Arbitrary fixed response; selection
& Scalar expected full-pool loss
& Rank-size identity/optimality; no strict-interior phase.
\textbf{Partial overlap at $s=d$} \\
\addlinespace
This paper
& Ordinary fixed-size volume sampling; selected unweighted OLS
& Envelope: all legal $s$; phase: $d<s<m$
& Fixed $X,y$; phase ranges over unit compatible residuals
& Centered full-Gram-whitened coefficient covariance; Loewner envelope and spectral phase
& $\nu_A>0$: \textbf{every} residual strict.
$\nu_A=0$: \textbf{one common} residual makes directional contact at all interior budgets \\
\bottomrule
\end{tabular}
\endgroup
\end{table}

\section{Supplementary fixed-pool diagnostics}
\label{app:evidence-ledger}
This appendix contains metric definitions and aggregate fixed-pool summaries;
it does not promise a public per-draw record.  The displayed Monte Carlo
summaries are point estimates only, and no inferential intervals are reported.
The main text uses them only as a descriptive pointer after the phase-aware
fixed-query consequence.
For each fixed pool, let
$w^*=\arg\min_w\lVert Xw-y\rVert_2^2$ and
$L^*=\lVert Xw^*-y\rVert_2^2>0$. For draw $r$, with selected-OLS fit
$\widehat w_r$, the reported normalized excess risk is exactly
\[
 e_r=\frac{\lVert X\widehat w_r-y\rVert_2^2-L^*}{dL^*}.
\]
Writing $\bar e_s$ for its mean over the 100 draws at subset size $s$, the
lower-budget difference is $D=\bar e_{338}-\bar e_{363}$. The covariance
columns use $X=QR$ and the empirical covariance about $w^*$
\[
 \widehat C_s=\frac{1}{100}\sum_{r=1}^{100}
 \bigl[R(\widehat w_r-w^*)\bigr]
 \bigl[R(\widehat w_r-w^*)\bigr]^\top,
 \qquad R^\top R=X^\top X.
\]
Thus the displayed $\operatorname{tr}(\widehat C_s)/d$ and
$\lambda_{\max}(\widehat C_s)$ are raw $w^*$-centered,
full-Gram-whitened covariance summaries: unlike $e_r$, they are not divided
by $L^*$, and they are not estimates of
$\mathsf T_{\mathrm{spec}}^{\mathrm{RU}}(A,s)$.
\begin{table}[H]
\centering
\caption{Compact descriptive fixed-pool certificate/action evidence. The
counts are feature-only actions or fallbacks. Positive $D$ is the observed
lower-budget difference $\bar e_{338}-\bar e_{363}$. The matched-budget
directions are the observed per-cell normalized same-pool excess risk
$\bar e_r$ at $s=338$ on the five fresh cells selected by the feature-only
action; they do not establish selector dominance.  Full metric definitions
and aggregate summaries are in Appendix~\ref{app:evidence-ledger}.  Monte
Carlo entries are point estimates only; no inferential intervals are reported.}
\label{tab:evidence-summary}
\small
\begin{tabular}{@{}L{.31\linewidth}L{.61\linewidth}@{}}
\toprule
Evidence role & Exact descriptive summary \\
\midrule
E1 / E2 / E3 action--fallback counts & $20/24$--$4/24$; $4/24$--$20/24$;
$5/24$--$19/24$, respectively. \\
Initial four action cells & $D>0$ in all four; range
$[2.36987,3.52561]\times10^{-4}$, mean $+2.90017\times10^{-4}$. \\
Fresh five action cells & $D>0$ in all five; range
$[2.9674,3.3709]\times10^{-4}$, mean $+3.1855\times10^{-4}$. \\
Selected matched-budget context & Observed per-cell normalized same-pool
excess risk $\bar e_r$ at $s=338$ on the five action-selected fresh cells:
volume $<$ uniform in $5/5$ cells; leverage $<$ volume in $5/5$ cells.
\\
\bottomrule
\end{tabular}
\end{table}

\begin{table}[H]
\centering
\caption{Evidence ladder for a feature-only cardinality decision and its
descriptive fixed-pool consequences.  Block A records decisions made from
frozen features before labels or responses were accessed.  Block B reports
the locked lower-budget comparison with the unchanged ordinary indexed
fixed-size volume-sampling and selected unweighted OLS primitive: each arm
uses 100 domain-separated draws, with $s=338$ versus $s=363$ and
$D=\bar e_{338}-\bar e_{363}$.  Block C reports the same $s=338$ budget on
five action-selected fresh-row cells for volume, uniform, and leverage
sampling.  The fresh-row cells use rows disjoint from the initial
Fashion-MNIST block but share its task and frozen encoder inventory.  All
entries are descriptive fixed-pool estimates; the table measures the
decision-layer cost and context rather than a population-performance claim
or the worst-residual phase quantity.  Monte Carlo entries are point
estimates only; no inferential intervals are reported.}
\label{tab:evidence-ladder}
\begingroup
\setlength{\tabcolsep}{3pt}
\renewcommand{\arraystretch}{1.03}
\small

\noindent\textbf{A. Pre-response action and fallback}

\smallskip
\begin{tabular}{@{}p{.34\linewidth}p{.21\linewidth}p{.21\linewidth}p{.18\linewidth}@{}}
\toprule
Public study block & \shortstack{Reduced-budget\\action} & \shortstack{Conservative\\fallback} & \shortstack{Action /\\fallback size} \\
\midrule
STL--10 frozen-feature profile & $20/24$ cells & $4/24$ cells & $s=338$ / $s=363$ \\
Fashion--MNIST initial row block & $4/24$ cells & $20/24$ cells & $s=338$ / $s=363$ \\
Fashion--MNIST fresh-row block & $5/24$ cells & $19/24$ cells & $s=338$ / $s=363$ \\
\bottomrule
\end{tabular}

\medskip
\noindent\textbf{B. Locked lower-budget cost--risk ($s=338$ versus $s=363$)}

\smallskip
\scriptsize
\begin{tabular}{@{}p{.20\linewidth}@{}
>{\raggedleft\arraybackslash}p{.10\linewidth}@{}
>{\raggedleft\arraybackslash}p{.10\linewidth}@{}
>{\raggedleft\arraybackslash}p{.10\linewidth}@{}
>{\centering\arraybackslash}p{.16\linewidth}@{}
>{\centering\arraybackslash}p{.16\linewidth}@{}
>{\centering\arraybackslash}p{.16\linewidth}@{}}
\toprule
Cell (local public label) & $\bar e_{338}$ & $\bar e_{363}$ & $10^{4}D$ &
\shortstack{trace/$d$\\338 / 363} &
\shortstack{$\lambda_{\max}$\\338 / 363} &
\shortstack{accuracy\\338 / 363} \\
\midrule
Initial rows, cell 1 & 0.001566915 & 0.001294128 & $+2.72787$ & 0.166320 / 0.137365 & 0.743613 / 0.762512 & 0.951484 / 0.952188 \\
Initial rows, cell 2 & 0.001497021 & 0.001260033 & $+2.36987$ & 0.116934 / 0.098423 & 0.641793 / 0.594579 & 0.969941 / 0.970801 \\
Initial rows, cell 3 & 0.001469529 & 0.001171797 & $+2.97733$ & 0.118658 / 0.094617 & 0.578329 / 0.529820 & 0.968047 / 0.968066 \\
Initial rows, cell 4 & 0.001571635 & 0.001219075 & $+3.52561$ & 0.134503 / 0.104330 & 0.847321 / 0.541639 & 0.956816 / 0.957539 \\
\midrule
Fresh rows, cell 1 & 0.00150871 & 0.00118513 & $+3.2358$ & 0.151272 / 0.118828 & 0.689214 / 0.763264 & 0.945410 / 0.946777 \\
Fresh rows, cell 2 & 0.00157898 & 0.00124190 & $+3.3709$ & 0.101830 / 0.080091 & 0.570481 / 0.473626 & 0.970684 / 0.971172 \\
Fresh rows, cell 3 & 0.00152280 & 0.00120305 & $+3.1975$ & 0.111436 / 0.088037 & 0.500476 / 0.419226 & 0.981621 / 0.982500 \\
Fresh rows, cell 4 & 0.00161854 & 0.00132181 & $+2.9674$ & 0.124914 / 0.102013 & 0.765129 / 0.632668 & 0.969473 / 0.969648 \\
Fresh rows, cell 5 & 0.00148433 & 0.00116875 & $+3.1557$ & 0.101266 / 0.079736 & 0.699423 / 0.554191 & 0.972285 / 0.972813 \\
\midrule
\multicolumn{7}{@{}l}{Initial-row block: $D$ range $[2.36987,3.52561]\!\times\!10^{-4}$; mean $+2.90017\!\times\!10^{-4}$.} \\
\multicolumn{7}{@{}l}{Fresh-row block: $D$ range $[2.9674,3.3709]\!\times\!10^{-4}$; mean $+3.1855\!\times\!10^{-4}$.} \\
\bottomrule
\end{tabular}

\medskip
\noindent\textbf{C. Matched-budget context ($s=338$; five action-selected fresh-row cells)}

\smallskip
\begin{tabular}{@{}p{.27\linewidth}rrrr@{}}
\toprule
Selector & pooled $\bar e_r$ & trace/$d$ & $\lambda_{\max}$ & accuracy \\
\midrule
Volume & 0.00157174 & 0.120563 & 0.66019 & 0.96823 \\
Uniform without replacement & 0.00164556 & 0.126048 & 0.74392 & 0.96788 \\
Leverage without replacement & 0.00124391 & 0.095601 & 0.65346 & 0.96857 \\
\midrule
\multicolumn{5}{@{}l}{Per-cell direction in $\bar e_r$: volume $<$ uniform (5/5); leverage $<$ volume (5/5).} \\
\bottomrule
\end{tabular}

\smallskip
\parbox{0.98\linewidth}{\footnotesize
\emph{Notes.}  Block A is feature-only and precedes response/label access.
Blocks B and C use the same fixed-pool refitting target.  Their trace/$d$ and
top-eigenvalue columns are the raw $w^*$-centered, full-Gram-whitened
covariance summaries defined above; they are not normalized by $L^*$
and are not estimates of
$\mathsf T_{\mathrm{spec}}^{\mathrm{RU}}(A,s)$.  Accuracy is binary accuracy.
In Block C, ``five action-selected'' means the cells were chosen by the
feature-only decision before the matched-budget release; it is not a
random-cell or task-level replication.  The fresh-row study is a descriptive
row-block replication with the shared Fashion--MNIST task and encoder
inventory stated in the caption, and the selector directions do not establish
general dominance.}
\endgroup
\end{table}

\end{document}